\documentclass[twoside]{article}

\usepackage[accepted]{aistats2021}
\pdfminorversion=4

\setlength{\pdfpageheight}{11in}
\setlength{\pdfpagewidth}{8.5in}

\usepackage[utf8]{inputenc}
\usepackage{natbib}
\usepackage{algorithmic}
\usepackage{amsmath}
\usepackage{amssymb}
\usepackage{amsthm}
\usepackage[ruled]{algorithm2e}
\usepackage{appendix}
\usepackage{xcolor}
\usepackage{enumitem}
\usepackage{color}
\usepackage{graphicx}
\usepackage{subfigure}
\usepackage{booktabs} %

\usepackage{thmtools}
\usepackage{thm-restate}

\declaretheorem[name=Theorem,numberwithin=section]{theorem}

\declaretheorem[name=Lemma, numberwithin=section]{lemma}

\declaretheorem[name=Definition, numberwithin=section]{definition}
\declaretheorem[name=Remark, numberwithin=section]{remark}

\newtheorem*{theorem*}{Theorem}

\newtheoremstyle{bftheorem}%
  {3pt}%
  {3pt}%
  {\itshape}%
  {}%
  {\bfseries}%
  {.}%
  {.5em}%
  {\thmname{#1}\thmnumber{ #2}\thmnote{ (#3)}}%

\theoremstyle{bftheorem}
\newtheorem{bftheorem}[theorem]{Theorem}%

\newcommand{\R}{\mathbb{R}}
\newcommand{\N}{\mathbb{N}}
\newcommand{\Exp}[2][]{\mathbb{E}#1[{#2}#1]}
\newcommand{\Exptilde}[2][]{\tilde{\mathbb{E}}#1[{#2}#1]}
\newcommand{\PP}{\mathbb{P}}
\newcommand{\EE}{\mathbb{E}}
\newcommand{\VV}{\mathbb{V}}
\newcommand{\Var}[2][]{\mathbb{V}#1[{#2}#1]}
\newcommand{\Vartilde}[2][]{\tilde{\mathbb{V}}#1[{#2}#1]}
\newcommand\given[1][]{\:#1\vert\:}
\newcommand{\indep}{\perp \!\!\! \perp}
\newcommand{\norm}[2][]{#1\lVert{#2}#1\rVert}
\newcommand{\CZ}{\mathcal{Z}}
\newcommand{\vopt}[1]{v({#1},\lambda)}
\newcommand{\wstoc}[1][t]{w_{#1}(\lambda)}
\newcommand{\vstoc}[2][k]{ v_{{#1}}({#2},\lambda)}
\newcommand{\LtPhi}{L_{\tilde \Phi,\lambda}}
\newcommand{\mtwo}{m_{2,\lambda}}
\newcommand{\sz}[1]{\eta_{\lambda, {#1}}}
\newcommand{\sigone}{\sigma_{\lambda, 1}}
\newcommand{\sigtwo}{\sigma_{\lambda, 2}}

\newcommand{\grad}{\nabla}
\newcommand{\jac}{\partial}

\newcommand{\fo}{E} %

\newcommand{\Low}{\mu_{1,\lambda}}
\newcommand{\Lol}{\mu_{2,\lambda}}
\newcommand{\Bo}{L_{E,\lambda}}

\newcommand{\q}{q_\lambda}

\newcommand{\rhol}{\nu_{2, \lambda}}
\newcommand{\rhow}{\nu_{1, \lambda}}
\newcommand{\LPhi}{L_{\Phi, \lambda}}

\newcommand{\rf}{\rho_\lambda}
\newcommand{\hrf}{\sigma_{\lambda}}

\newcommand{\MSE}{\text{MSE}_{\hat \nabla f}}
\newcommand{\MSEW}[1]{\text{MSE}_{w_{#1}}}

\usepackage{hyperref}
\usepackage{cleveref}

\crefname{appsec}{appendix}{appendices}

\newtheorem{assumption}{Assumption}

\begin{document}

\twocolumn[

\aistatstitle{Convergence Properties of Stochastic Hypergradients}

\aistatsauthor{Riccardo Grazzi \And Massimiliano Pontil \And Saverio Salzo}
\aistatsaddress{ Istituto Italiano di Tecnologia \\
University College London \And  Istituto Italiano di Tecnologia \\
University College London
\And Istituto Italiano di Tecnologia }
]

\begin{abstract}
Bilevel optimization problems are receiving increasing attention in machine learning as they provide a natural framework for hyperparameter optimization and meta-learning. A key step to tackle these problems 
is the efficient computation of the gradient of the upper-level objective (hypergradient).~In this work, we study stochastic approximation schemes for the hypergradient, which are 
important when the lower-level problem is empirical risk minimization on 
a large dataset. 
The method that we propose is a stochastic variant of the approximate implicit differentiation approach in (Pedregosa, 2016).
We provide
bounds for the mean square error of the hypergradient approximation, under the assumption that the lower-level problem is accessible only through a stochastic mapping which is a contraction in expectation.
In particular, our main bound is agnostic to the choice of the two stochastic solvers employed by the procedure.
We provide numerical experiments to support our theoretical analysis and to show the advantage of using stochastic hypergradients in practice.
\end{abstract}

\section{Introduction}
In this paper we study the following bilevel problem
\begin{equation}
\begin{aligned}
\label{mainprob}
&\min_{\lambda \in \Lambda} f(\lambda) := \fo(w(\lambda), \lambda)\\
&\text{\ subject~to ~}w(\lambda) = \Phi(w(\lambda), \lambda),
\end{aligned}
\end{equation}
which at the lower-level incorporates a  
(parametric) fixed-point equation.
This problem is paramount 
in many applications,  especially in machine learning and statistics, including hyperparameter optimization~\citep{maclaurin2015gradient,franceschi2017forward,liu2018darts,lorraine2019optimizing,elsken2019neural}, meta-learning~\citep{andrychowicz2016learning,finn2017model,franceschi2018bilevel}, and graph and recurrent neural networks~\citep{almeida1987learning,pineda1987generalization,scarselli2008graph}. 

In dealing with problem \eqref{mainprob},
one critical issue is to devise efficient algorithms to compute the (hyper) gradient of the function $f$, so as to allow using gradient based approaches to find a solution. 
The computation of the hypergradient via approximate implicit differentiation (AID) \citep{pedregosa2016hyperparameter} requires one to solve two subproblems: $(i)$ the lower-level problem in~\eqref{mainprob} and 
$(ii)$ a linear system which arises from the implicit expression for $\nabla f(\lambda)$.
However, especially in large scale scenarios, solving those  subproblems exactly might either be impossible or too expensive, hence, iterative approximation methods are often used.
In \citep{grazzi2020iteration},
under the assumption that, for every $\lambda \in \Lambda$, the mapping $\Phi(\cdot, \lambda)$ in \eqref{mainprob} is a contraction, a comprehensive analysis of the iteration complexity of the  hypergradient computation  for several popular 
deterministic algorithms was provided. Here, instead, we address such iteration complexity for stochastic methods.
This study is of fundamental importance since in many practical scenarios $\Phi(w, \lambda)$ is expensive to compute, e.g., when it has a sum structure with a large number of terms. 
In this situation stochastic approaches become the method of choice.
For example, in large scale hyperparameter optimization and neural architecture search \citep{maclaurin2015gradient,lorraine2019optimizing,liu2018darts}, solving the lower-level problem requires minimizing a training objective over a large dataset, which is usually done approximately through SGD and its extensions.
Our contributions can be summarized as follows.
\begin{itemize}
    \item We  devise a stochastic estimator $\hat\nabla f (\lambda)$
of the true gradient, based on the AID technique, together with an explicit bound for the related mean square error.
The bound is agnostic with respect to the stochastic methods solving the related subproblems, so that can be applied to several algorithmic solutions; see Theorem~\ref{thm:finalbound}.
\item We study the convergence of a general stochastic fixed-point iteration method which extends and improves previous analysis of SGD for strongly convex functions and can be applied to solve both subproblems associated to the AID approach. These results, which are interesting in their own right, are given in Theorems~\ref{th:innerconvergence2} and
\ref{th:innerconvergence}.

\end{itemize}

Proofs of the results presented in the paper can be found in the supplementary material.

\paragraph{Related Work}
\cite{pedregosa2016hyperparameter} introduced an efficient class of deterministic methods to compute the hypergradient through  AID together with asymptotic convergence results. \cite{rajeswaran2019meta,grazzi2020iteration} extended this analysis providing iteration complexity bounds. AID methods require to iteratively evaluate $\Phi$ and its derivatives.
In this work, we extend these methods by replacing those exact evaluations with unbiased stochastic approximations and provide iteration complexity bounds in this scenario.\\
Another class of methods (ITD) computes the hypergradient by differentiating through the inner optimization scheme  \citep{maclaurin2015gradient,franceschi2017forward,franceschi2018bilevel}. Iteration complexity results for the deterministic case are given in \citep{grazzi2020iteration}, while we are not aware of any convergence results in the stochastic setting. Here, we focus entirely on AID methods, leaving the investigation of stochastic ITD methods for future work.\\
An interesting special case of the bilevel problem~\eqref{mainprob} is when
$f(\lambda) = \min_{w} E(w,\lambda)$.
This scenario occurs for example in regularized meta-learning, where the properties of a simple stochastic hypergradient estimator have been studied extensively \citep{denevi2019learning,denevi2019online, zhou2019efficient}. In this setting, \cite{ablin2020super} analyze, among others,
implicit differentiation techniques for approximating the gradient of $f$, including stochastic approaches.
However, the proposed estimator  
assumes to solve the related linear system exactly, which is often impractical.
In this work, we focus on the more general setting of bilevel problem~\eqref{mainprob}, devising algorithmic solutions that are fully stochastic, in the sense that 
also the subproblem involving the linear system is solved by a stochastic method.\\
Finally, stochastic algorithms for hypergradient computation in bilevel optimization problems have been studied in  \citep{couellan2016convergence,ghadimi2018approximation}.
There, the authors provide convergence rates for a whole bilevel optimization procedure
using stochastic oracles both from the upper-level and the lower-level objectives.
In particular, the method used by \cite{ghadimi2018approximation} to approximate the hypergradient can be seen as a special case of our method with two particular choices of the stochastic solvers.\footnote{Specifically 
they use SGD with decreasing step sizes for the lower-level problem (which is a minimization problem) and, for the linear system, 
a stochastic routine derived from the Neumann series approximation of the matrix inverse.
}

\paragraph{Notation}
We denote by $\norm{\cdot}$ either the Euclidean norm or the spectral norm (when applied to matrices). 
The transpose
and the inverse of a given matrix $A$, is denoted by $A^\top$ and $A^{-1}$ respectively.
For a real-valued function $g\colon \R^n\times\R^m\to \R$, we denote by $\nabla_1 g(x,y) \in \R^n$ and $\nabla_2 g(x,y) \in \R^m$, the partial derivatives w.r.t. the first and second variable respectively.  For a vector-valued function $h\colon \R^n\times \R^m \to \R^k$ we denote by
$\partial_1 h(x,y) \in \R^{k\times n}$
and $\partial_2 h(x,y) \in \R^{k\times m}$ the partial Jacobians w.r.t.~the first and second variables respectively.
For a random variable $X$ we denote by $\EE[X]$ and $\VV[X]$ its expectation and variance respectively.
Finally, given two random variables $X$ and $Y$,
the conditional variance of $X$ given $Y$ is $\Var{X \given Y} := \Exp{\norm{X -\Exp{X \given Y}}^2 \given Y}$.
In the following, for the reader's convenience, we provide a list of the main functions and constants used in the subsequent analysis.
\begin{table}[ht]
\centering
\caption{Table of Notation}\label{tb:notation}
\hspace{.00001truecm}
\small
\begin{tabular}{@{}cll@{}}
\toprule
\multicolumn{1}{l}{\textbf{Symbol(s)}} & \textbf{Description}                                                                  &  \\ \midrule
$\fo$                               & Upper-level objective                                                                 &  \\
$\Phi$                              & Fixed-point map                                                                       &  \\
$\hat \Phi$                         & Unbiased estimator of $\Phi$                                                          &  \\
$\hat\ell$                              & Estimator of the lower-level objective                                                                 &  \\
$\q$                                & Contraction constant of $\Phi(\cdot, \lambda)$                                        &  \\
$\Bo$                               & Lipschitz constant of $\fo(\cdot, \lambda)$                                           &  \\
$\rhow, \rhol$                      & Lipschitz const. of $\jac_1 \Phi (\cdot, \lambda), \jac_2 \Phi (\cdot, \lambda)$   &  \\
$\Low, \Lol$                        & Lipschitz const. of $\nabla_1 \fo (\cdot, \lambda), \nabla_2 \fo (\cdot, \lambda)$ &  \\
$\LtPhi$                            & Lipschitz const. of $\hat \Phi(\cdot, \lambda, \zeta)$                              &  \\
$\mtwo$                             & Bound on the variance of $\jac_2 \hat\Phi(w, \lambda, \zeta)$                         &  \\
$\rf(t),\hrf(k)$                    & \begin{tabular}[c]{@{}l@{}}Convergence rates for 
the two 
\\subproblems: $t, k$ are the number of
\\iterations of the  solvers. \end{tabular} &   \\
\bottomrule
\end{tabular}
\end{table}

\section{Stochastic Hypergradient Approximation}
\label{se:SID}

In this section we describe a general method for generating a stochastic approximation of the (hyper) gradient of $f$ in \eqref{mainprob}.
We assume that $\Phi$ is defined by an expectation of a given function $\hat{\Phi}$, that is, we consider bilevel problems
of type \eqref{mainprob} with
\begin{equation}
\label{eq:Phi_stoch}
\Phi(w,\lambda)
= \EE[\hat{\Phi}(w(\lambda), \lambda, \zeta)],
\end{equation}
where $\zeta$ is a random variable taking values in a suitable measurable space. A special case of \eqref{mainprob}-\eqref{eq:Phi_stoch}, which occurs often in machine learning, is 
\begin{equation}
\begin{aligned}
\label{mainprobopt}
&\min_{\lambda \in \Lambda} f(\lambda) := \fo(w(\lambda), \lambda)\\
&\text{\ subject~to ~}w(\lambda) = \mathrm{argmin}_{w} \EE[\hat{\ell}(w,\lambda,\zeta)],
\end{aligned}
\end{equation}
where $w\mapsto \EE[\hat{\ell}(w,\lambda,\zeta)]$ is strongly convex and Lipschitz smooth, for every $\lambda \in \Lambda$. Indeed, \eqref{mainprobopt} follows from \eqref{mainprob} and \eqref{eq:Phi_stoch} by choosing  $\hat{\Phi}(w,\lambda,\zeta) = w - \alpha_\lambda \nabla \hat{\ell}(w,\lambda,\zeta)$, for any $\alpha_\lambda>0$.

In the rest of the paper we will consider the following assumptions\footnote{Similar assumptions, except for \Cref{ass:aid}\ref{ass:lipE}, are also considered in \cite{grazzi2020iteration}.}.%
\begin{assumption}
\label{ass:aid} 
The set $\Lambda \subseteq \R^m$ is
closed and convex and
the mappings $\Phi\colon \R^d\times \R^m\to \R^d$
and $E\colon \R^d\times\R^m \to \R$
are differentiable. For every $\lambda \in \Lambda$, we assume
\begin{enumerate}[label={\rm (\roman*)}]
\item\label{ass:contraction} $\Phi(\cdot,\lambda)$ is a  contraction,  i.e., $\norm{\jac_1 \Phi(w, \lambda)} \leq \q$ for some $\q<1$ and for all $w \in \R^d$.
\item
$\jac_1\Phi (\cdot, \lambda)$ and $\jac_2\Phi (\cdot, \lambda)$ are Lipschitz 
continuous with constants $\rhow$ and $\rhol$ respectively.
\item
$\grad_1 \fo(\cdot, \lambda)$ and $\grad_2 \fo(\cdot, \lambda)$
are Lipschitz continuous with constants $\Low$ and $\Lol$ respectively.
\item\label{ass:lipE} $\fo(\cdot,\!\lambda)$ is Lipschitz 
continuous with constant $\Bo$.
\end{enumerate}
\end{assumption}

Under Assumption~\ref{ass:aid},
 $\Phi(\cdot, \lambda)$ has a unique fixed point $w(\lambda)$ and the hypergradient is given by
\begin{equation}
\begin{aligned}
\label{eq:gradfv2}
    \grad f (\lambda) &= \grad_2 \fo(w(\lambda), \lambda) \\
    & \quad + \jac_2 \Phi (w(\lambda), \lambda)^\top \vopt{w(\lambda)},
\end{aligned}
\end{equation}
where, 
\begin{equation}
\label{eq:linsys}
    \vopt{w} : = \left(I - \jac_1 \Phi (w, \lambda)^\top \right)^{-1} \grad_1 \fo(w,\lambda).
\end{equation}

This formula follows by differentiating the fixed point conditions for the lower-level problem 
and noting that, 
because of \Cref{ass:aid}\ref{ass:contraction},
$I - \jac_1 \Phi(w,\lambda)^\top$ is invertible (see Lemma~\ref{lm:matrixinverse}).

We also consider the following properties for $\hat \Phi$.
\begin{assumption}\label{ass:phiestimator}
The random variable $\zeta$ takes values in measurable space $\CZ$ 
and $\hat \Phi : \R^d \times \R^m \times \CZ \mapsto \R^d $ 
is a measurable function, differentiable w.r.t. the first two
arguments, and such that, for all 
$w \in \R^d$ and $\lambda \in \Lambda$
\begin{enumerate}[label={\rm (\roman*)}]
    \item\label{eq:expjacexp} 
    $\Exp{\hat \Phi(w, \lambda, \zeta)} {=} \Phi(w,\lambda)$~and~ 
    $\Exp{\norm{\hat \Phi(w, \lambda, \zeta)}^2}{<}\infty$.
    \item For $j \in \{1,2\}$, $\Exp{\jac_j \hat \Phi(w, \lambda, \zeta)} =  \jac_j \Exp{\hat \Phi(w, \lambda, \zeta)}$ and $\Exp{\norm{\jac_j \hat \Phi(w, \lambda, \zeta)}^2}<+\infty$.
    \item\label{eq:expjacexp_iii} For every $z \in \CZ$, $\norm{\jac_1 \hat \Phi(w, \lambda, z)} \leq  \LtPhi$ for some constant $\LtPhi \geq 0$ (which does not depend on $w$).
    \item\label{eq:expjacexp_iv} $\Var{ \jac_2 \hat\Phi(w,\lambda, \zeta)}
    \leq \mtwo$, for some $\mtwo \geq 0$ (which does not depend on $w$).
\end{enumerate}
\end{assumption}

\begin{algorithm}[t]
\caption{Stochastic Implicit Differentiation (SID)}
\label{algo1}
\begin{enumerate}
\item  Let $t \in \N$ and compute $\wstoc$ by $t$ steps of a stochastic 
algorithm that approximates $w(\lambda)$.
\item Let $k \in \N$ and Compute $\vstoc{\wstoc}$ by $k$ steps of a stochastic solver for
the linear system
\begin{equation}
\label{eq:linsystem}
    (I -\jac_1 \Phi(\wstoc, \lambda)^\top ) v = \grad_1 \fo (\wstoc, \lambda).
        \vspace{-.25truecm}
\end{equation}
\item Compute the approximate hypergradient as
\begin{align*}
  \hat \grad f(\lambda) : =& \grad_2 \fo(\wstoc, \lambda) \\
   & + \jac_2 \hat \Phi(\wstoc, \lambda, \zeta)^\top \vstoc{\wstoc}.
      \vspace{-.25truecm}
\end{align*}
\end{enumerate}
\end{algorithm}

Motivated by \eqref{eq:gradfv2}-\eqref{eq:linsys}, we consider to have at our disposal two stochastic solvers which exploit $\hat \Phi$: one for the lower-level problem in \eqref{mainprob}
which generates 
a stochastic process $\wstoc$ estimating $w(\lambda)$ and another for the linear system
\begin{equation}
\label{eq:20200601a}
    (I -\jac_1 \Phi(w, \lambda)^\top ) v = \grad_1 \fo (w, \lambda),\quad\text{with } w \in \R^d,
\end{equation}
generating a stochastic process $\vstoc{w}$ approximating
the solution $\vopt{w}$ of \eqref{eq:20200601a}.
Then, the stochastic approximation to the hypergradient is defined as
\begin{equation}\label{eq:gradestim}
\begin{aligned}
    \hat \grad f(\lambda) : = & \grad_2 \fo(\wstoc, \lambda) \\ & + \jac_2 \hat \Phi(\wstoc, \lambda, \zeta)^\top \vstoc{\wstoc}.
\end{aligned}
\end{equation}
We also suppose that, for every $w \in \R^d$, $\wstoc$, $\vstoc{w}$, and $\zeta$ are mutually independent.
The procedure, which we call SID, is summarized in Algorithm~\ref{algo1}.
In Section~\ref{sec:lower-level} we will give a way to generate the stochastic processes $(\wstoc)_{t \in \N}$ and $(\vstoc[k]{w}{\lambda})_{k \in \N}$.

\section{Mean Square Error Bound for SID}\label{se:MSE}

In this section, we derive a bound for the mean square error of the SID estimator, i.e., 
\begin{equation}
    \label{eq:MSE}
    \MSE := \Exp{\norm{\hat \grad f(\lambda) - \grad f(\lambda)}^2}.
\end{equation}
 To that purpose, we require the stochastic procedures at point 1 and 2 of \Cref{algo1} to have non-asymptotic convergence rates 
 in mean square.
This is the content of the following assumption.

\begin{assumption}\label{ass:innerbackrates}
For every $\lambda \in \Lambda$,
 $t,k \geq 1$ and $w \in \R^d$, the random variables  $\vstoc{w}$, $\wstoc$ and $\zeta$ are mutually independent and 
\begin{align*}
    \Exp{\norm{\wstoc -w(\lambda)}^2} &\leq \rf(t) \\ 
    \Exp{\norm{\vstoc{w} - \vopt{w}}^2} &\leq \hrf(k),
\end{align*}
where $\rf:\N \mapsto \R_{+}$ and $\hrf:\N \mapsto \R_{+}$.
\end{assumption}

This assumption is often satisfied in applications, e.g., 
in problems of type \eqref{mainprobopt}, when the lower-level objective is strongly convex and Lipschitz smooth. In Section~\ref{sec:fixed-point} we describe a general stochastic fixed-point method from which, in Section~\ref{sec:lower-level},
we will derive a stochastic implicit differentiation method
featuring the rates required in Assumption~\ref{ass:innerbackrates}.

In order to analyze the quantity in \eqref{eq:MSE}, 
we start with the standard bias-variance decomposition
 (see Lemma~\ref{lem:varprop}) as follows
\begin{equation}
\begin{aligned}\label{eq:biasvar}
    \MSE =& \underbrace{\norm{\Exp{\hat \grad f(\lambda)} - \grad f(\lambda)}^2}_{\text{bias}} +  \underbrace{\Var{\hat \grad f(\lambda)}}_{\text{variance}}.
\end{aligned} 
\end{equation}
Then, using the law of total variance (see \Cref{lm:totvariance}), we write the mean square error as below
\begin{equation}\label{eq:20200601h}
\begin{aligned}
    &\MSE %
    = \underbrace{\norm{\Exp{\hat \grad f(\lambda)} - \grad f(\lambda)}^2}_{\text{bias}} \\
    &\ \ +\underbrace{\Exp{\Var{\hat \grad f(\lambda) \given \wstoc}} + \Var{\Exp{\hat \grad f(\lambda) \given \wstoc}}}_{\text{variance}}.
\end{aligned}
\end{equation}
In the following we will bound each term on the right-hand side of \eqref{eq:20200601h} individually.
The next result serves to control the bias term.

\begin{theorem}
\label{th:boundbias}
Suppose that Assumptions~\ref{ass:aid},\ref{ass:phiestimator}, and \ref{ass:innerbackrates}
are satisfied. Let $\lambda \in \Lambda$, $t,k \in \N$ and set
\begin{equation*}
\begin{aligned}
\hat \Delta_w := \norm{\wstoc - w(\lambda)},\ 
\LPhi := \norm{\jac_2 \Phi(w(\lambda), \lambda)}, \\[1ex] 
    c_{1,\lambda} = \Lol + \frac{\Low\LPhi + \rhol\Bo}{1-\q} + \frac{\rhow\Bo\LPhi}{(1-\q)^2}.
\end{aligned}
\end{equation*}
Then the following hold.
\begin{enumerate}[label={\rm(\roman*)}]
\item\label{th:boundbias_i} 
$\norm[\big]{ \Exp{\hat \grad f(\lambda) \given \wstoc}   - \grad f(\lambda)}$ \\[1ex]
$\leq c_{1,\lambda}\hat \Delta_w
+ \LPhi \sqrt{\hrf(k)} + \rhol\hat \Delta_w \sqrt{\hrf(k)}$.
\item\label{th:boundbias_ii} 
$\norm{\EE[\hat \grad f(\lambda)] - \grad f(\lambda)}$\\[1ex]
$\leq c_{1,\lambda} \sqrt{\rf(t)} + \LPhi  \sqrt{\hrf(k)} + \rhol \sqrt{\rf(t)}\sqrt{\hrf(k)}.$
\end{enumerate}
\end{theorem}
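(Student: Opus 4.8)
The plan is to establish the conditional bound \ref{th:boundbias_i} first and then deduce \ref{th:boundbias_ii} by taking expectation over $\wstoc$. For \ref{th:boundbias_i}, I would begin by evaluating $\Exp{\hat\grad f(\lambda)\given\wstoc}$. Since $\grad_2\fo(\wstoc,\lambda)$ is measurable w.r.t.\ $\wstoc$ and, conditionally on $\wstoc$, the variables $\zeta$ and $\vstoc{\wstoc}$ are independent (Assumption~\ref{ass:innerbackrates}), the unbiasedness $\Exp{\jac_2\hat\Phi(w,\lambda,\zeta)}=\jac_2\Phi(w,\lambda)$ from Assumption~\ref{ass:phiestimator} yields $\Exp{\hat\grad f(\lambda)\given\wstoc}=\grad_2\fo(\wstoc,\lambda)+\jac_2\Phi(\wstoc,\lambda)^\top\bar v$, where $\bar v:=\Exp{\vstoc{\wstoc}\given\wstoc}$.

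Subtracting the exact hypergradient \eqref{eq:gradfv2} and inserting the intermediate term $\jac_2\Phi(\wstoc,\lambda)^\top\vopt{\wstoc}$ (the exact implicit-differentiation vector evaluated at the perturbed point $\wstoc$), I would split the error into three contributions. The smoothness term $\grad_2\fo(\wstoc,\lambda)-\grad_2\fo(w(\lambda),\lambda)$ is bounded by $\Lol\hat\Delta_w$. The solver term $\jac_2\Phi(\wstoc,\lambda)^\top(\bar v-\vopt{\wstoc})$ is handled via Jensen's inequality, $\norm{\bar v-\vopt{\wstoc}}\le\sqrt{\Exp{\norm{\vstoc{\wstoc}-\vopt{\wstoc}}^2\given\wstoc}}\le\sqrt{\hrf(k)}$, together with $\norm{\jac_2\Phi(\wstoc,\lambda)}\le\LPhi+\rhol\hat\Delta_w$ (Lipschitzness of $\jac_2\Phi(\cdot,\lambda)$); this produces exactly the terms $\LPhi\sqrt{\hrf(k)}+\rhol\hat\Delta_w\sqrt{\hrf(k)}$.

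The core of the argument is the perturbation term $\jac_2\Phi(\wstoc,\lambda)^\top\vopt{\wstoc}-\jac_2\Phi(w(\lambda),\lambda)^\top\vopt{w(\lambda)}$, which amounts to bounding the Lipschitz modulus of the exact map $w\mapsto\jac_2\Phi(w,\lambda)^\top\vopt{w}$. Telescoping it as $(\jac_2\Phi(\wstoc,\lambda)-\jac_2\Phi(w(\lambda),\lambda))^\top\vopt{\wstoc}+\jac_2\Phi(w(\lambda),\lambda)^\top(\vopt{\wstoc}-\vopt{w(\lambda)})$, I would use the uniform bound $\norm{\vopt{w}}\le\Bo/(1-\q)$ --- which follows from $\norm{(I-\jac_1\Phi(w,\lambda)^\top)^{-1}}\le1/(1-\q)$ (Lemma~\ref{lm:matrixinverse}) and $\norm{\grad_1\fo(w,\lambda)}\le\Bo$ --- to control the first summand by $\tfrac{\rhol\Bo}{1-\q}\hat\Delta_w$. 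For the second summand I would prove that $w\mapsto\vopt{w}=M(w)\grad_1\fo(w,\lambda)$, with $M(w):=(I-\jac_1\Phi(w,\lambda)^\top)^{-1}$, is Lipschitz: one factor uses $\norm{M(w)}\le1/(1-\q)$ with the $\Low$-Lipschitzness of $\grad_1\fo$, while the other relies on the resolvent identity $M(w)-M(w')=M(w)(\jac_1\Phi(w,\lambda)^\top-\jac_1\Phi(w',\lambda)^\top)M(w')$ to get $\norm{M(w)-M(w')}\le\tfrac{\rhow}{(1-\q)^2}\norm{w-w'}$. Multiplying through by $\LPhi=\norm{\jac_2\Phi(w(\lambda),\lambda)}$ reproduces the summands $\tfrac{\Low\LPhi}{1-\q}$ and $\tfrac{\rhow\Bo\LPhi}{(1-\q)^2}$, and collecting every coefficient of $\hat\Delta_w$ gives precisely $c_{1,\lambda}$, establishing \ref{th:boundbias_i}.

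Finally, for \ref{th:boundbias_ii} I would use the tower property, $\Exp{\hat\grad f(\lambda)}-\grad f(\lambda)=\Exp{\Exp{\hat\grad f(\lambda)\given\wstoc}-\grad f(\lambda)}$, push the norm inside by Jensen's inequality, substitute the bound from \ref{th:boundbias_i}, and apply $\Exp{\hat\Delta_w}\le\sqrt{\Exp{\hat\Delta_w^2}}\le\sqrt{\rf(t)}$; since the constants $c_{1,\lambda},\LPhi,\rhol$ and the factor $\sqrt{\hrf(k)}$ do not depend on $\wstoc$, they pass through the expectation unchanged, yielding \ref{th:boundbias_ii}. The only genuinely delicate step is the resolvent estimate for $M(w)-M(w')$; everything else is triangle-inequality bookkeeping and repeated use of the sub-multiplicativity of the spectral norm.
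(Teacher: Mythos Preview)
Your proposal is correct and follows essentially the same approach as the paper. The only cosmetic difference is the order of the telescoping: the paper writes $\jac_2\Phi(w(\lambda),\lambda)^\top\vopt{w(\lambda)}-\jac_2\Phi(\wstoc,\lambda)^\top\bar v$ as $\jac_2\Phi(w(\lambda),\lambda)^\top(\vopt{w(\lambda)}-\bar v)+(\jac_2\Phi(w(\lambda),\lambda)-\jac_2\Phi(\wstoc,\lambda))^\top\bar v$ and then splits $\vopt{w(\lambda)}-\bar v$ through $\vopt{\wstoc}$, whereas you insert $\jac_2\Phi(\wstoc,\lambda)^\top\vopt{\wstoc}$ first; both routes use the same ingredients (Lemma~\ref{lm:normv}, Lemma~\ref{lm:lipv} via the resolvent identity, Jensen for $\bar v$, and the Lipschitz constants from Assumption~\ref{ass:aid}) and produce the identical collection of terms.
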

The following two theorems provide bounds for the two components of the variance in \eqref{eq:20200601h}.

\begin{theorem}
\label{th:varboundone}
Suppose that Assumptions~\ref{ass:aid},\ref{ass:phiestimator}, and \ref{ass:innerbackrates}
are satisfied. Let $\lambda \in \Lambda$, $t,k \in \N$ and set $\LPhi := \norm{\jac_2 \Phi(w(\lambda), \lambda)}$. 
Then 
\begin{equation}
\begin{aligned}
\Exp[\big]{\Var{\hat \grad f(\lambda) \!\given\! \wstoc}} &\leq 2\frac{\mtwo\Bo^2}{(1-\q)^2} \\[1ex] 
&\quad + 2(\LPhi^2 + \mtwo)\hrf(k) \\[1ex]
&\quad + 2\rhol^2 \rf(t)\hrf(k).
\end{aligned}
\end{equation}
\end{theorem}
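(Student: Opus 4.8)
The plan is to exploit the conditioning on $\wstoc$ to reduce the statement to a bound on the variance of a \emph{product of two independent random objects}. Conditionally on $\wstoc$, the term $\grad_2 \fo(\wstoc, \lambda)$ in \eqref{eq:gradestim} is deterministic and therefore does not affect the conditional variance, so that
\[
\Var{\hat \grad f(\lambda) \given \wstoc} = \Var[\big]{\jac_2 \hat \Phi(\wstoc, \lambda, \zeta)^\top \vstoc{\wstoc} \given \wstoc}.
\]
Write $M := \jac_2 \hat \Phi(\wstoc, \lambda, \zeta)^\top$ and $v := \vstoc{\wstoc}$. Conditionally on $\wstoc$, the remaining randomness lives in $\zeta$ and $v$, which are independent by \Cref{ass:innerbackrates}. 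I set $\bar M := \Exp{M \given \wstoc} = \jac_2 \Phi(\wstoc, \lambda)^\top$ (using the unbiasedness in \Cref{ass:phiestimator}) and $\bar v := \Exp{v \given \wstoc}$, and the goal is to control $\Exp{\norm{M v - \bar M \bar v}^2 \given \wstoc}$.

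The key algebraic step is the decomposition
\[
M v - \bar M \bar v = (M - \bar M)\,v + \bar M\,(v - \bar v).
\]
Taking squared norms and conditional expectations, the cross term vanishes: conditioning further on $v$, the factor $M - \bar M$ has zero mean and is independent of $v$, so its conditional expectation annihilates the inner product. Hence
\[
\Exp{\norm{M v - \bar M \bar v}^2 \given \wstoc} = \Exp{\norm{(M-\bar M) v}^2 \given \wstoc} + \Exp{\norm{\bar M (v - \bar v)}^2 \given \wstoc}.
\]
For the first summand I use submultiplicativity of the spectral norm and the independence of $M$ and $v$ to factor it as $\Var{M \given \wstoc}\,\Exp{\norm{v}^2 \given \wstoc}$, which is at most $\mtwo\, \Exp{\norm{v}^2 \given \wstoc}$ by \Cref{ass:phiestimator}\ref{eq:expjacexp_iv}. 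For the second summand, $\Exp{\norm{\bar M(v-\bar v)}^2 \given \wstoc} \le \norm{\bar M}^2\, \Var{v \given \wstoc} \le \norm{\bar M}^2 \hrf(k)$, since the conditional variance is bounded by the mean square error $\hrf(k)$ of $v$ around $\vopt{\wstoc}$ (\Cref{ass:innerbackrates}).

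It then remains to estimate the scalar factors. Crucially I keep $v$ \emph{unsplit} in the first summand and bound $\Exp{\norm{v}^2 \given \wstoc} \le 2\hrf(k) + 2\norm{\vopt{\wstoc}}^2$; Lemma~\ref{lm:matrixinverse} together with $\norm{\grad_1 \fo(\cdot,\lambda)} \le \Bo$ gives $\norm{\vopt{\wstoc}} \le \Bo/(1-\q)$, so this factor produces both the constant $2\mtwo\Bo^2/(1-\q)^2$ and a $2\mtwo\hrf(k)$ term. For $\norm{\bar M}$, note that $\LPhi$ is evaluated at the fixed point $w(\lambda)$ while $\bar M$ is evaluated at $\wstoc$; using the Lipschitz continuity of $\jac_2\Phi(\cdot,\lambda)$ (\Cref{ass:aid}) I get $\norm{\bar M} \le \LPhi + \rhol \norm{\wstoc - w(\lambda)}$, whence $\norm{\bar M}^2 \le 2\LPhi^2 + 2\rhol^2 \norm{\wstoc - w(\lambda)}^2$. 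Collecting the three contributions and taking the outer expectation over $\wstoc$, with $\Exp{\norm{\wstoc - w(\lambda)}^2} \le \rf(t)$ from \Cref{ass:innerbackrates}, yields the stated bound. The only delicate points are verifying that the cross term genuinely vanishes (which relies on the mutual independence postulated in \Cref{ass:innerbackrates}) and keeping $v$ bundled rather than splitting it, since the naive split $v = \bar v + (v-\bar v)$ would inflate the factor $2\mtwo\hrf(k)$ to $3\mtwo\hrf(k)$ and break the sharp constant $2(\LPhi^2+\mtwo)$.
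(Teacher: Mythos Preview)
Your proposal is correct and follows essentially the same route as the paper's proof: the same decomposition $Mv-\bar M\bar v=(M-\bar M)v+\bar M(v-\bar v)$, the same independence argument to kill the cross term (the paper invokes its Lemma~\ref{lemA4} for this), and the same termwise bounds $\norm{\bar M}^2\le 2\LPhi^2+2\rhol^2\norm{\wstoc-w(\lambda)}^2$ and $\Exptilde{\norm{v}^2}\le 2\Bo^2/(1-\q)^2+2\hrf(k)$. Your remark about keeping $v$ unsplit to preserve the constant $2(\LPhi^2+\mtwo)$ is exactly the point; there is no substantive difference between the two arguments.
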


\begin{theorem}
\label{th:varboundtwo}
Suppose that Assumptions~\ref{ass:aid},\ref{ass:phiestimator}, and \ref{ass:innerbackrates}
are satisfied. Let $\lambda \in \Lambda$, and $t,k \in \N$. Then 
\begin{equation*}
\begin{aligned}
    \Var{\Exp{\hat \grad f(\lambda) \given \wstoc}} \leq 
    3 \big( & c_{1,\lambda}^2\rf(t) +\LPhi^2\hrf(k) \\[1ex]
    & + \rhol^2 \rf(t)\hrf(k) \big),
\end{aligned}
\end{equation*}
where $c_{1,\lambda}$ and $\LPhi$ are defined as in \Cref{th:boundbias}.
\end{theorem}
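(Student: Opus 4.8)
The plan is to observe that $\Exp{\hat\grad f(\lambda)\given\wstoc}$ is, by the mutual independence postulated in \Cref{ass:innerbackrates}, a deterministic function of $\wstoc$ alone (integrating out $\zeta$ and $\vstoc{\cdot}$), so its variance can be controlled by its second moment about any fixed reference point. Concretely, for any constant vector $c$ one has the elementary inequality
\begin{equation*}
\Var[\big]{\Exp{\hat\grad f(\lambda)\given\wstoc}} \le \Exp[\Big]{\norm[\big]{\Exp{\hat\grad f(\lambda)\given\wstoc} - c}^2},
\end{equation*}
since the variance is the minimal expected squared deviation. I would take $c = \grad f(\lambda)$, which turns the right-hand side into exactly the second moment of the conditional bias already estimated in \Cref{th:boundbias}\ref{th:boundbias_i}.

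Next I would insert the pointwise estimate from \Cref{th:boundbias}\ref{th:boundbias_i}, which holds almost surely because $\hat\Delta_w = \norm{\wstoc - w(\lambda)}$ is itself a function of $\wstoc$. This gives, inside the expectation,
\begin{equation*}
\norm[\big]{\Exp{\hat\grad f(\lambda)\given\wstoc} - \grad f(\lambda)} \le c_{1,\lambda}\hat\Delta_w + \LPhi\sqrt{\hrf(k)} + \rhol\hat\Delta_w\sqrt{\hrf(k)}.
\end{equation*}
Squaring and applying $(a+b+c)^2 \le 3(a^2+b^2+c^2)$ produces three terms; taking expectations term by term and using $\Exp{\hat\Delta_w^2} = \Exp{\norm{\wstoc - w(\lambda)}^2} \le \rf(t)$ from \Cref{ass:innerbackrates} (while $\LPhi^2\hrf(k)$ is constant and the cross term is $\rhol^2\hrf(k)\,\Exp{\hat\Delta_w^2}$) yields precisely $3\big(c_{1,\lambda}^2\rf(t) + \LPhi^2\hrf(k) + \rhol^2\rf(t)\hrf(k)\big)$, the claimed bound.

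I do not expect a genuinely hard step here: the work has essentially been front-loaded into \Cref{th:boundbias}. The only points deserving care are justifying the reduction $\Var{X} \le \Exp{\norm{X-c}^2}$ with the reference choice $c=\grad f(\lambda)$, and verifying that the independence in \Cref{ass:innerbackrates} indeed makes $\Exp{\hat\grad f(\lambda)\given\wstoc}$ a function of $\wstoc$ only, so that \Cref{th:boundbias}\ref{th:boundbias_i} applies conditionally. Once these are in place, the remaining manipulation is the routine squaring, the factor-$3$ expansion, and a single application of the lower-level rate.
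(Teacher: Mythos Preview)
Your proposal is correct and matches the paper's proof essentially step for step: the paper also invokes the variance-minimizing property $\Var{X}\le \Exp{\norm{X-c}^2}$ with $c=\grad f(\lambda)$ (their Lemma~\ref{lem:varprop}\ref{lem:varprop_ii}), then applies \Cref{th:boundbias}\ref{th:boundbias_i} pointwise, uses $(a+b+c)^2\le 3(a^2+b^2+c^2)$, and takes expectations with \Cref{ass:innerbackrates}.
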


Finally, combining the above three results, we give the promised bound on the mean square error for the estimator of the hypergradient.

\begin{bftheorem}[MSE bound for SID]
\label{thm:finalbound}
Suppose that Assumptions~\ref{ass:aid},\ref{ass:phiestimator}, and \ref{ass:innerbackrates}
are satisfied. Let $\lambda \in \Lambda$, and $t,k \in \N$. Then
\begin{equation}
\label{eq:finalbound}
\begin{aligned}
    \MSE \leq &\ 2\frac{\mtwo \Bo^2}{(1-\q)^2} 
    + 6 c^2_{1,\lambda}\rf(t) \\ &+ 2(4\LPhi^2 + \mtwo) \hrf(k)
    \\&+ 8\rhol^2 \rf(t)\hrf(k).
\end{aligned}
\end{equation}
where $c_{1,\lambda}$ is defined as in Theorem~\ref{th:boundbias}.
In particular, if $\lim_{t\to \infty}\rf(t) = 0$ and $\lim_{k\to \infty}\hrf(k) = 0$, then
\begin{equation}\label{eq:asmsebound}
    \lim_{t,k \to \infty}  \MSE \leq  2\frac{\mtwo \Bo^2}{(1-\q)^2}. 
\end{equation}
\end{bftheorem}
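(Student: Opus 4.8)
The plan is to assemble the three preceding results according to the decomposition in \eqref{eq:20200601h}, which writes $\MSE$ as the sum of the squared bias, the expected conditional variance $\Exp{\Var{\hat\grad f(\lambda) \given \wstoc}}$, and the variance of the conditional mean $\Var{\Exp{\hat\grad f(\lambda) \given \wstoc}}$. The bounds on the latter two terms are supplied verbatim by \Cref{th:varboundone,th:varboundtwo}, so the only genuine manipulation is to turn the bound on the bias \emph{norm} from \Cref{th:boundbias}\ref{th:boundbias_ii} into a bound on its square, after which everything reduces to collecting like terms.

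First I would square the estimate in \Cref{th:boundbias}\ref{th:boundbias_ii}. That bound has the form $a+b+c$ with $a = c_{1,\lambda}\sqrt{\rf(t)}$, $b = \LPhi\sqrt{\hrf(k)}$, and $c = \rhol\sqrt{\rf(t)}\sqrt{\hrf(k)}$, so the elementary inequality $(a+b+c)^2 \le 3(a^2+b^2+c^2)$ yields
\begin{equation*}
\text{bias}^2 \le 3\big(c_{1,\lambda}^2\,\rf(t) + \LPhi^2\,\hrf(k) + \rhol^2\,\rf(t)\hrf(k)\big).
\end{equation*}

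Next I would add this to the bound of \Cref{th:varboundone} on $\Exp{\Var{\hat\grad f(\lambda)\given\wstoc}}$ and the bound of \Cref{th:varboundtwo} on $\Var{\Exp{\hat\grad f(\lambda)\given\wstoc}}$, then group by the common factors $1$, $\rf(t)$, $\hrf(k)$, and $\rf(t)\hrf(k)$. The constant term arises only from \Cref{th:varboundone}, giving $2\mtwo\Bo^2/(1-\q)^2$. The coefficient of $\rf(t)$ collects $3c_{1,\lambda}^2$ from the bias and $3c_{1,\lambda}^2$ from \Cref{th:varboundtwo}, for $6c_{1,\lambda}^2$. The coefficient of $\hrf(k)$ collects $3\LPhi^2$ (bias), $2(\LPhi^2+\mtwo)$ (\Cref{th:varboundone}), and $3\LPhi^2$ (\Cref{th:varboundtwo}), for $8\LPhi^2+2\mtwo = 2(4\LPhi^2+\mtwo)$. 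Finally the coefficient of $\rf(t)\hrf(k)$ collects $3\rhol^2 + 2\rhol^2 + 3\rhol^2 = 8\rhol^2$, reproducing exactly \eqref{eq:finalbound}.

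For the asymptotic claim \eqref{eq:asmsebound} I would simply pass to the limit $t,k\to\infty$: under the hypotheses $\rf(t)\to 0$ and $\hrf(k)\to 0$, every term of \eqref{eq:finalbound} containing $\rf(t)$ or $\hrf(k)$ — including the product $\rf(t)\hrf(k)$ — vanishes, leaving the residual $2\mtwo\Bo^2/(1-\q)^2$. I do not expect a real obstacle at this stage, since all the analytic difficulty already lives in the three component theorems and this step is pure bookkeeping. The one point requiring care is tracking the constants correctly through the $(a+b+c)^2\le 3(a^2+b^2+c^2)$ step, as any slip there would propagate directly into the final coefficients.
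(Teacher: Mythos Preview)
Your proposal is correct and matches the paper's own proof, which is simply the one-liner ``the statement follows from the decomposition \eqref{eq:20200601h} and Theorems~\ref{th:boundbias}, \ref{th:varboundone}, and \ref{th:varboundtwo}.'' You have merely spelled out the bookkeeping that the paper leaves implicit, including the use of $(a+b+c)^2\le 3(a^2+b^2+c^2)$ on the bias norm, and your coefficient tracking is accurate.
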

\begin{proof}
The statement follows from the decomposition 
\eqref{eq:20200601h} and Theorems~\ref{th:boundbias}, \ref{th:varboundone}, and \ref{th:varboundtwo}. 
\end{proof}

In the following we make few comments related to the above results.
First, it follows from (ii) in Theorem~\ref{th:boundbias} that, if $\lim_{t\to \infty}\rf(t) = 0$ and $\lim_{k\to \infty}\hrf(k) = 0$, 
the estimator  $\hat{\nabla} f(\lambda)$
is asymptotically unbiased as $t,k \to +\infty$.
Next, the bound \eqref{eq:finalbound} in Theorem~\ref{thm:finalbound}
provides the iteration complexity of the SID method  (Algorithm~\ref{algo1}). This result is a stochastic version of what was obtained by \cite{grazzi2020iteration} concerning 
approximate implicit differentiation methods.
Finally, 
note that it follows from \eqref{eq:asmsebound} that the mean square error 
cannot be made arbitrarily small unless the variance
term $\Var{ \jac_2 \hat\Phi(w,\lambda, \zeta)}$ (controlled by $m_{2,\lambda}$) is zero. 
This may seem a limitation of the method. However, since SID uses $\jac_2 \hat\Phi(w,\lambda, \zeta)$ only once at the end of the procedure, one could modify the algorithm 
by sampling $\zeta$ several times so to reduce the variance of $\jac_2 \hat\Phi(w,\lambda, \zeta)$ or, when possible, even compute $\jac_2 \Phi(w, \lambda)$ exactly, with little increase on the overall cost.
Additionally, we stress that in several applications that variance term is zero.
Indeed, this occurs each time $\hat{\Phi}$ is of the form
\begin{equation}
    \hat{\Phi}(w,\lambda,\zeta) = 
    \hat{\Phi}_1(w,\zeta) + \hat{\Phi}_2(w,\lambda),
\end{equation}
meaning that, $\hat{\Phi}$ depends on
the random variable $\zeta$
and on the hyperparameter $\lambda$  
in a separate manner.
For instance, this is the case when we want to optimize the regularization hyperparameters
in regularized empirical risk minimization problems,
where usually, the random variable $\zeta$ affects only the data term.

\section{Stochastic fixed-point iterations}
\label{sec:fixed-point}
\newcommand{\T}{T}
\newcommand{\x}{w}
\newcommand{\sW}{\mathfrak{W}}

In this section we address the convergence of stochastic fixed-point iteration methods which can be applied in a similar manner to solve both subproblems in \Cref{algo1} (see \Cref{sec:lower-level}).
We consider the general situation of computing the fixed point of a contraction mapping which is accessible only through a stochastic oracle. 
The results are inspired by the analysis of the SGD algorithm for strongly convex and Lipschitz smooth functions
given in \citep{bottou2018optimization}, but
extended to our more general setting. Indeed, by a more accurate computation of the contraction constant of the gradient descent mapping, we are able to improve the convergence rates and increase the stepsizes given in the above cited paper. See Corollary~\ref{th:sgd}
and the subsequent remark. We stress that the significance of the results presented in this section 
goes beyond the bilevel setting \eqref{mainprob}-\eqref{eq:Phi_stoch} 
and may be of interest per se. 

We start with the  assumption below.

\begin{assumption}\label{ass:inner}
Let $\zeta$ be a random variable with values in a measurable space $\CZ$. Let $\T:\R^d \mapsto \R^d$ and $\hat\T:\R^d \times \CZ \mapsto \R^d$ be such that
\begin{enumerate}[label={\rm (\roman*)}]
\item\label{ass:cont} 
$\forall \x_1,\x_2 \in \R^d$,
$\norm{\T(\x_1) - \T(\x_2)} \leq q\norm{\x_1 -\x_2}$,\\[1ex] with $q<1$.
\item\label{ass:tildeexp}
$\forall \x \in \R^d$,
$\Exp{\hat\T(\x,\zeta)} = \T(\x)$
\item\label{ass:tildevar}
$\forall \x \in \R^d$,
$\Var{\hat\T(\x,\zeta)} \leq \sigma_1 + \sigma_2 \norm{\T(\x) - \x}^2$.
\end{enumerate}
\end{assumption}

The above assumptions are in line with those made by \cite{bottou2018optimization} for the case of stochastic minimization of a strongly convex and Lipschitz smooth function.

Since $\T$ is a contraction, there exists a unique $\x^* \in \R^d$ such that
\begin{equation}
    \x^* = \T(\x^*).
\end{equation}
We consider the following random process which corresponds to a stochastic version of the 
Krasnoselskii-Mann iteration for contractive operators. Let $(\zeta_t)_{t \in \N}$ be a sequence of independent copies of $\zeta$. Then,
starting from $\x_0 \in \R^d$ we set
\begin{equation}\label{eq:inneralgo}
(\forall\, t \in \N)\quad    \x_{t+1} = \x_t + \eta_t (\hat \T(\x_t, \zeta_t) - \x_t).
\end{equation}

The following two results provide non-asymptotic convergence rates for the procedure \eqref{eq:inneralgo} for two different strategies about the step-sizes $\eta_t$.
\begin{bftheorem}[Constant step-size]
\label{th:innerconvergence2} 
Let Assumption~\ref{ass:inner} hold and 
suppose that $\eta_t = \eta \in \R_{++}$, for every $t \in \N$, and that
\begin{equation*}
    \eta \leq \frac{1}{1+\sigma_2}.
\end{equation*}
Let $(\x_t)_{t \in \N}$ be generated according to algorithm \eqref{eq:inneralgo} and set
 $\MSEW{t} : = \Exp{\norm{\x_t - \x^*}^2}$. Then, for all $t \in \N$,
\begin{equation}\label{eq:rateconstant}
\begin{aligned}
\MSEW{t} &\leq (1-\eta(1-q^2))^t\bigg(\MSEW{0} -\frac{\eta \sigma_1}{1-q^2} \bigg) \\ &\qquad + \frac{\eta \sigma_1}{1-q^2}.
\end{aligned}
\end{equation}
In particular, $\lim_{t \to \infty} \MSEW{t} \leq \eta \sigma_1/(1 - q^2)$.
\end{bftheorem}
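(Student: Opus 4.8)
The plan is to derive a one-step contraction inequality for $\MSEW{t} = \EE[\|x_t - x^*\|^2]$ and then unroll the resulting affine recursion. Throughout I write $\EE_t$ for the conditional expectation given $x_0,\dots,x_t$, and exploit that $\zeta_t$ is independent of this history, so that Assumption~\ref{ass:inner}\ref{ass:tildeexp}--\ref{ass:tildevar} apply conditionally with the argument $x_t$. Setting $u = x_t - x^*$ and $v = T(x_t) - x_t$, the update \eqref{eq:inneralgo} gives $x_{t+1} - x^* = u + \eta v + \eta(\hat T(x_t,\zeta_t) - T(x_t))$, where the last term is conditionally mean-zero by unbiasedness. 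Hence the bias--variance split yields
\begin{equation*}
\EE_t\big[\|x_{t+1}-x^*\|^2\big] = \|u + \eta v\|^2 + \eta^2\, \VV_t[\hat T(x_t,\zeta_t)] \le \|u+\eta v\|^2 + \eta^2\sigma_1 + \eta^2\sigma_2\|v\|^2,
\end{equation*}
using Assumption~\ref{ass:inner}\ref{ass:tildevar} (evaluated at $x_t$) for the variance term.

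The core of the argument is then purely algebraic. Writing $r = T(x_t) - x^* = T(x_t) - T(x^*)$, so that $v = r - u$ and, by the contraction property \ref{ass:cont}, $\|r\| \le q\|u\|$, I expand the right-hand side in terms of $u$ and $r$ and collect coefficients. The coefficient multiplying $\langle u, r\rangle$ turns out to be $2\eta\big(1 - \eta(1+\sigma_2)\big)$, which is nonnegative precisely because of the step-size restriction $\eta \le 1/(1+\sigma_2)$; this legitimizes the Cauchy--Schwarz bounds $\langle u,r\rangle \le \|u\|\,\|r\| \le q\|u\|^2$ and $\|r\|^2 \le q^2\|u\|^2$ as genuine upper bounds. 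After regrouping, the coefficient of $\|u\|^2$ collapses to the clean form $1 - 2\eta(1-q) + \eta^2(1+\sigma_2)(1-q)^2$. The step-size condition then gives exactly $1 - 2\eta(1-q) + \eta^2(1+\sigma_2)(1-q)^2 \le 1 - \eta(1-q^2)$: dividing the required inequality through by $\eta(1-q)>0$ reduces it to $\eta(1+\sigma_2)\le 1$. This produces the one-step estimate $\EE_t[\|x_{t+1}-x^*\|^2] \le (1-\eta(1-q^2))\|x_t-x^*\|^2 + \eta^2\sigma_1$, and taking total expectation yields $\MSEW{t+1} \le (1-\eta(1-q^2))\,\MSEW{t} + \eta^2\sigma_1$.

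Finally I unroll this affine recursion. With $\rho := 1-\eta(1-q^2) \in [0,1)$ and $c := \eta^2\sigma_1$, a straightforward induction gives $\MSEW{t} \le \rho^t \MSEW{0} + c\sum_{i=0}^{t-1}\rho^i = \rho^t \MSEW{0} + \tfrac{c}{1-\rho}(1-\rho^t)$, and since $c/(1-\rho) = \eta\sigma_1/(1-q^2)$ this is exactly \eqref{eq:rateconstant}; letting $t\to\infty$ and using $\rho^t \to 0$ gives the stated limit. The main obstacle is the algebraic step: one must both verify that the cross-term coefficient is nonnegative (so that Cauchy--Schwarz yields an upper bound rather than a lower one) and spot the factorization $1-2\eta(1-q)+\eta^2(1+\sigma_2)(1-q)^2$, which reveals that the step-size budget upgrades the naive $q$-rate of the deterministic Krasnoselskii--Mann map into the improved $q^2$-rate $1-\eta(1-q^2)$. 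This sharper bookkeeping of the contraction constant is exactly what yields the improvement over \cite{bottou2018optimization}.
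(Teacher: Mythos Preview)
Your proof is correct and follows essentially the same route as the paper's: the bias--variance split, the expansion in terms of $u=x_t-x^*$ and $r=T(x_t)-x^*$, the observation that the cross-term coefficient $2\eta(1-\eta(1+\sigma_2))$ is nonnegative under the step-size condition, and the unrolling of the resulting affine recursion are all identical. The only cosmetic difference is that you bound $\langle u,r\rangle\le q\|u\|^2$ via Cauchy--Schwarz, whereas the paper uses $2\langle u,r\rangle\le \|u\|^2+\|r\|^2\le(1+q^2)\|u\|^2$; with the paper's choice the $\eta^2(1+\sigma_2)$ terms cancel outright, while your (slightly tighter) bound leaves the residual $\eta^2(1+\sigma_2)(1-q)^2$ which you then absorb using $\eta(1+\sigma_2)\le 1$ to reach the same one-step estimate.
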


\begin{bftheorem}[Decreasing step-sizes]
\label{th:innerconvergence}
Let Assumption~\ref{ass:inner} hold and 
suppose that for every $t \in \N$
\begin{equation}
\label{eq:20201013b}
    \eta_t \leq \frac{1}{1+\sigma_2}, \quad
    \sum_{t=1}^\infty \eta_t = \infty, \quad 
    \sum_{t=1}^\infty \eta^2_t < \infty.
\end{equation}
Let $(\x_t)_{t \in \N}$ be generated according to Algorithm~\eqref{eq:inneralgo}.
Then 
\begin{equation*}
    \x_t \to \x^* \qquad \PP\text{-a.s.}
\end{equation*}
Moreover, if $\eta_t = \beta/(\gamma  + t)$, 
with $\beta > 1/(1-q^2)$ and  $\gamma \geq \beta(1 + \sigma_2)$,
then we have
\begin{equation}
\label{eq:ratediminish}
    \Exp{\norm{\x_t - \x^*}^2} 
    \leq \frac{c}{\gamma + t},
\end{equation}
where 
\begin{equation*}
    c : = \max \left\{\gamma \Exp{\norm{\x_0 - \x^*}^2}, \frac{\beta^2\sigma_1}{\beta(1-q^2) -1} \right\}.
\end{equation*}
\end{bftheorem}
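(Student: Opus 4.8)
The plan is to reduce the entire statement to a single one-step mean-square recursion and then read off the two conclusions from it. First I would fix the natural filtration $\mathcal{F}_t := \sigma(\x_0, \zeta_0, \dots, \zeta_{t-1})$, so that $\x_t$ is $\mathcal{F}_t$-measurable while $\zeta_t$ is independent of $\mathcal{F}_t$, and set $\delta_t := \x_t - \x^*$ and $u_t := \T(\x_t) - \x_t$. Using the unbiasedness in Assumption~\ref{ass:inner}\ref{ass:tildeexp}, the update \eqref{eq:inneralgo} gives $\Exp{\x_{t+1} - \x^* \given \mathcal{F}_t} = \delta_t + \eta_t u_t$; decomposing the conditional second moment into conditional mean plus conditional variance and bounding the latter by Assumption~\ref{ass:inner}\ref{ass:tildevar} yields
\begin{equation*}
\Exp{\norm{\x_{t+1} - \x^*}^2 \given \mathcal{F}_t} \le \norm{\delta_t}^2 + 2\eta_t\langle \delta_t, u_t\rangle + \eta_t^2(1+\sigma_2)\norm{u_t}^2 + \eta_t^2\sigma_1.
\end{equation*}
The admissibility condition $\eta_t(1+\sigma_2) \le 1$ lets me absorb the whole second-order term into $\eta_t\norm{u_t}^2$, and the elementary identity $2\langle\delta_t, u_t\rangle + \norm{u_t}^2 = \norm{\T(\x_t) - \x^*}^2 - \norm{\x_t - \x^*}^2$ together with the contraction bound (Assumption~\ref{ass:inner}\ref{ass:cont}) $\norm{\T(\x_t) - \x^*} = \norm{\T(\x_t) - \T(\x^*)} \le q\norm{\delta_t}$ gives $2\langle\delta_t, u_t\rangle + \norm{u_t}^2 \le -(1-q^2)\norm{\delta_t}^2$. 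Collecting terms I expect to arrive at the fundamental recursion
\begin{equation}
\label{eq:onesteprec}
\Exp{\norm{\x_{t+1} - \x^*}^2 \given \mathcal{F}_t} \le (1 - \eta_t(1-q^2))\norm{\x_t - \x^*}^2 + \eta_t^2\sigma_1.
\end{equation}

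For the almost sure convergence I would view \eqref{eq:onesteprec} as an almost supermartingale: since $1 - \eta_t(1-q^2) \le 1$ it reads $\Exp{\norm{\x_{t+1}-\x^*}^2 \given \mathcal{F}_t} \le \norm{\x_t-\x^*}^2 - \eta_t(1-q^2)\norm{\x_t-\x^*}^2 + \eta_t^2\sigma_1$, and $\sum_t \eta_t^2\sigma_1 < \infty$ by \eqref{eq:20201013b}. The Robbins--Siegmund theorem then gives that $\norm{\x_t - \x^*}^2$ converges almost surely and that $\sum_t \eta_t(1-q^2)\norm{\x_t - \x^*}^2 < \infty$ almost surely. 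Because $\sum_t \eta_t = \infty$, this summability is incompatible with a strictly positive limit, so the almost sure limit must be $0$, i.e.\ $\x_t \to \x^*$ almost surely.

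For the explicit rate I would take total expectation in \eqref{eq:onesteprec} to get $\MSEW{t+1} \le (1 - \eta_t(1-q^2))\MSEW{t} + \eta_t^2\sigma_1$, after first noting that $\eta_t = \beta/(\gamma+t)$ with $\gamma \ge \beta(1+\sigma_2)$ indeed satisfies $\eta_t \le 1/(1+\sigma_2)$ for every $t$, so that \eqref{eq:onesteprec} is legitimate at each step. I would then prove $\MSEW{t} \le c/(\gamma+t)$ by induction on $t$. The base case $t=0$ is exactly the bound $c \ge \gamma\,\MSEW{0}$ built into the definition of $c$. For the inductive step, writing $s := \gamma + t$ and substituting the hypothesis, the claim $\MSEW{t+1} \le c/(s+1)$ reduces after clearing denominators to $\tfrac{s}{s+1}\,c \le \beta(1-q^2)c - \beta^2\sigma_1$; since $s/(s+1) < 1$ this holds provided $c \le \beta(1-q^2)c - \beta^2\sigma_1$, which is precisely $c \ge \beta^2\sigma_1/(\beta(1-q^2)-1)$, the other term in the definition of $c$, and here the assumption $\beta(1-q^2) > 1$ guarantees a positive denominator.

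The hardest part will be the derivation of \eqref{eq:onesteprec}: securing the sharp contraction factor $1 - \eta_t(1-q^2)$ rather than a weaker constant. This is where the improvement over \cite{bottou2018optimization} noted in the text enters, and it relies on using $\eta_t(1+\sigma_2) \le 1$ to keep the \emph{entire} term $\eta_t^2(1+\sigma_2)\norm{u_t}^2$ and fold it, via the identity above, exactly into $\norm{\T(\x_t)-\x^*}^2 - \norm{\x_t-\x^*}^2$; a cruder estimate of $\norm{u_t}^2$ (e.g.\ the triangle inequality $\norm{u_t} \le (1+q)\norm{\delta_t}$) would spoil the $1-q^2$ factor. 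The remaining two steps are, respectively, a standard supermartingale argument and a routine induction.
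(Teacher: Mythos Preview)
Your proposal is correct and follows essentially the same route as the paper: establish the one-step recursion $\EE[\norm{\x_{t+1}-\x^*}^2\,\vert\,\mathcal{F}_t]\le (1-\eta_t(1-q^2))\norm{\x_t-\x^*}^2+\eta_t^2\sigma_1$, then invoke Robbins--Siegmund for almost sure convergence, and finally prove the $O(1/(\gamma+t))$ rate by induction on $t$.

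There are two small differences worth noting. For the recursion, the paper expands $\norm{\T(\x_t)-\x_t}^2=\norm{\T(\x_t)-\x^*}^2+\norm{\x_t-\x^*}^2-2(\x_t-\x^*)^\top(\T(\x_t)-\x^*)$ and then bounds the cross term via $2ab\le a^2+b^2$; your use of the identity $2\langle\delta_t,u_t\rangle+\norm{u_t}^2=\norm{\T(\x_t)-\x^*}^2-\norm{\delta_t}^2$ together with $\eta_t^2(1+\sigma_2)\le\eta_t$ is a bit more direct but lands on exactly the same inequality. For the almost sure part, the paper first shows summability \emph{in expectation}, $\sum_t\eta_t\,\EE[\norm{\x_t-\x^*}^2]<\infty$, deduces $\liminf_t\EE[\norm{\x_t-\x^*}^2]=0$, and then combines Robbins--Siegmund with Fatou's lemma to conclude that the almost sure limit is zero; you instead use the almost sure summability conclusion of Robbins--Siegmund directly, which is equally valid and avoids the detour through Fatou. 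The induction for the rate is identical to the paper's.
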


We will now comment on the choice of the stepsizes in algorithm~\eqref{eq:inneralgo}.
Theorem~\ref{th:innerconvergence2} and \ref{th:innerconvergence}
suggest that it may be convenient to start the algorithm with a constant stepsize.
Then, once reached a mean square error approximately less than $\eta \sigma_1/(1-q^2)$, the stepsizes should change
regime and start decreasing according to Theorem~\ref{th:innerconvergence}. More precisely, 
in the first phase it is recommended to set $\eta=1/(1+\sigma_2)$ in order to
maximize the stepsize. Then, 
the second phase should be initialized with
$w_0$ such that $\MSEW{0} \leq \sigma_1/[(1+\sigma_2)(1-q^2)]$
and $\gamma = \beta(1+\sigma_2)$ so that
\begin{align*}
    \gamma \Exp{\norm{\x_0 - \x^*}^2} 
    &\leq \frac{\beta^2\sigma_1}{\beta(1-q^2) -1}.
\end{align*}
In this situation, $c$ will be dominated by its second term, which is minimized when $\beta = 2/(1-q^2)$. Similar suggestions are made in \citep{bottou2018optimization}. 

In the following, partly inspired by  the analysis of \cite{nguyen2019new}, we show that, with an additional Lipschitz assumption on $\hat \T,$ which is commonly verified in practice, Assumption~\ref{ass:inner}\ref{ass:tildevar} on the variance of the estimator is satisfied. The following \Cref{ass:tildephilip} is an extension of Assumption 2 in \citep{nguyen2019new}.

\newcommand{\LtT}{L_{\hat \T}}
\begin{assumption}\label{ass:tildephilip}
There exists $\LtT \geq 0 $  such that, for every $\x_1,\x_2 \in \R^d$ and for every $z \in \CZ$
\begin{equation*}
\norm{\hat \T(\x_1, z) - \hat \T(\x_2, z)} \leq \LtT\norm{\x_1 - \x_2}. 
\end{equation*}
\end{assumption}

\begin{restatable}{theorem}{liptovar}\label{th:liptovar}
Suppose that
 \Cref{ass:tildephilip} and \Cref{ass:inner}\ref{ass:cont}\ref{ass:tildeexp}  hold. Then \Cref{ass:inner}\ref{ass:tildevar} holds. In particular, for every $w \in \R^d$,
\begin{equation*}
 \Var{\hat \T(\x, \zeta)} \leq \underbrace{2\Var{\hat \T(\x^*, \zeta)}}_{\sigma_1} + \underbrace{2\frac{\LtT^2 + q^2}{(1-q)^2}}_{\sigma_2}\norm{\T(\x)-\x}^2.
\end{equation*}
\end{restatable}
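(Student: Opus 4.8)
The plan is to estimate $\Var{\hat\T(\x,\zeta)}$ directly from its definition, splitting the fluctuation of $\hat\T(\x,\zeta)$ around its mean at the two reference points $\x$ and the fixed point $\x^*$, and then converting the resulting $\norm{\x-\x^*}$ into the residual $\norm{\T(\x)-\x}$ via the contraction. First, since $\Exp{\hat\T(\x,\zeta)}=\T(\x)$ by \Cref{ass:inner}\ref{ass:tildeexp}, I would write
\begin{equation*}
\Var{\hat\T(\x,\zeta)} = \Exp{\norm{\hat\T(\x,\zeta)-\T(\x)}^2},
\end{equation*}
and decompose the integrand as
\begin{equation*}
\hat\T(\x,\zeta)-\T(\x) = \big(\hat\T(\x,\zeta)-\hat\T(\x^*,\zeta)\big) + \big(\hat\T(\x^*,\zeta)-\T(\x)\big).
\end{equation*}
Applying $\norm{a+b}^2\le 2\norm{a}^2+2\norm{b}^2$ and taking expectations reduces the problem to bounding $\Exp{\norm{\hat\T(\x,\zeta)-\hat\T(\x^*,\zeta)}^2}$ and $\Exp{\norm{\hat\T(\x^*,\zeta)-\T(\x)}^2}$ separately, each of which will contribute one of the constants in the claim.

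For the first term, \Cref{ass:tildephilip} gives a pointwise (in $\zeta$) Lipschitz bound, so $\norm{\hat\T(\x,\zeta)-\hat\T(\x^*,\zeta)}\le\LtT\norm{\x-\x^*}$ and hence $\Exp{\norm{\hat\T(\x,\zeta)-\hat\T(\x^*,\zeta)}^2}\le\LtT^2\norm{\x-\x^*}^2$; this produces the $\LtT^2$ contribution. For the second term I would insert $\pm\T(\x^*)$ and use that $\x^*$ is the fixed point, i.e.\ $\T(\x^*)=\x^*$, writing $\hat\T(\x^*,\zeta)-\T(\x) = \big(\hat\T(\x^*,\zeta)-\T(\x^*)\big)+\big(\T(\x^*)-\T(\x)\big)$. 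Expanding the square, the cross term vanishes in expectation because $\Exp{\hat\T(\x^*,\zeta)-\T(\x^*)}=0$ (again \Cref{ass:inner}\ref{ass:tildeexp}) while $\T(\x^*)-\T(\x)$ is deterministic; what remains is $\Var{\hat\T(\x^*,\zeta)} + \norm{\T(\x)-\T(\x^*)}^2$, and the contraction \Cref{ass:inner}\ref{ass:cont} bounds the last norm by $q^2\norm{\x-\x^*}^2$. Combining the two terms yields
\begin{equation*}
\Var{\hat\T(\x,\zeta)} \le 2\Var{\hat\T(\x^*,\zeta)} + 2(\LtT^2+q^2)\norm{\x-\x^*}^2.
\end{equation*}

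It remains to replace $\norm{\x-\x^*}$ by the fixed-point residual. From $\x-\x^* = (\x-\T(\x))+(\T(\x)-\T(\x^*))$ and the contraction one gets $\norm{\x-\x^*}\le\norm{\T(\x)-\x}+q\norm{\x-\x^*}$, hence $\norm{\x-\x^*}\le(1-q)^{-1}\norm{\T(\x)-\x}$; squaring and substituting gives exactly the stated inequality, with $\sigma_1=2\Var{\hat\T(\x^*,\zeta)}$ and $\sigma_2=2(\LtT^2+q^2)/(1-q)^2$, which is \Cref{ass:inner}\ref{ass:tildevar}. I do not expect a genuine obstacle here; the only points requiring care are choosing the decomposition so that the second term simultaneously isolates the noise $\Var{\hat\T(\x^*,\zeta)}$ and produces the contraction factor $q^2$ (rather than a cruder split that would inflate the constant), and correctly handling the cross term, which disappears precisely because the estimator is unbiased at the fixed point.
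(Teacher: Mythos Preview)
Your proof is correct and follows essentially the same approach as the paper: the same add-and-subtract of $\hat\T(\x^*,\zeta)$, the same $\norm{a+b}^2\le 2\norm{a}^2+2\norm{b}^2$ split, the same bias--variance expansion of $\Exp{\norm{\hat\T(\x^*,\zeta)-\T(\x)}^2}$ into $\Var{\hat\T(\x^*,\zeta)}+\norm{\T(\x^*)-\T(\x)}^2$, and the same residual bound $\norm{\x-\x^*}\le(1-q)^{-1}\norm{\T(\x)-\x}$. The only cosmetic difference is that the paper invokes its Lemma on variance properties for the second term, whereas you spell out the vanishing cross term directly.
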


We now discuss the popular case of SGD and make a comparison with the related results by \cite{bottou2018optimization}. We assume that $\hat\T(\x, \zeta) = \x - \alpha \grad \hat{\ell}(w, \zeta)$, for a suitable $\alpha>0$. With this choice, algorithm \eqref{eq:inneralgo} becomes
\begin{equation}\label{eq:sgd}
(\forall\, t \in \N)\quad    \x_{t+1} = \x_t - \eta_t \alpha \grad_1 \ell (\x_t, \zeta_t),
\end{equation}
which is exactly stochastic gradient descent.
We have the following assumption on $\hat{\ell}$.

\begin{assumption}\label{ass:sgd}
$\hat{\ell}: \R^d \times \CZ \to \R $ is twice continuously differentiable w.r.t. the first variable. Let $\ell(\x) := \Exp{ \hat{\ell}(w, \zeta)}$.
\begin{enumerate}[label={\rm (\roman*)}]
\item\label{ass:lipsc} 
$\ell(\x)$ is $\tau$ strongly convex and $L$-smooth
\item\label{ass:tildeLvar}
$\forall \x \in \R^d$,
$\Var{ \grad \hat{\ell}(\x,\zeta)} \leq
\sigma^\prime_1 + \sigma^\prime_2 \norm{\grad \ell(\x)}^2$.
\end{enumerate}
\end{assumption}

\begin{restatable}{corollary}{sgd}
\label{th:sgd} Let Assumption~\ref{ass:sgd} hold and
let $(\x_t)_{t \in \N}$ be generated according to algorithm~\eqref{eq:sgd} with $\eta_t=\eta \leq 1/(1+\sigma^\prime_2)$. Then
\begin{equation}
\label{eq:rateconstantsgd}
\Exp{\norm{\x_t - \x^*}^2} \leq r_1^t\big(\Exp{\norm{\x_0 - \x^*}^2} - r_2 \big) + r_2,
\end{equation}
where 
\begin{align*}
    r_1 &: = \begin{cases}
    1 - \dfrac{\eta \tau}{L}\Big( 2 - \dfrac{\tau}{L} \Big) &\text{if } \alpha=1/L\\[1.5ex]
    1 - 4\dfrac{\eta\tau L}{(L+\tau)^2} &\text{if } \alpha = 2/(L+\tau).
    \end{cases}\\[1ex]
    r_2 &: = \begin{cases}
    = \dfrac{\eta \sigma^\prime_1}{\tau (2 L- \tau)} &\text{if } \alpha = 1/L\\[2ex]
     \dfrac{\eta \sigma^\prime_1}{\tau L} &\text{if } \alpha = 2/(L+\tau).
     \end{cases}
\end{align*}
Moreover, let $\eta_t = \beta/(\gamma  + t)$, 
where
\begin{equation}
\beta >
\begin{cases}
\dfrac{L^2}{\tau (2 L - \tau)} &\text{if } \alpha = 1/L\\[2ex]
\dfrac{(L+\tau)^2}{4 \tau L} &\text{if } \alpha = 2/(L+\tau)
\end{cases}
\end{equation}
and  $\gamma \geq \beta(1 + \sigma^\prime_2)$.
Then, for all $t \in \N$, we have
\begin{equation}\label{eq:ratediminishsgd}
 \Exp{\norm{\x_t - \x^*}^2} 
    \leq \frac{\max \{\gamma \Exp{\norm{\x_0 - \x^*}^2}, r_3 \}}{\gamma + t},
\end{equation}
where 
\begin{equation*}
    r_3:= \begin{cases}
    \dfrac{\beta^2\sigma^\prime_1}{\beta \tau(2 L-\tau) -L^2} &\text{if } \alpha = 1/L\\[2ex]
    \dfrac{4\beta^2\sigma^\prime_1}{4\beta \tau L -(L+\tau)^2} &\text{if } \alpha = 2/(L+\tau).
    \end{cases}
\end{equation*}
\end{restatable}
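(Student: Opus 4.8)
The plan is to recognize algorithm~\eqref{eq:sgd} as an instance of the stochastic Krasnoselskii--Mann iteration~\eqref{eq:inneralgo} and then simply specialize Theorems~\ref{th:innerconvergence2} and~\ref{th:innerconvergence}. Concretely, I would set
\[
\hat\T(\x,\zeta) = \x - \alpha\grad\hat{\ell}(\x,\zeta), \qquad \T(\x) = \x - \alpha\grad\ell(\x),
\]
so that~\eqref{eq:inneralgo} reduces exactly to~\eqref{eq:sgd} and $\x^\ast$ is the unique minimizer of $\ell$. The bulk of the work is then to verify that Assumption~\ref{ass:inner} holds for this choice, with explicit values of $q$, $\sigma_1$ and $\sigma_2$, after which both displayed rates follow by plugging these values into~\eqref{eq:rateconstant} and~\eqref{eq:ratediminish} and simplifying.

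For Assumption~\ref{ass:inner}\ref{ass:tildeexp} I would use $\Exp{\grad\hat{\ell}(\x,\zeta)} = \grad\ell(\x)$ (differentiating $\ell = \Exp{\hat{\ell}(\cdot,\zeta)}$ under the integral, which the regularity in Assumption~\ref{ass:sgd} supports), giving $\Exp{\hat\T(\x,\zeta)} = \T(\x)$. The key quantitative step is the contraction constant in~\ref{ass:cont}. Since $\ell$ is twice continuously differentiable and $\tau$-strongly convex and $L$-smooth, we have $\tau I \preceq \grad^2\ell(\x) \preceq L I$ for every $\x$, so writing
\[
\T(\x_1) - \T(\x_2) = \Big(I - \alpha\!\int_0^1\! \grad^2\ell\big(\x_2 + s(\x_1-\x_2)\big)\,ds\Big)(\x_1-\x_2)
\]
bounds the operator norm of the bracket by $\max\{|1-\alpha\tau|,\,|1-\alpha L|\}$, hence $q = \max\{|1-\alpha\tau|,\,|1-\alpha L|\}$. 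Evaluating this for the two step sizes gives $q = 1-\tau/L$ (so $1-q^2 = \tau(2L-\tau)/L^2$) when $\alpha = 1/L$, and $q = (L-\tau)/(L+\tau)$ (so $1-q^2 = 4\tau L/(L+\tau)^2$) when $\alpha = 2/(L+\tau)$. This sharper evaluation of $q$, as opposed to a looser $1 - 2\alpha\tau + \alpha^2 L^2$ type estimate, is exactly the source of the improved rates announced before Corollary~\ref{th:sgd}, and I expect it to be the only genuinely delicate point.

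For Assumption~\ref{ass:inner}\ref{ass:tildevar}, since $\hat\T(\x,\zeta) - \T(\x) = -\alpha(\grad\hat{\ell}(\x,\zeta) - \grad\ell(\x))$, I would compute $\Var{\hat\T(\x,\zeta)} = \alpha^2\Var{\grad\hat{\ell}(\x,\zeta)}$ and combine Assumption~\ref{ass:sgd}\ref{ass:tildeLvar} with the identity $\norm{\T(\x)-\x} = \alpha\norm{\grad\ell(\x)}$ to obtain
\[
\Var{\hat\T(\x,\zeta)} \leq \alpha^2\sigma^\prime_1 + \sigma^\prime_2\norm{\T(\x)-\x}^2,
\]
that is, $\sigma_1 = \alpha^2\sigma^\prime_1$ and $\sigma_2 = \sigma^\prime_2$. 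In particular the step-size restriction $\eta\leq 1/(1+\sigma^\prime_2)$ is precisely $\eta \leq 1/(1+\sigma_2)$, so both theorems apply directly.

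With $q$, $\sigma_1$, $\sigma_2$ in hand, the constant-step case follows by substituting into~\eqref{eq:rateconstant}: $r_1 = 1-\eta(1-q^2)$ and $r_2 = \eta\sigma_1/(1-q^2) = \eta\alpha^2\sigma^\prime_1/(1-q^2)$, which reduce to the stated $r_1$ and $r_2$ once the two values of $1-q^2$ and $\alpha^2$ are inserted. For the decreasing-step case I would apply~\eqref{eq:ratediminish}: the hypothesis $\beta > 1/(1-q^2)$ and the constant $\beta^2\sigma_1/(\beta(1-q^2)-1)$ turn into the displayed thresholds on $\beta$ and the expression $r_3$ after substituting $\sigma_1 = \alpha^2\sigma^\prime_1$ and the two values of $1-q^2$, while $\gamma\geq\beta(1+\sigma_2)$ is unchanged. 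This last part is routine algebraic simplification; the main obstacle remains the tight computation of the contraction constant $q$ carried out in the second paragraph.
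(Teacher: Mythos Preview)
Your proposal is correct and follows exactly the route the paper intends: the statement is a corollary obtained by specializing Theorems~\ref{th:innerconvergence2} and~\ref{th:innerconvergence} to $\hat\T(\x,\zeta)=\x-\alpha\nabla\hat\ell(\x,\zeta)$, and the paper does not even spell out the proof beyond this. Your identification of the contraction constant $q=\max\{\lvert 1-\alpha\tau\rvert,\lvert 1-\alpha L\rvert\}$ and the translation $\sigma_1=\alpha^2\sigma_1'$, $\sigma_2=\sigma_2'$ are precisely the substitutions that recover the stated $r_1,r_2,r_3$ and the thresholds on $\beta$.
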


\begin{remark}
\label{rmk:20201013a}
In \citep{bottou2018optimization},
under Assumption~\ref{ass:sgd} a rate equal to \eqref{eq:rateconstantsgd} is obtained, but with $\alpha = 1/L$
and
\begin{equation}
\label{eq:20201008a}
r_1 = 1-\eta \frac{\tau}{L}
\quad\text{and}\quad 
r_2 = \frac{\eta \sigma_1^\prime}{2 \tau}.
\end{equation}
We see then, that Corollary~\ref{th:sgd} provides better rates. Also, our analysis allows choosing the larger (and optimal) stepsize $2/(L+\tau)$.
\end{remark}

\begin{remark}\label{rem:sig2}
In Assumption~\ref{ass:sgd}, suppose that $\zeta$ takes values in $\CZ=\{1,\dots, n\}$
with uniform distribution and that for every $i\in \{1,\dots, n\}$, $\hat{\ell}(\cdot, i)$ is strongly convex with modulus $\tau$.
This is, for instance, the case 
of the regularized empirical risk functional,
\begin{equation}
    \hat{\ell}(w, i) = \psi(y_i  w^\top x_i) +  \frac{\tau}{2}  \norm{w}^2,
\end{equation}
where $(x_i,y_i)_{1 \leq i \leq n} \in (\R^d\times\{1,2\})^n$
is the training set. Then,
if the loss function $\psi$ is Lipschitz continuous, as is the case, e.g., of the logistic loss, we have
\begin{align}
\nonumber\VV[\nabla \hat{\ell}(w,i)] 
    &= \VV_{i\sim U[\mathcal{Z}]}[ \psi^\prime(y_i  w^\top x_i) y_i x_i]\\
    & \leq \mathrm{Lip}(\psi)^2 \EE_{i\sim U[\mathcal{Z}]}[\norm{x_i}^2],
\end{align}
so that Assumption~\ref{ass:sgd}\ref{ass:tildeLvar} is satisfied with $\sigma_2^\prime = 0$.
\end{remark}

\section{Solving the Subproblems in SID}
\label{sec:lower-level}

We are now ready to show how to generate the sequences $\wstoc$ and $\vstoc{\wstoc}$ required by Algorithm~\ref{algo1}.
Let $\zeta^\prime$ be a random variable with values in $\CZ$ 
satisfying Assumption~\ref{ass:phiestimator}\ref{eq:expjacexp}.
Let $(\zeta_i)_{i\in \N}$ and $(\hat \zeta_i)_{i\in \N}$ be independent copies of $\zeta^\prime$ and independent from each other, and let $(\sz{i})_{i\in \N}$ be a sequence of stepsizes such that 
$\sum_{i=0}^\infty \sz{i} = +\infty$ and $\sum_{i=0}^\infty \sz{i}^2 < +\infty$. 
For every $w \in \R^d$ we let $\wstoc[0] = \vstoc[0]{w} = 0$, and, for $k,t \in \N$,
\begin{equation}\label{eq:inneralg}
    \wstoc[t+1] : = \wstoc[t] + \sz{t} (\hat \Phi(\wstoc[t], \lambda, \zeta_{t}) - \wstoc[t])      
\end{equation}
and
\begin{equation}\label{eq:backalg}
\begin{aligned}
    \vstoc[k+1]{w} : = &\vstoc[k]{w} \\ & + \sz{k} (\hat \Psi_w(\vstoc[k]{w}, \lambda, \hat \zeta_{k}) - \vstoc[k]{w}), 
\end{aligned}
\end{equation}

where $\hat \Psi_w(v, \lambda, z) : = \jac_1 \hat \Phi (w, \lambda, z)^\top v + \grad_1 \fo (w, \lambda)$.

We note that
if the Jacobian-vector product above is computed using reverse mode automatic differentiation, the costs of evaluating $\hat \Psi_w$ and $\hat \Phi$ are of the same order of magnitude.
Furthermore, thanks to the definition of $\hat \Psi$, we can solve both subproblems in \Cref{algo1} using the procedure described in \Cref{sec:fixed-point}. In particular, if  we set $\sz{t} =\sz{k}$, we can obtain similar convergence guarantees for both \eqref{eq:inneralg} and \eqref{eq:backalg}. The case of decreasing step sizes is treated in the following result, which is a direct consequence of Theorem~\ref{th:innerconvergence}.

\begin{restatable}{theorem}{firstconv}
\label{th:firstconv}
Let \Cref{ass:aid}\ref{ass:contraction} and \ref{ass:phiestimator} hold. Let $\lambda \in \Lambda$ and let $\wstoc$ and $\vstoc{w}$ be defined as in \eqref{eq:inneralg} and \eqref{eq:backalg}. 
Then, for every  $w \in \R^d$, we have
\begin{equation*}
        \lim_{t \to \infty} \wstoc = w(\lambda), \quad
        \lim_{k \to \infty}\vstoc{w} = \vopt{w} \quad \PP\text{-a.s.}
\end{equation*}
Moreover, let $\sigtwo := 2(\LtPhi^2 + \q^2)/(1-\q)^2$ and $\sz{i}:= \beta_\lambda/(\gamma_\lambda + i)$ with 
$\beta_\lambda > 1/(1-\q^2)$ and $\gamma_\lambda \geq \beta_\lambda(1+\sigtwo)$.
Then for every $w \in \R^d$
\begin{align}
    \Exp{\norm{\wstoc - w(\lambda)}^2} &\leq \frac{ d_{w,\lambda}}{\gamma_\lambda + t} \\
    \Exp{\norm{\vstoc{w} - \vopt{w}}^2} &\leq \frac{d_{v,\lambda}}{\gamma_\lambda + k} 
\end{align}
where 
\begin{align*}
    d_{w,\lambda} &:= \max \left\{\gamma_\lambda \norm{w(\lambda)}^2, \frac{\beta_\lambda^2\sigone}{\beta_\lambda(1-\q^2) - 1}\right\}, \\
    d_{v,\lambda} &:= \frac{\norm{\nabla_1 \fo (w,\lambda)}^2}{(1-\q)^2}\max \left\{\gamma_\lambda , \frac{2\beta_\lambda^2\LtPhi^2}{\beta_\lambda(1-\q^2) - 1}\right\}\\
    \sigone &:= 2\Var{\hat \Phi(w(\lambda), \lambda, \zeta)}.
\end{align*}

\end{restatable}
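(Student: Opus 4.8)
The plan is to observe that the two recursions \eqref{eq:inneralg} and \eqref{eq:backalg} are both instances of the abstract stochastic Krasnoselskii--Mann iteration \eqref{eq:inneralgo}, so that the whole statement reduces to two applications of Theorem~\ref{th:innerconvergence}. For \eqref{eq:inneralg} I would take $\T = \Phi(\cdot,\lambda)$ and $\hat\T(\cdot,\zeta)=\hat\Phi(\cdot,\lambda,\zeta)$, whose unique fixed point is $w(\lambda)$. For \eqref{eq:backalg} I would take $\hat\T(\cdot,\hat\zeta)=\hat\Psi_w(\cdot,\lambda,\hat\zeta)$ and $\T(\cdot)=\Exp{\hat\Psi_w(\cdot,\lambda,\hat\zeta)}$, i.e.\ the affine map $v\mapsto \jac_1\Phi(w,\lambda)^\top v + \grad_1\fo(w,\lambda)$; by Lemma~\ref{lm:matrixinverse} the matrix $I-\jac_1\Phi(w,\lambda)^\top$ is invertible, so this map has the unique fixed point $\vopt{w}$ of \eqref{eq:linsys}. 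In both cases the iteration starts from the deterministic point $0$, so the initial mean square error equals $\norm{w(\lambda)}^2$ and $\norm{\vopt{w}}^2$ respectively.

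First I would verify Assumption~\ref{ass:inner} in each case. Condition~\ref{ass:cont} holds with $q=\q$: for the lower level it is \Cref{ass:aid}\ref{ass:contraction} directly, and for the linear system it follows from $\norm{\jac_1\Phi(w,\lambda)^\top}=\norm{\jac_1\Phi(w,\lambda)}\le\q$. Condition~\ref{ass:tildeexp} is \Cref{ass:phiestimator}\ref{eq:expjacexp} for the lower level and, for the linear system, follows from the unbiasedness of the Jacobian in \Cref{ass:phiestimator}. For condition~\ref{ass:tildevar} I would invoke Theorem~\ref{th:liptovar}: the maps $\hat\Phi(\cdot,\lambda,z)$ and $\hat\Psi_w(\cdot,\lambda,z)$ are both Lipschitz with constant $\LtPhi$ --- for the latter because its linear part is $\jac_1\hat\Phi(w,\lambda,z)^\top$, whose norm is at most $\LtPhi$ by \Cref{ass:phiestimator}\ref{eq:expjacexp_iii}. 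Theorem~\ref{th:liptovar} then yields condition~\ref{ass:tildevar} with $\sigma_2 = 2(\LtPhi^2+\q^2)/(1-\q)^2 = \sigtwo$ in both cases; this common value is exactly why the single requirement $\gamma_\lambda\ge\beta_\lambda(1+\sigtwo)$ covers both subproblems.

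With Assumption~\ref{ass:inner} established, the almost sure convergence is the first part of Theorem~\ref{th:innerconvergence}, after checking that $\sz{i}=\beta_\lambda/(\gamma_\lambda+i)$ satisfies $\sum_i\sz{i}=\infty$, $\sum_i\sz{i}^2<\infty$ and $\sz{i}\le\sz{0}\le 1/(1+\sigtwo)$. The rate then comes from the second part of Theorem~\ref{th:innerconvergence}, which bounds the mean square error by $c/(\gamma_\lambda+t)$ (resp.\ $c/(\gamma_\lambda+k)$) with $c=\max\{\gamma_\lambda m_0,\ \beta_\lambda^2\sigma_1/(\beta_\lambda(1-\q^2)-1)\}$, where $m_0$ is the initial mean square error. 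For the lower level, Theorem~\ref{th:liptovar} gives $\sigma_1=2\Var{\hat\Phi(w(\lambda),\lambda,\zeta)}=\sigone$ and $m_0=\norm{w(\lambda)}^2$, which reproduces $d_{w,\lambda}$ verbatim.

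The only genuinely non-routine step, and the one I expect to require the most care, is turning the abstract $c$ for the linear system into $d_{v,\lambda}$. Two estimates are needed. First, from \eqref{eq:linsys} and the resolvent bound $\norm{(I-\jac_1\Phi(w,\lambda)^\top)^{-1}}\le 1/(1-\q)$ of Lemma~\ref{lm:matrixinverse}, one gets $\norm{\vopt{w}}^2\le \norm{\grad_1\fo(w,\lambda)}^2/(1-\q)^2$, which handles the first term in the maximum. Second, since $\grad_1\fo(w,\lambda)$ is deterministic, $\Var{\hat\Psi_w(\vopt{w},\lambda,\hat\zeta)}=\Var{\jac_1\hat\Phi(w,\lambda,\hat\zeta)^\top\vopt{w}}\le \Exp{\norm{\jac_1\hat\Phi(w,\lambda,\hat\zeta)}^2}\norm{\vopt{w}}^2\le \LtPhi^2\norm{\vopt{w}}^2$, using $\Var{X}\le\Exp{\norm{X}^2}$, submultiplicativity of the spectral norm, and \Cref{ass:phiestimator}\ref{eq:expjacexp_iii}; hence $\sigma_1=2\Var{\hat\Psi_w(\vopt{w},\lambda,\hat\zeta)}\le 2\LtPhi^2\norm{\grad_1\fo(w,\lambda)}^2/(1-\q)^2$. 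Substituting both bounds into $c$ and factoring $\norm{\grad_1\fo(w,\lambda)}^2/(1-\q)^2$ out of the maximum gives exactly $d_{v,\lambda}$; the remaining effort is the bookkeeping of keeping the maximum intact through this factorization and matching the stated constants.
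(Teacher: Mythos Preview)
Your proposal is correct and follows essentially the same route as the paper: both recognize \eqref{eq:inneralg} and \eqref{eq:backalg} as instances of the abstract iteration \eqref{eq:inneralgo}, verify Assumption~\ref{ass:inner} via Theorem~\ref{th:liptovar} (yielding the common $\sigma_2=\sigtwo$), and then apply Theorem~\ref{th:innerconvergence} twice, with the only nontrivial computation being the bound $\Var{\hat\Psi_w(\vopt{w},\lambda,\hat\zeta)}\le \LtPhi^2\norm{\grad_1\fo(w,\lambda)}^2/(1-\q)^2$ for the linear-system $\sigma_1$. The paper obtains that bound by writing the variance as $\Exp{\norm{(\jac_1\hat\Phi-\jac_1\Phi)\vopt{w}}^2}\le\norm{\vopt{w}}^2\Var{\jac_1\hat\Phi}\le\norm{\vopt{w}}^2\LtPhi^2$ and then invoking Lemma~\ref{lm:normv}, which is the same estimate you reach via $\Var{X}\le\Exp{\norm{X}^2}$.
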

In a similar manner, using Theorem~\ref{th:innerconvergence2}, one can have rates of convergence also using a constant stepsize, although in that case, we do not have asymptotic convergence of the iterates.

\begin{remark}
\label{rmk:20201014a}
For the setting considered in \Cref{th:firstconv}
the bound given in Theorem~\ref{thm:finalbound} yields
\begin{equation}
        \MSE \leq 2\frac{\mtwo \Bo^2}{(1-\q)^2} + O\left(\frac{1}{\gamma_\lambda + t} + \frac{1}{\gamma_\lambda + k}\right).
\end{equation}
where $\MSE =  \Exp[\big]{\norm{\hat \grad f(\lambda) - \grad f(\lambda)}^2}$
\end{remark}

Crucially, typical bilevel problems 
in machine learning come in the form of \eqref{mainprobopt},
where the lower-level objective $\ell (w, \lambda) := \EE[\hat{\ell}(w,\lambda,\zeta)]$ is Lipschitz smooth and strongly convex w.r.t.~$w$.
In this scenario, there is a vast amount of stochastic methods in literature (see e.g. \cite{bottou2018optimization} for a survey) achieving convergence rates in expectations of the kind  provided in Theorem~\ref{th:firstconv} or even better.
For example, when $\ell (w, \lambda)$  has a finite sum structure, as in the case of the regularized empirical risk, exploiting variance reduction techniques makes the convergence rate  $\rf(t)$ linear. In this situation the following assumption is made.

\begin{assumption}\label{ass:gradellestimator}
$\hat\ell$ is twice differentiable w.r.t. the first two arguments and such that, for every $w \in \R^d$, $\lambda \in \Lambda$, $j \in \{1,2\}$
\begin{equation*}\label{eq:exphessexp}
\begin{aligned}
    \EE[\nabla_1 \hat{\ell}(w,\lambda,\zeta)] = \nabla \ell(w,\lambda), \\
    \Exp{\grad_{1j}^2 \hat \ell(w, \lambda, \zeta)} =  \grad_{1j}^2  \ell(w, \lambda).
    \end{aligned}
\end{equation*}
Moreover, for every $w \in \R^d$, $\lambda \in \Lambda$ and $x \in \CZ$ there exists $L_\ell, m_\ell \geq 0$ such that:
\begin{equation*}\label{eq:tildelipvarloss}
    \norm{\grad_{11}^2 \hat \ell(w, \lambda, \zeta)} \leq  L_\ell \quad
    \Var{ \grad_{12}^2 \hat\ell(w,\lambda, \zeta)}
    \leq m_\ell.
\end{equation*}
\end{assumption}
If $\hat \ell$ satisfies
Assumption~\ref{ass:gradellestimator}, then $\hat \Phi (w,\lambda) = w - \alpha_\lambda \grad_1 \hat \ell(w,\lambda) $ satisfies
Assumption~\ref{ass:phiestimator}.
In addition, since $I -\jac_1\Phi(w,\lambda) =\alpha_\lambda \grad_1^2 \ell(w, \lambda)$ is a positive definite matrix, we have that the solution to the linear system \eqref{eq:linsys} can be written as
\begin{equation*}\label{eq:backminimization}
\begin{aligned}
    \vopt{w} &= \arg\min_v g(v; w, \lambda)\\
    g(v; w, \lambda) &:=  \frac{\alpha_\lambda}{2}v^\top  \grad_1^2 \ell(w, \lambda)v -v^\top \grad_1 E(w,\lambda),
    \end{aligned}
\end{equation*}
and $g(v;w,\lambda) = \EE[\hat g(v; w, \lambda, \zeta)]$, where
\begin{equation*}
    \hat g(v; w, \lambda, \zeta) = \frac{\alpha_\lambda }{2} v^\top \grad_1^2 \hat \ell(w, \lambda, \zeta)v -v^\top \grad_1 E(w,\lambda).
\end{equation*}
We can easily see that $g(\cdot; w, \lambda)$ is a strongly convex  quadratic function with Lipschitz smooth constant and modulus of strong convexity at least as good as the ones of $\alpha_\lambda \ell(\cdot, \lambda)$. Thus, we can solve both  subproblems in \Cref{algo1} using  the same stochastic optimization algorithm,
achieving the same theoretical performance for both rates $\rf(t)$ and $\hrf(k)$. %

We finally observe that the methods in \eqref{eq:inneralg}-\eqref{eq:backalg} can be rewritten as
\begin{align*}
    \wstoc[t+1] &: = \wstoc[t] - \sz{t} \alpha_\lambda  \grad_1 \hat \ell (\wstoc[t], \lambda, \zeta_t) %
    \\
    \vstoc[k+1]{w} &: = \vstoc[k]{w}  - \sz{k} \grad_1 \hat g(v; w, \lambda, \hat \zeta_k)
\end{align*}
which correspond to  SGD on $\alpha_\lambda \ell(w,\lambda)$ and on $g(v; w, \lambda)$ respectively.

\section{Experiments}\label{se:exp}

\newcommand{\figdim}{.245}
\newcommand{\figdimf}{.267}
\begin{figure*}[th]
\vspace{-.2truecm}
    \centering
    \hspace{-.2truecm}
    \includegraphics[width=\figdimf\textwidth]{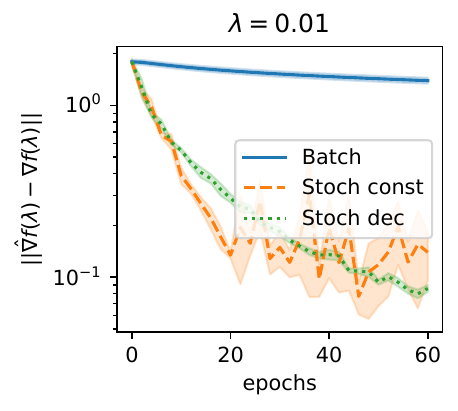}
    \hspace{-.25truecm}
    \includegraphics[width=\figdim\textwidth]{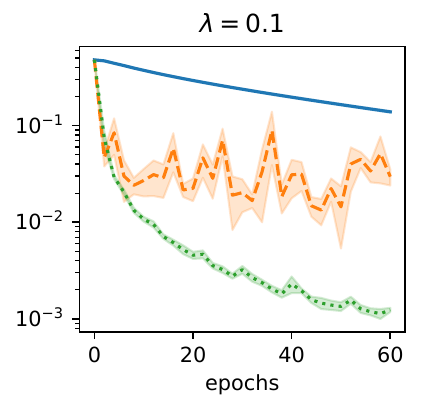}
    \hspace{-.25truecm}
    \includegraphics[width=\figdim\textwidth]{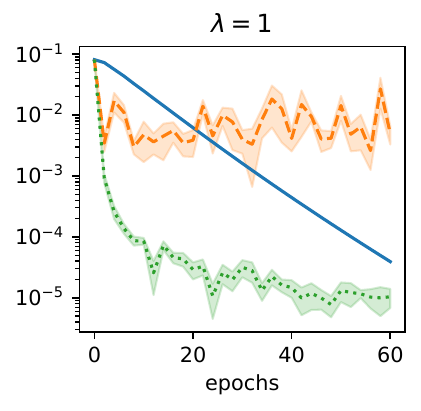}
    \hspace{-.25truecm}
    \includegraphics[width=\figdim\textwidth]{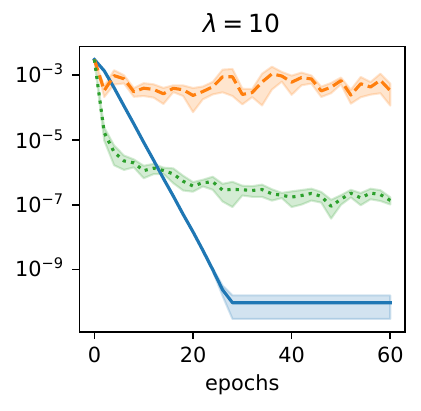}
    \caption{\small Experiment with a single regularization parameter. Convergence of three variants of SID for 4 choices of the regularization hyperparameter $\lambda \in \R_{++}$. %
    Here, 2 epochs refer, in the Batch version, to
    one iteration on the lower-level problem plus one iteration on the linear system, whereas, in the Stochastic versions, they refer to $100$ iterations on the lower-level problem plus $100$ iterations on the linear system.
    The plot shows mean (solid lines) and std (shaded regions) over 5 runs, which vary the train/validation splits and, for the stochastic methods, the order and composition of the  minibatches. \small}
    \label{fig:one}
\end{figure*}

\begin{figure}[h]
\vspace{-.2truecm}

    \centering
    \includegraphics[width=0.45\textwidth]{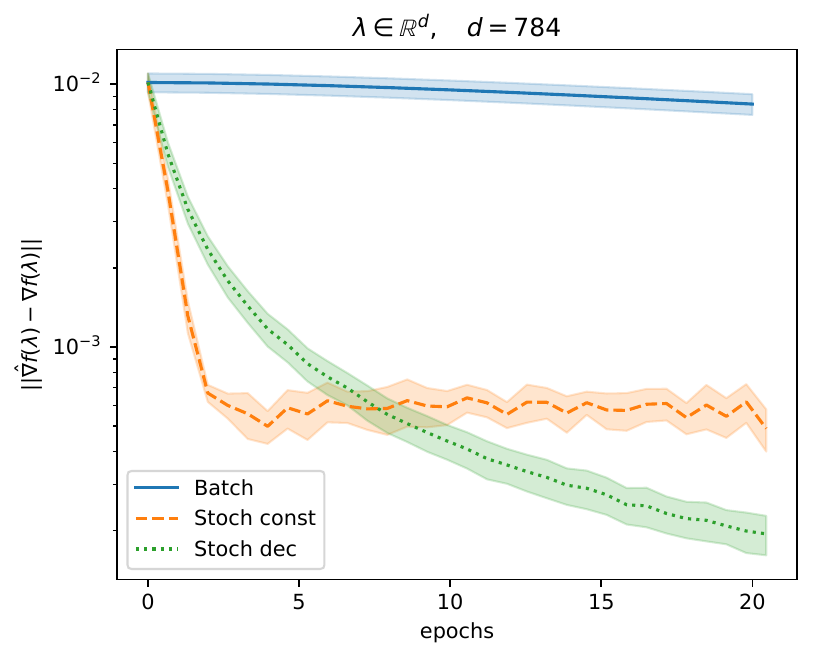}
    \caption{\small Experiment with multiple regularization parameters. Convergence of three variants of SID for several choices of the regularization hyperparameter $\lambda \in \R^d_{++}$. The plot shows mean (solid lines) and std (shaded regions) over 10 runs. For each run, $\lambda_i = e^{\epsilon_i}$, where $\epsilon_i \sim \mathcal{U}[-2, 2]$ for every $i \in \{1,\dots, d\}$. Epochs are defined as in \Cref{fig:one}.} 
    \label{fig:multi}
\end{figure}

In this section we present preliminary experiments evaluating the effectiveness of the SID method for estimating the hypergradient of $f$ in a real data scenario. In \Cref{se:addexp} we provide additional experiments on more realistc scenarios and with additional SID variants.
We focus on a hyperparameter optimization problem where we want to optimize the regularization parameter(s) in regularized logistic regression. Specifically, we consider a binary classification problem
with the aim to distinguish between odd and even numbers in the MNIST dataset.  Referring to problem~$\eqref{mainprobopt}$, we set

\newcommand{\nt}{n_{\text{tr}}}
\newcommand{\nv}{n_{\text{val}}}
\begin{align*}
    f(\lambda) &= \sum_{i=\nt+1}^{\nt+\nv} \psi(y_i x_i^\top w(\lambda)), \\
    w(\lambda) &= \arg\min_{w \in \R^d} \sum_{i=1}^{\nt} \psi(y_i x_i^\top w) + R(w, \lambda),
\end{align*}

where $\psi(u) = \log (1 + e^{-u})$ is the logistic loss, $(x_i, y_i)_{1 \leq i \leq \nt+\nv} \in (\R^p \times \{0,1\})^{\nt+\nv}$ are training and validation examples, and $R(w,\lambda)$ is set according to either of the two situations below
\begin{itemize}
    \item \textit{one regularization parameter}:
    
    $R(w,\lambda) =  \frac{\lambda}{2}\norm{w}^2$, $\lambda \in \R_{++}$
    \item \textit{multiple regularization parameters}: 
    
    $R(w, \lambda) = \frac{1}{2} w^\top {\rm diag}(\lambda) w$ where ${\rm diag}(\lambda)$ is the diagonal matrix formed by the elements of $\lambda \in \R^d_{++}$.
\end{itemize}
We set $\nt=\nv = 5000$, i.e., we pick 10000 examples from the MNIST training set and we group them into a training and a validation set of equal size. We set $\Phi$ %
to be the full gradient descent map on the lower-level objective, with optimal choice for the stepsize\footnote{We set the stepsize equal to two divided by the sum of the Lipschitz and strong convexity constants of the lower-level objective. This gives the optimal contraction rate $q_\lambda$.}. This map is a contraction because the lower objective is strongly convex and Lipschitz-smooth.
We test the following three variants of SID
(\Cref{algo1}), where we always solve the lower-level problem with $t$ iterations of the procedure~\eqref{eq:inneralg} and the linear system with $k=t$ iterations of the algorithm~\eqref{eq:backalg}. However, we make different choices for $\eta_{\lambda, t}$ and the estimator $\hat \Phi$.

{\bf Batch.} This variant of Algorithm~\ref{algo1} corresponds to the (deterministic) gradient descent algorithm with constant stepsize. We set $t_{\mathrm{Batch}} = k_{\mathrm{Batch}} = 30$ and, for every $t=0,\dots, t_{\mathrm{Batch}}$, 
$\eta_{\lambda,t} = \eta_{\lambda,k} = 1$ and $\hat \Phi(w, \lambda, \zeta) = \Phi(w,\lambda)$. 

{\bf Stoch const.} For this variant,  $\hat \Phi(w, \lambda, \zeta)$ corresponds to one step of stochastic gradient descent on a randomly sampled minibatch of 50 examples.  Thus, $t_{\mathrm{Stoch\ const}} = t_{\mathrm{Batch}} \times 100$, $k_{\mathrm{Stoch\ const}} = k_{\mathrm{Batch}} \times 100$, and
we pick $\eta_{\lambda, t} = \eta_{\lambda,k} = 1$,
for $t=0,\dots, t_{\mathrm{Stoch\ const}}$.

{\bf Stoch dec.} For this variant the estimator $\hat \Phi$ is the same as for the \emph{Stoch const} strategy, but we use decreasing stepsizes. More precisely, $\eta_{\lambda, t} = \eta_{\lambda,k} = \beta_\lambda/(\gamma_\lambda + t)$ with $\beta_\lambda = 2/(1-q_\lambda^2)$ and $\gamma_\lambda = \beta_\lambda$. Moreover, as before, $t_{\mathrm{Stoch\ dec}} = t_{\mathrm{Batch}} \times 100$, $k_{\mathrm{Stoch\ dec}} = k_{\mathrm{Batch}} \times 100$.

We note that the \emph{Batch} strategy is exactly the \textit{fixed point method} described by \cite{grazzi2020iteration}, which converges linearly to the true hypergradient. Moreover, for the stochastic versions, we can write $\hat \Phi(w,\lambda, \zeta) = \Phi_1(w,\zeta) + \Phi_2(w, \lambda)$, so that we are  in the case discussed in \Cref{rem:sig2} and hence, $\sigma_2 = \mtwo = 0$. In this situation, it follows from Remark~\ref{rmk:20201014a} that the \emph{Stoch dec} version of Algorithm~\ref{algo1} converges in expectation to the true hypergradient with a rate $O(1/(\gamma_\lambda + t))$,
whereas, according to Corollary~\ref{th:sgd} and Theorem~\ref{thm:finalbound}, the \emph{Stoch const} version can possibly approach the true hypergradient in a first phase (at linear rate), but ultimately might not converge to it.

In \Cref{fig:one,fig:multi} we show the squared error between the approximate and the true hypergradient ($\hat \nabla f(\lambda)$ and $\nabla f(\lambda)$ respectively) for the two regularization choices described above\footnote{Since for regularized logistic regression, the hypergradient is not available in closed form, we compute it by using the Batch version with $t=k=2000$ ($4000$ epochs in total).}.  In both figures we can see 
the effectiveness of the proposed SID method (and especially the \emph{Stoch dec} variant) against its deterministic version (AID) previously studied in \citep{grazzi2020iteration}.

\section{Conclusions and Future Work}

In this paper we studied a stochastic method for the approximation of the hypergradient in  bilevel problems defined through a fixed-point equation of a contraction mapping.
Specifically, we presented a stochastic version of the approximate implicit differentiation technique (AID), which is one of the most effective solutions for hypergradient computation as recently shown in \citep{grazzi2020iteration}. 
Our strategy (SID) estimates the hypergradient with the aid of two stochastic solvers in place of the deterministic solvers used in AID.  
We presented a formal description and a theoretical analysis of SID, ultimately providing a bound for the mean square error of the corresponding hypergradient estimator. As a byproduct of the analysis, we provided an extension of the SGD algorithm for stochastic fixed-point equations.
We have also conducted numerical experiments which confirm that using stochastic instead of deterministic solvers in SID can indeed yield a more accurate hypergradient approximation.

We believe that our analysis of stochastic fixed-point algorithms can be further extended to include variance reduction strategies and other advances commonly used for SGD. A good starting point for this extension can be the work by \cite{gorbunov2020unified}, which provides a unified theory for SGD methods in the strongly convex setting. 
Another promising direction would be the analysis of an overall bilevel optimization procedure using SID to approximate the hypergradient, which we have not addressed in the present work.

\section*{Acknowledgment}

We would like to thank Naoyuki Terashita for carefully reading the manuscript and pointing out a mistake (now fixed yielding the same result): we incorrectly used \Cref{lem:varprop}\ref{lem:varprop_i} on a matrix instead of a vector in the proof of \Cref{th:firstconv}.

\bibliographystyle{apalike}
\bibliography{ref}

\begin{thebibliography}{}

\bibitem[Ablin et~al., 2020]{ablin2020super}
Ablin, P., Peyr{\'e}, G., and Moreau, T. (2020).
\newblock Super-efficiency of automatic differentiation for functions defined
  as a minimum.
\newblock {\em arXiv preprint arXiv:2002.03722}.

\bibitem[Almeida, 1987]{almeida1987learning}
Almeida, L.~B. (1987).
\newblock A learning rule for asynchronous perceptrons with feedback in a
  combinatorial environment.
\newblock In {\em Proceedings, 1st First International Conference on Neural
  Networks}, volume~2, pages 609--618. IEEE.

\bibitem[Andrychowicz et~al., 2016]{andrychowicz2016learning}
Andrychowicz, M., Denil, M., Gomez, S., Hoffman, M.~W., Pfau, D., Schaul, T.,
  Shillingford, B., and De~Freitas, N. (2016).
\newblock Learning to learn by gradient descent by gradient descent.
\newblock In {\em Advances in neural information processing systems}, pages
  3981--3989.

\bibitem[Bottou et~al., 2018]{bottou2018optimization}
Bottou, L., Curtis, F.~E., and Nocedal, J. (2018).
\newblock Optimization methods for large-scale machine learning.
\newblock {\em Siam Review}, 60(2):223--311.

\bibitem[Couellan and Wang, 2016]{couellan2016convergence}
Couellan, N. and Wang, W. (2016).
\newblock On the convergence of stochastic bi-level gradient methods.
\newblock {\em Optimization}.

\bibitem[Denevi et~al., 2019a]{denevi2019learning}
Denevi, G., Ciliberto, C., Grazzi, R., and Pontil, M. (2019a).
\newblock Learning-to-learn stochastic gradient descent with biased
  regularization.
\newblock {\em arXiv preprint arXiv:1903.10399}.

\bibitem[Denevi et~al., 2019b]{denevi2019online}
Denevi, G., Stamos, D., Ciliberto, C., and Pontil, M. (2019b).
\newblock Online-within-online meta-learning.
\newblock In {\em Advances in Neural Information Processing Systems}, pages
  13110--13120.

\bibitem[Elsken et~al., 2019]{elsken2019neural}
Elsken, T., Metzen, J.~H., and Hutter, F. (2019).
\newblock Neural architecture search: A survey.
\newblock {\em Journal of Machine Learning Research}, 20(55):1--21.

\bibitem[Finn et~al., 2017]{finn2017model}
Finn, C., Abbeel, P., and Levine, S. (2017).
\newblock Model-agnostic meta-learning for fast adaptation of deep networks.
\newblock In {\em Proceedings of the 34th International Conference on Machine
  Learning-Volume 70}, pages 1126--1135. JMLR. org.

\bibitem[Franceschi et~al., 2017]{franceschi2017forward}
Franceschi, L., Donini, M., Frasconi, P., and Pontil, M. (2017).
\newblock Forward and reverse gradient-based hyperparameter optimization.
\newblock In {\em Proceedings of the 34th International Conference on Machine
  Learning-Volume 70}, pages 1165--1173. JMLR. org.

\bibitem[Franceschi et~al., 2018]{franceschi2018bilevel}
Franceschi, L., Frasconi, P., Salzo, S., Grazzi, R., and Pontil, M. (2018).
\newblock Bilevel programming for hyperparameter optimization and
  meta-learning.
\newblock In {\em International Conference on Machine Learning}, pages
  1563--1572.

\bibitem[Ghadimi and Wang, 2018]{ghadimi2018approximation}
Ghadimi, S. and Wang, M. (2018).
\newblock Approximation methods for bilevel programming.
\newblock {\em arXiv preprint arXiv:1802.02246}.

\bibitem[Gorbunov et~al., 2020]{gorbunov2020unified}
Gorbunov, E., Hanzely, F., and Richt{\'a}rik, P. (2020).
\newblock A unified theory of sgd: Variance reduction, sampling, quantization
  and coordinate descent.
\newblock In {\em International Conference on Artificial Intelligence and
  Statistics}, pages 680--690. PMLR.

\bibitem[Grazzi et~al., 2020]{grazzi2020iteration}
Grazzi, R., Franceschi, L., Pontil, M., and Salzo, S. (2020).
\newblock On the iteration complexity of hypergradient computation.
\newblock {\em arXiv preprint arXiv:2006.16218}.

\bibitem[Liu et~al., 2018]{liu2018darts}
Liu, H., Simonyan, K., and Yang, Y. (2018).
\newblock Darts: Differentiable architecture search.
\newblock {\em arXiv preprint arXiv:1806.09055}.

\bibitem[Lorraine et~al., 2019]{lorraine2019optimizing}
Lorraine, J., Vicol, P., and Duvenaud, D. (2019).
\newblock Optimizing millions of hyperparameters by implicit differentiation.
\newblock {\em arXiv preprint arXiv:1911.02590}.

\bibitem[Maclaurin et~al., 2015]{maclaurin2015gradient}
Maclaurin, D., Duvenaud, D., and Adams, R. (2015).
\newblock Gradient-based hyperparameter optimization through reversible
  learning.
\newblock In {\em International Conference on Machine Learning}, pages
  2113--2122.

\bibitem[Nguyen et~al., 2019]{nguyen2019new}
Nguyen, L.~M., Nguyen, P.~H., Richt{\'a}rik, P., Scheinberg, K.,
  Tak{\'a}{\v{c}}, M., and van Dijk, M. (2019).
\newblock New convergence aspects of stochastic gradient algorithms.
\newblock {\em Journal of Machine Learning Research}, 20(176):1--49.

\bibitem[Pedregosa, 2016]{pedregosa2016hyperparameter}
Pedregosa, F. (2016).
\newblock Hyperparameter optimization with approximate gradient.
\newblock In {\em International Conference on Machine Learning}, pages
  737--746.

\bibitem[Pineda, 1987]{pineda1987generalization}
Pineda, F.~J. (1987).
\newblock Generalization of back-propagation to recurrent neural networks.
\newblock {\em Physical review letters}, 59(19):2229.

\bibitem[Rajeswaran et~al., 2019]{rajeswaran2019meta}
Rajeswaran, A., Finn, C., Kakade, S.~M., and Levine, S. (2019).
\newblock Meta-learning with implicit gradients.
\newblock In {\em Advances in Neural Information Processing Systems}, pages
  113--124.

\bibitem[Robbins and Siegmund, 1971]{ROB1971}
Robbins, H. and Siegmund, D. (1971).
\newblock A convergence theorem for non negative almost supermartingales and
  some applications.
\newblock {\em Optimizing Methods in Statistics}, pages 233--257.

\bibitem[Scarselli et~al., 2008]{scarselli2008graph}
Scarselli, F., Gori, M., Tsoi, A.~C., Hagenbuchner, M., and Monfardini, G.
  (2008).
\newblock The graph neural network model.
\newblock {\em IEEE Transactions on Neural Networks}, 20(1):61--80.

\bibitem[Zhou et~al., 2019]{zhou2019efficient}
Zhou, P., Yuan, X., Xu, H., Yan, S., and Feng, J. (2019).
\newblock Efficient meta learning via minibatch proximal update.
\newblock In {\em Advances in Neural Information Processing Systems}, pages
  1534--1544.

\end{thebibliography}

\newpage
\appendix

\newtheorem{innercustomthm}{Theorem}
\newenvironment{customthm}[1]
  {\renewcommand\theinnercustomthm{#1}\innercustomthm}
  {\endinnercustomthm}

\onecolumn
\appendix

\vspace{3ex}
\begin{center}
{\Large\textbf{Supplementary Material}}
\end{center}
\vspace{2ex}

The supplementary material is organized as follows. \Cref{se:mainproofs} contains the proofs for the results presented in the paper. In \Cref{se:standardlemams} we  provide statements and proofs for some standard lemmas which are instrumental for the main results.
For convenience of the reader, before each proof we also restate the corresponding theorem. Finally, in \Cref{se:addexperiments} we present additional experiments.

\section{Main Proofs}\label{se:mainproofs}

\begin{lemma}\label{lm:normv}
Let \Cref{ass:aid} be satisfied. Then, for every $w \in \R^d$
\begin{equation}
\begin{aligned}
\norm{\vopt{w}} 
\leq \norm{(I - \jac_1\Phi(w, \lambda)^\top)^{-1}} \norm{\grad_1 \fo(w,\lambda)} 
\leq \frac{\Bo}{1 - \q}.
\end{aligned} 
\end{equation}
\end{lemma}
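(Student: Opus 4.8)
The plan is to bound $\norm{\vopt{w}}$ by factoring the expression \eqref{eq:linsys} through the submultiplicativity of the operator norm, and then to control each factor separately using \Cref{ass:aid}. Since by definition $\vopt{w} = (I - \jac_1 \Phi(w,\lambda)^\top)^{-1} \grad_1 \fo(w,\lambda)$, the first inequality is immediate from $\norm{Ax} \leq \norm{A}\norm{x}$ applied with $A = (I - \jac_1\Phi(w,\lambda)^\top)^{-1}$ and $x = \grad_1 \fo(w,\lambda)$.

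For the second inequality I would bound the two factors as follows. The norm $\norm{\grad_1 \fo(w,\lambda)}$ is controlled by the Lipschitz constant of $\fo(\cdot,\lambda)$: under \Cref{ass:aid}\ref{ass:lipE}, $\fo(\cdot,\lambda)$ is $\Bo$-Lipschitz, which gives $\norm{\grad_1 \fo(w,\lambda)} \leq \Bo$ for every $w$. To bound the operator norm of the inverse, I would invoke the contraction assumption \Cref{ass:aid}\ref{ass:contraction}, namely $\norm{\jac_1\Phi(w,\lambda)} \leq \q < 1$ (equivalently $\norm{\jac_1\Phi(w,\lambda)^\top} \leq \q$, since the spectral norm is invariant under transposition). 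The standard Neumann series / perturbation estimate then yields
\begin{equation*}
\norm{(I - \jac_1\Phi(w,\lambda)^\top)^{-1}} \leq \frac{1}{1 - \q}.
\end{equation*}
This is precisely the kind of statement I expect to be packaged in the auxiliary result the paper refers to as \texttt{Lemma}~\ref{lm:matrixinverse} (cited just after \eqref{eq:linsys} to guarantee invertibility), so I would simply cite it rather than re-derive the Neumann bound.

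Combining the two factor bounds gives $\norm{\vopt{w}} \leq \Bo/(1-\q)$, which is the claim. I do not anticipate a genuine obstacle here: the argument is a routine chain of operator-norm inequalities. The only point requiring a little care is making sure the transpose does not change the spectral norm (it does not) and that the Lipschitz constant of $\fo(\cdot,\lambda)$ indeed bounds the gradient norm uniformly in $w$, which holds because differentiability plus global Lipschitzness forces $\norm{\grad_1 \fo(w,\lambda)} \leq \Bo$ pointwise.
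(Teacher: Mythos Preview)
Your proposal is correct and takes essentially the same approach as the paper: the paper's proof simply cites the definition \eqref{eq:linsys} together with Assumptions~\ref{ass:aid}\ref{ass:contraction} and \ref{ass:aid}\ref{ass:lipE}, which is exactly the argument you have spelled out (using Lemma~\ref{lm:matrixinverse} for the Neumann-type bound on the inverse). Your write-up is just a more detailed version of the same reasoning.
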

\begin{proof}
It follows from \eqref{eq:linsys} (the definition of $\vopt{w}$) and  Assumptions~\ref{ass:aid}\ref{ass:contraction} and \ref{ass:aid}\ref{ass:lipE}
\end{proof}

\begin{lemma}\label{lm:lipv}
Let \Cref{ass:aid} be satisfied. Then, for every $w \in \R^d$
\begin{equation}
    \norm{\vopt{w(\lambda)} -\vopt{w}} \leq  \left(\frac{\rhow\Bo}{(1-\q)^2} + \frac{\Low}{1-\q} \right)\norm{w(\lambda) - w}.
\end{equation}
\end{lemma}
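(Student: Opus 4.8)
The plan is to view $\vopt{w}$ as the product of the resolvent $M(w)^{-1} := (I - \jac_1\Phi(w,\lambda)^\top)^{-1}$ with the vector $\grad_1\fo(w,\lambda)$, and to control the difference at the two points $w(\lambda)$ and $w$ by a standard add-and-subtract argument. Concretely, I would write
\[
\vopt{w(\lambda)} - \vopt{w} = M(w(\lambda))^{-1}\big(\grad_1\fo(w(\lambda),\lambda) - \grad_1\fo(w,\lambda)\big) + \big(M(w(\lambda))^{-1} - M(w)^{-1}\big)\grad_1\fo(w,\lambda),
\]
and then bound the two summands separately using submultiplicativity of the norm.

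For the first summand I would use that, by \Cref{ass:aid}\ref{ass:contraction} together with \Cref{lm:matrixinverse}, each resolvent satisfies $\norm{M(w)^{-1}} \leq 1/(1-\q)$, while \Cref{ass:aid}(iii) gives the Lipschitz estimate $\norm{\grad_1\fo(w(\lambda),\lambda) - \grad_1\fo(w,\lambda)} \leq \Low\norm{w(\lambda)-w}$. Multiplying these yields a bound of $\tfrac{\Low}{1-\q}\norm{w(\lambda)-w}$ for the first term.

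The key step is the second summand, which I would handle through the resolvent identity
\[
M(w(\lambda))^{-1} - M(w)^{-1} = M(w(\lambda))^{-1}\big(\jac_1\Phi(w(\lambda),\lambda)^\top - \jac_1\Phi(w,\lambda)^\top\big)M(w)^{-1}.
\]
Taking norms, the two resolvent factors each contribute $1/(1-\q)$; the Lipschitz continuity of $\jac_1\Phi(\cdot,\lambda)$ from \Cref{ass:aid}(ii) contributes $\rhow\norm{w(\lambda)-w}$ (transposition does not change the spectral norm); and the remaining factor obeys $\norm{\grad_1\fo(w,\lambda)}\leq\Bo$ by the Lipschitz continuity of $\fo(\cdot,\lambda)$ in \Cref{ass:aid}\ref{ass:lipE}. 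This gives $\tfrac{\rhow\Bo}{(1-\q)^2}\norm{w(\lambda)-w}$ for the second term, and summing the two bounds reproduces the claimed constant. I expect no genuine obstacle beyond correctly applying the resolvent identity and tracking the norm factors; the whole argument is a routine first-order perturbation estimate.
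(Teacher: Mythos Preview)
Your proposal is correct and follows essentially the same route as the paper: an add-and-subtract decomposition together with the resolvent identity $A_1^{-1}-A_2^{-1}=A_1^{-1}(A_2-A_1)A_2^{-1}$, then \Cref{lm:matrixinverse} and the Lipschitz constants from \Cref{ass:aid}. The only cosmetic difference is that the paper pivots around $M(w)^{-1}\grad_1\fo(w(\lambda),\lambda)$ (and hence bounds $\norm{\grad_1\fo(w(\lambda),\lambda)}\leq\Bo$) whereas you pivot around $M(w(\lambda))^{-1}\grad_1\fo(w,\lambda)$; both choices lead to the same constant.
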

\begin{proof}
Let $A_1 := (I- \jac_1 \Phi(w(\lambda), \lambda)^\top)$ and $A_2 = (I- \jac_1 \Phi(w, \lambda)^\top)$.
Then it follows from \Cref{lm:matrixinverse} that
\begin{align*}
    \norm{\vopt{w(\lambda)} -\vopt{w}}
    &\leq \norm{\grad_1 \fo(w(\lambda), \lambda)}\norm{A_1^{-1} - A_2^{-1}} + \Low \norm{A_2^{-1}}  \norm{w(\lambda) - w} \\
    &\leq \norm{\grad_1 \fo(w(\lambda), \lambda)}\norm{A_1^{-1}(A_2 -  A_1)A_2^{-1}} + \frac{ \Low}{1-\q} \norm{w(\lambda) - w} \\
    &\leq \left(\frac{\rhow}{(1-\q)^2}\norm{\grad_1 \fo(w(\lambda),\lambda)} + \frac{\Low}{1-\q} \right)\norm{w(\lambda) - w}.
\end{align*}
Moreover, Assumption~\ref{ass:aid} yields that $\norm{\grad_1 \fo(w(\lambda),\lambda)} \leq \Bo$.
Hence the statement follows.
\end{proof}

\subsection{Proofs of Section~\ref{se:MSE}}

\begin{customthm}{3.1}
Suppose that Assumptions~\ref{ass:aid},\ref{ass:phiestimator}, and \ref{ass:innerbackrates}
are satisfied. Let $\lambda \in \Lambda$, $t,k \in \N$ and set
\begin{equation*}
\hat \Delta_w := \norm{\wstoc - w(\lambda)},\ 
\LPhi := \norm{\jac_2 \Phi(w(\lambda), \lambda)},\  
    c_{1,\lambda} = \Lol + \frac{\Low\LPhi + \rhol\Bo}{1-\q} + \frac{\rhow\Bo\LPhi}{(1-\q)^2}.
\end{equation*}
Then the following hold.
\begin{enumerate}[label={\rm(\roman*)}]
\item
$\norm[\big]{ \Exp{\hat \grad f(\lambda) \given \wstoc}   - \grad f(\lambda)}\leq c_{1,\lambda}\hat \Delta_w
+ \LPhi \sqrt{\hrf(k)} + \rhol\hat \Delta_w \sqrt{\hrf(k)}$.
\item
$\norm{\EE[\hat \grad f(\lambda)] - \grad f(\lambda)}\leq c_{1,\lambda} \sqrt{\rf(t)} + \LPhi  \sqrt{\hrf(k)} + \rhol \sqrt{\rf(t)}\sqrt{\hrf(k)}.$
\end{enumerate}
\end{customthm}
\begin{proof}
\ref{th:boundbias_i}:
Using the definition of the approximate hypergradient and the fact that $\zeta$ nnd $\vstoc{\wstoc}$
are independent random variables, we get
\begin{equation*}
    \Exp{\hat \grad f(\lambda) \given \wstoc} = \grad_2 \fo(\wstoc, \lambda) + \jac_2 \Phi(\wstoc, \lambda)^\top \Exp{\vstoc{\wstoc}\given \wstoc}.
\end{equation*}
Consequently, recalling \eqref{eq:gradfv2}, we have,
\begin{align}
\nonumber\norm[\big]{ \Exp{\hat \grad f(\lambda) \given \wstoc}   - \grad f(\lambda)} 
\nonumber&\leq \norm{\grad_2 \fo(w(\lambda), \lambda) - \grad_2 \fo(\wstoc, \lambda)} \\[1ex]
\nonumber&\qquad+ \big\lVert\jac_2 \Phi(w(\lambda), \lambda)^\top \vopt{w(\lambda)} 
- \jac_2 \Phi(\wstoc, \lambda)^\top \Exp{\vstoc{\wstoc}\given \wstoc} \big\rVert\\[1ex]
\nonumber&\leq \norm{\grad_2 \fo(w(\lambda), \lambda) - \grad_2 \fo(\wstoc, \lambda)} \\[1ex]
\nonumber&\qquad+ \norm{\jac_2 \Phi(w(\lambda), \lambda)}\norm{\vopt{w(\lambda)}
     - \Exp{\vstoc{\wstoc}\given \wstoc}} \\[1ex]
\label{eq:20200601d}    &\qquad+ \norm{\jac_2 \Phi(w(\lambda), \lambda) - \jac_2 \Phi(\wstoc, \lambda)} \norm{\Exp{\vstoc{\wstoc}\given \wstoc}}.
\end{align}
Now, concerning the term $\norm{\vopt{w(\lambda)} - \Exp{\vstoc{\wstoc} \given \wstoc}}$ in the above inequality,
we have
\begin{align}
\nonumber\lVert \vopt{w(\lambda)} - &\Exp{\vstoc{\wstoc} \given \wstoc}\rVert \\[1ex]
\label{eq:20200601b}&\leq \norm{\vopt{w(\lambda)} - \vopt{\wstoc}} + \norm{\vopt{\wstoc} - \Exp{\vstoc{\wstoc}\given \wstoc}}.
\end{align}
Moreover, using  Jensen inequality and \Cref{ass:innerbackrates} we obtain
\begin{align}
\nonumber \norm{\vopt{\wstoc} - \Exp{\vstoc{\wstoc}\given \wstoc}} &=
\sqrt{ \norm{\Exp{\vopt{\wstoc}  - \vstoc{\wstoc}\given \wstoc}}^2 }\\[1ex]
\nonumber    & \leq \sqrt{\Exp{\norm{\vopt{\wstoc}  - \vstoc{\wstoc}}^2 \given \wstoc}}\\[1ex]
\label{eq:20200601c}    & \leq \sqrt{\hrf(k)}.
\end{align}
Therefore, using \Cref{lm:lipv},  \eqref{eq:20200601b} yields
\begin{equation}
\label{eq:20200601e}
\norm{\vopt{w(\lambda)} - \Exp{\vstoc{\wstoc} \given \wstoc}} \leq 
\left(\frac{\rhow\Bo}{(1-\q)^2} + \frac{\Low}{1-\q} \right)\norm{w(\lambda) - \wstoc} + \sqrt{\hrf(k)}.
\end{equation}
In addition, it follows from \eqref{eq:20200601c} and \cref{lm:normv} that
\begin{align}
\label{eq:20200601f}
\nonumber     \norm{\Exp{\vstoc{\wstoc}\given \wstoc}} &\leq \norm{\vopt{\wstoc}} 
     + \norm{\vopt{\wstoc}- \Exp{\vstoc{\wstoc}\given \wstoc}} \\
     &\leq \frac{\Bo}{1-\q} + \sqrt{\hrf(k)}.
\end{align}
Finally, combining \eqref{eq:20200601d}, \eqref{eq:20200601e}, and \eqref{eq:20200601f}, and
using \Cref{ass:aid}, \ref{th:boundbias_i} follows.
Then, since
\begin{equation*}
    \norm{\EE[\hat\grad f(\lambda)] - \grad f(\lambda)} = \norm[\big]{ \Exp[\big]{ \Exp{\hat \grad f(\lambda)\given \wstoc}   - \grad f(\lambda)}}
   \leq  \Exp[\big]{ \norm[\big]{ \Exp{\hat \grad f(\lambda)\given \wstoc}   - \grad f(\lambda)}},
\end{equation*}
 item \ref{th:boundbias_ii} follows by taking the expectation in \ref{th:boundbias_i}
and using \Cref{ass:innerbackrates} and that 
$\Exp{\hat \Delta_w} = \sqrt{(\Exp{\hat \Delta_w})^2} 
\leq \sqrt{\Exp{\hat \Delta^2_w}} \leq \sqrt{\rf(t)}$.

\end{proof}
\begin{customthm}{3.2}
Suppose that Assumptions~\ref{ass:aid},\ref{ass:phiestimator}, and \ref{ass:innerbackrates}
are satisfied. Let $\lambda \in \Lambda$, $t,k \in \N$ and set $\LPhi := \norm{\jac_2 \Phi(w(\lambda), \lambda)}$. 
Then 
\begin{equation}
\Exp[\big]{\Var{\hat \grad f(\lambda) \!\given\! \wstoc}} \leq 2\frac{\mtwo\Bo^2}{(1-\q)^2} 
+ 2(\LPhi^2 + \mtwo)\hrf(k) 
+ 2\rhol^2 \rf(t)\hrf(k).
\end{equation}
\end{customthm}
\begin{proof}
Let $\Exptilde{\cdot} := \Exp{\,\cdot \given \wstoc}$ and  $\Vartilde{\cdot} : = \Var{\,\cdot \given \wstoc}$. 
Then,
\begin{align*}
\Vartilde{\hat \grad f(\lambda)} &= \Exptilde[\big]{\norm{\hat \grad f(\lambda) - \Exptilde{\hat \grad f(\lambda)}}^2} \\
&= \Exptilde[\big]{\norm{\jac_2 \Phi(\wstoc, \lambda)^\top\Exptilde{\vstoc{\wstoc}} - \jac_2 \hat \Phi(\wstoc, \lambda, \zeta)^\top \vstoc{\wstoc}}^2} \\
&\leq \norm{\jac_2 \Phi(\wstoc, \lambda)}^2 \Exptilde[\big]{\norm{\vstoc{\wstoc} - \Exptilde{\vstoc{\wstoc}}}^2} \\
&\quad+ \Exptilde[\big]{\norm{\vstoc{\wstoc}}}^2\Exptilde[\big]{\norm{\jac_2 \hat \Phi(\wstoc, \lambda, \zeta)- \jac_2 \Phi(\wstoc, \lambda)}}^2.
\end{align*}
where for the last inequality we used that $\zeta \indep \vstoc{\wstoc}\given \wstoc$ and, in virtue of 
Lemma~\ref{lemA4},  that
\begin{equation*}
\Exptilde[\big]{(\vstoc{\wstoc} - \Exptilde{\vstoc{\wstoc}})^\top \jac_2 \Phi(\wstoc, \lambda)
    (\jac_2 \hat \Phi(\wstoc, \lambda, \zeta)- \jac_2 \Phi(\wstoc, \lambda))^\top \vstoc{\wstoc}} = 0.
\end{equation*}
In the following, we will bound each term of the inequality in order.
\begin{align*}
    \norm{\jac_2 \Phi(\wstoc, \lambda)}^2 &= \norm{\jac_2 \Phi(\wstoc, \lambda) \mp \jac_2 \Phi(w(\lambda), \lambda)}^2 \\
    &\leq 2\norm{\jac_2 \Phi(w(\lambda), \lambda)}^2 + 2\norm{\jac_2 \Phi(w(\lambda), \lambda) - \jac_2 \Phi(\wstoc, \lambda)}^2 \\
    &\leq 2\LPhi^2 + 2\rhol^2\norm{w(\lambda) - \wstoc}^2
\end{align*}
Then, applying \Cref{ass:innerbackrates}, and Lemma~\ref{lem:varprop}\ref{lem:varprop_ii}
\begin{align*}
    \Exptilde{\norm{\vstoc{\wstoc} - \Exptilde{\vstoc{\wstoc}}}^2} 
    &= \Vartilde{\vstoc{\wstoc}}\\
   &\leq \Exptilde{\norm{\vstoc{\wstoc}  - \vopt{\wstoc}}^2} \leq \hrf(k).
\end{align*}
Furthermore, exploiting \Cref{ass:aid} and \ref{ass:innerbackrates}, and \Cref{lm:normv},
\begin{align*}
    \Exptilde[\big]{\norm{\vstoc{\wstoc}}^2} &= \Exptilde[\big]{\norm{\vstoc{\wstoc} \mp \vopt{\wstoc}}}^2 \\
    &\leq 2\norm{\vopt{\wstoc}}^2 + 2\Exptilde[\big]{\norm{\vopt{\wstoc} - \vstoc{\wstoc}}^2} \\
    &\leq 2\frac{\Bo^2}{(1-\q)^2} + 2\hrf(k).
\end{align*}
The remaining term is bounded by $\mtwo$ through \Cref{ass:phiestimator}. Combining the previous bounds together and defining $\hat \Delta_w : = \norm{w(\lambda) - \wstoc}$ we get that
\begin{align*}
    \Vartilde{\hat \grad f(\lambda)} \leq 2\frac{\mtwo\Bo^2}{(1-\q)^2} + 2(\LPhi^2 + \mtwo)\hrf(k) + 2\rhol^2\hat \Delta^2_w\hrf(k) 
\end{align*}
The proof is completed by taking the total expectation on both sides of the inequality above.
\end{proof}

\begin{customthm}{3.3}
Suppose that Assumptions~\ref{ass:aid},\ref{ass:phiestimator}, and \ref{ass:innerbackrates}
are satisfied. Let $\lambda \in \Lambda$, and $t,k \in \N$. Then 
\begin{equation}
    \Var{\Exp{\hat \grad f(\lambda) \given \wstoc}} \leq 
    3 \big( c_{1,\lambda}^2\rf(t) +\LPhi^2\hrf(k)
    + \rhol^2 \rf(t)\hrf(k) \big),
\end{equation}
where $c_{1,\lambda}$ and $\LPhi$ are defined as in \Cref{th:boundbias}.
\end{customthm}
\begin{proof}
We derive from \Cref{lem:varprop}\ref{lem:varprop_ii} that
\begin{equation*}
    \Var{\Exp{\hat \grad f(\lambda) \given \wstoc}} 
\leq \Exp[\big]{\norm{\Exp{\hat \grad f(\lambda)\given \wstoc} - \grad f(\lambda)}^2}.
\end{equation*}
The statement follows from \Cref{th:boundbias}\ref{th:boundbias_i}, the inequality
$(a+b+c)^2 \leq 3 (a^2 + b^2 + c^2)$, and then by taking the total expectation and using \Cref{ass:innerbackrates}.
\end{proof}

\subsection{Proofs of Section~\ref{sec:fixed-point}}

\begin{customthm}{4.1}[Constant step-size]
Let Assumption~\ref{ass:inner} hold and 
suppose that $\eta_t = \eta \in \R_{++}$, for every $t \in \N$, and that
\begin{equation*}
    \eta \leq \frac{1}{1+\sigma_2}.
\end{equation*}
Let $(\x_t)_{t \in \N}$ be generated according to algorithm \eqref{eq:inneralgo} and set
 $\MSEW{t} : = \Exp{\norm{\x_t - \x^*}^2}$. Then, for all $t \in \N$,
\begin{equation}\tag{16}
\MSEW{t} \leq (1-\eta(1-q^2))^t\bigg(\MSEW{0} -\frac{\eta \sigma_1}{1-q^2} \bigg) + \frac{\eta \sigma_1}{1-q^2}.
\end{equation}
In particular, $\lim_{t \to \infty} \MSEW{t} \leq \eta \sigma_1/(1 - q^2)$.
\end{customthm}
\begin{proof}
\newcommand{\Expt}[2][]{\mathbb{E}_{t}#1[{#2}#1]}
Let $\sW_t$ be the $\sigma$-algebra generated by $\x_0,\x_1, \cdots, \x_t$. Then
\begin{align*}
\EE[\norm{\x_{t+1} - \x^*}^2 \,\vert\, \sW_t]
 &= \EE[ \norm{\x_t - \x^* + \eta (\hat \T(\x_t,\zeta_t) - \x_t)}^2 \,\vert\,\sW_k] \\
&= \norm{\x_t - \x^*}^2 + \eta^2\EE[ \norm{ (\hat \T(\x_t, \zeta_t) \mp \T(\x_t) - \x_t)}^2 \,\vert\, \sW_k] \\
&\quad + 2\eta(\x_t - \x^*)^\top(\T(\x_t)-\x_t) \\
&= \norm{\x_t - \x^*}^2  + \eta^2\norm{\T(\x_t) - \x_t}^2
+ \eta^2 \VV[\hat \T(\x_t,\zeta_t) \,\vert\, \sW_t] \\
&\quad+ 2\eta(\x_t - \x^*)^\top(\T(\x_t)-\x_t) \\
&\leq \norm{\x_t - \x^*}^2  + \eta^2(1 + \sigma_2)\norm{\T(\x_t) \mp \x^* - \x_t}^2
+ \eta^2\sigma_1  \\
&\quad+ 2\eta(\x_t - \x^*)^\top(\T(\x_t) \mp \x^* -\x_t) \\
&\leq (1-2\eta + \eta^2(1 + \sigma_2))\norm{\x_t - \x^*}^2  + \eta^2(1 + \sigma_2)\norm{\T(\x_t) - \x^*}^2
+ \eta^2\sigma_1  \\
&\quad+ \eta(1-\eta(1 + \sigma_2))2(\x_t - \x^*)^\top(\T(\x_t) - \x^*). 
\end{align*}
Furthermore, since $\norm{\T(\x_t) - \x^*} \leq q\norm{\x_t - \x^*}$ and $2ab \leq a^2+b^2$, we have that
\begin{align*}
    2(\x_t - \x^*)^\top(\T(\x_t) - \x^*) \leq 2\norm{\x_t - \x^*}\norm{\T(\x_t) - \x^*} \leq (1+ q^2)\norm{\x_t -\x^*}^2.
\end{align*}
From the upper bound on the step size we have that $\eta(1-\eta(1 + \sigma_2)) \geq 0$, hence:
\begin{align}
\nonumber    \EE [ \norm{\x_{t+1} - \x^*}^2 \,\vert\,\sW_t]&\leq (1-2\eta )\norm{\x_t - \x^*}^2 + \eta^2(1 + \sigma_2)(1+q^2)\norm{\x_t - \x^*}^2
+ \eta^2\sigma_1  \\
\nonumber&\quad+\eta(1+q^2)\norm{\x_t - \x^*}^2 - \eta^2(1 + \sigma_2)(1+q^2)\norm{\x_t - \x^*}^2 \\
&\leq (1 - \eta(1-q^2))\norm{\x_t - \x^*}^2 + \eta^2\sigma_1.
\end{align}
Taking total expectations we get 
\begin{equation*}
   \EE[\norm{\x_{t+1} - \x^*}^2] \leq (1 - \eta(1-q^2))\Exp{\norm{\x_t - \x^*}^2} + \eta^2\sigma_1
\end{equation*}
and subtracting $\eta \sigma_1/(1-q^2)$ from both sides we obtain
\begin{equation}\label{eq:stepcontraction}
   \EE[\norm{\x_{t+1} - \x^*}^2] - \frac{\eta \sigma_1}{1-q^2}  \leq (1 - \eta(1-q^2))\bigg(\Exp{\norm{\x_t - \x^*}^2} - \frac{\eta \sigma_1}{1-q^2}\bigg).
\end{equation}
Now the statement follows by applying the above inequality recursively.
\end{proof}

\begin{customthm}{4.2}[Decreasing step-sizes]
Let Assumption~\ref{ass:inner} hold and 
suppose that for every $t \in \N$
\begin{equation}
\tag{20}
    \eta_t \leq \frac{1}{1+\sigma_2}, \quad
    \sum_{t=1}^\infty \eta_t = \infty, \quad 
    \sum_{t=1}^\infty \eta^2_t < \infty.
\end{equation}
Let $(\x_t)_{t \in \N}$ be generated according to Algorithm~\eqref{eq:inneralgo}.
Then 
\begin{equation*}
    \x_t \to \x^* \qquad \PP\text{-a.s.}
\end{equation*}
Moreover, if $\eta_t = \beta/(\gamma  + t)$, 
with $\beta > 1/(1-q^2)$ and  $\gamma \geq \beta(1 + \sigma_2)$,
then we have
\begin{equation}\tag{21}
    \Exp{\norm{\x_t - \x^*}^2} 
    \leq \frac{c}{\gamma + t},
\end{equation}
where 
\begin{equation*}
    c : = \max \left\{\gamma \Exp{\norm{\x_0 - \x^*}^2}, \frac{\beta^2\sigma_1}{\beta(1-q^2) -1} \right\}.
\end{equation*}
\end{customthm}
\begin{proof}
\newcommand{\Expt}[2][]{\mathbb{E}_{t}#1[{#2}#1]}
As in the proof of Theorem~\ref{th:innerconvergence2} we get
\begin{equation}
\label{eq:20200528a}
(\forall\, t \in \N)\quad
\EE [ \norm{\x_{t+1} - \x^*}^2 \,\vert\,\sW_t]
\leq (1 - \eta_t(1-q^2))\norm{\x_t - \x^*}^2 + \eta^2_t\sigma_1.
\end{equation}
Taking total expectations we obtain
\begin{equation}\label{eq:quasidescent}
(\forall\, t \in \N)\quad
   \EE[\norm{\x_{t+1} - \x^*}^2] \leq (1 - \eta_t(1-q^2))\Exp{\norm{\x_t - \x^*}^2} + \eta^2_t\sigma_1,
\end{equation}
which can be equivalently written as
\begin{equation*}
(\forall\, t \in \N)\quad
    (1-q^2) \eta_t \EE[\norm{\x_{t} - \x^*}^2] \leq 
    \EE[\norm{\x_{t} - \x^*}^2] - \EE[\norm{\x_{t+1} - \x^*}^2]
    + \eta_t^2 \sigma_1.
\end{equation*}
Since the right hand side is summable (being the sum of a telescopic series and a summable series), we have
\begin{equation}
\label{eq:20201013a}
    (1-q^2)\sum_{t=0}^\infty \eta_t \Exp{\norm{\x_t - \x^*}^2} \leq \EE[\norm{\x_{0} - \x^*}^2] + \sigma_1 \sum_{t=0}^{+\infty} \eta_t^2<+\infty.
\end{equation}
Now, it follows from \eqref{eq:20200528a} that $(\norm{\x_t - \x^*}^2)_{t \in \N}$ is an almost supermartingale (in the sense of \cite{ROB1971}), hence
$\norm{\x_t - \x^*}^2 \to \zeta$ $\PP$-a.s.~for some positive random variable $\zeta$.
Since $\sum_{t=0}^{+\infty} \eta_t = +\infty$, it follows from \eqref{eq:20201013a} that
$\liminf_{t\to+\infty} \Exp{\norm{\x_t - \x^*}^2}=0$. Then Fatou's lemma yields that
$\EE[\zeta] \leq \liminf_{t\to +\infty}\EE[\norm{\x_t - \x^*}^2] =0$.
Thus, since $\zeta$ is positive, $\zeta=0$ $\PP$-a.s.
and hence $x_t \to x_*$ $\PP$-a.s.

Concerning the second part of the statement, it is easy to see that the sequence $(\eta_t)_{t \in \N}$ satisfies the assumptions \eqref{eq:20201013b}. We can thus apply \cref{eq:quasidescent} at each iteration. Let $\Delta_t := \Exp{\norm{\x_t - \x^*}^2}$, from the definition of $c$ we have that for $t=0$ $\Delta_0 \leq c/\gamma$.
Now, suppose that \eqref{eq:ratediminish} holds at step $t$. We want to prove that it holds at $t+1$. Defining $\xi := \gamma + t$, it follows from \eqref{eq:quasidescent} that
\begin{align*}
    \Delta_{t+1} &\leq  \frac{\xi -\beta(1 -q^2)}{\xi} \frac{c}{\xi} + \frac{\beta^2\sigma_1}{\xi^2}  \\
    &= \frac{(\xi -1)}{\xi^2}c - \underbrace{\frac{(\beta(1-q^2)-1)c -\beta^2\sigma_1}{\xi^2}}_{\geq 0 \text{ by the definition of }c \text{ and } \beta}  \\
    &\leq \frac{c}{\xi + 1} = \frac{c}{\gamma + t + 1},
\end{align*}
where the last inequality derives from $\xi^2 \geq (\xi-1)(\xi+1)$.
\end{proof}

\liptovar*
\begin{proof}
Let  $\x \in \R^d$. Then by \Cref{ass:inner}-\ref{ass:tildeexp} and the inequality $\norm{a+b}^2 \leq 2\norm{a}^2 + 2\norm{b}^2$ we get
\begin{align*}
    \Var{\hat \T(\x, \zeta)} &= \Exp{\norm{\hat \T(\x, \zeta) \mp \hat \T(\x^*, \zeta) - \T(\x)}^2} \\
    &\leq 2 \Exp{\norm{\hat \T(\x, \zeta) - \hat \T(\x^*, \zeta)}^2} + 2 \Exp{\norm{\hat \T(\x^*, \zeta) - \T(\x)}^2 } \\
    &\leq 2 \Exp{\norm{\hat \T(\x, \zeta) - \hat \T(\x^*, \zeta)}^2} + 2\Var{\hat \T(\x^*,\zeta)} + 2\norm{\T(\x^*) - \T(\x)}^2.
\end{align*}
Therefore, leveraging  \Cref{ass:inner}-\ref{ass:cont} and \Cref{ass:tildephilip} we have
\begin{align*}
     \Var{\hat \T(\x, \zeta)} &\leq 2(\LtT^2+ q^2)\norm{\x- \x^*}^2 + 2\Var{\hat \T(\x^*,\zeta)}. 
\end{align*}
Finally, note that $\norm{\x-\x^*} \leq \norm{\x-\T(\x)} + \norm{\T(\x)-\x^*} = \norm{\x-\T(\x)} + \norm{\T(\x)-\T(\x^*)} \leq \norm{\x-\T(\x)} + q \norm{\x-\x^*}$ and hence $\norm{\x-\x^*} \leq \norm{\x-\T(\x)}/(1-q)$. The statement follows.
\end{proof}

\subsection{Proofs of Section~\ref{sec:lower-level}}

\firstconv*
\begin{proof}
The statement follows by applying Theorem~\ref{th:innerconvergence} with 
 $\hat{T} =\hat \Phi(\cdot, \lambda, \cdot)$ and $\hat{T} =\hat \Psi_w(\cdot, \lambda, \cdot)$.
To that purpose, in view of Theorem~\ref{th:liptovar} it is sufficient to verify 
Assumptions~\ref{ass:inner}\ref{ass:cont}-\ref{ass:tildeexp} and \ref{ass:tildephilip}.
This is immediate for $\hat \Phi(\cdot, \lambda, \cdot)$,
due to Assumptions~\ref{ass:aid}\ref{ass:contraction} and \ref{ass:phiestimator}.
Concerning $\hat \Psi_w(\cdot, \lambda, \cdot)$, it follows from Assumptions~\ref{ass:aid}\ref{ass:contraction} and \ref{ass:phiestimator}, that, for every $w,v, v_1, v_2 \in \R^d$ and for every $x \in \CZ$,
\begin{align*}
    \Exp{\hat \Psi_w(v, \lambda, \zeta)} &=  \jac_1 \Phi(w, \lambda) v + \grad_1 E(w,\lambda) =: \Psi_w(v,\lambda) \\
    \norm{\Psi_w(v_1,\lambda) - \Psi_w(v_2,\lambda)} &\leq \norm{\jac_1 \Phi(w, \lambda)}\norm{v_1 - v_2} \leq \q \norm{v_1 - v_2}  \\
    \norm{\hat \Psi_w(v_1,\lambda, x) - \hat \Psi_w(v_2,\lambda, x)} &\leq \norm{\jac_1 \hat \Phi(w, \lambda, x)}\norm{v_1 - v_2} \leq \LtPhi \norm{v_1 - v_2}.
\end{align*}
Now, it remains just to compute the corresponding $\sigma_1$ in Theorem~\ref{th:liptovar},
which reduces to bound $\Var{\hat \Psi_w(\vopt{w}, \lambda, \zeta^\prime)}$.
To that purpose, recalling \eqref{eq:gradfv2}, we note that
\begin{align*}
    \Var{\hat \Psi_w(\vopt{w}, \lambda, \zeta^\prime)} &= \Exp[\big]{\norm{\hat \Psi_w(\vopt{w}, \lambda, \zeta^\prime)}^2} - \norm{\jac_1 \Phi(w, \lambda) v(w,\lambda)}^2  \\
    &\leq
    \norm{\vopt{w}}^2\Exp[\big]{\norm{\jac_1 \hat \Phi(w,\lambda, \zeta^\prime)}^2} - \norm{\jac_1 \Phi(w, \lambda) v(w,\lambda)}^2 \\
    &\leq \norm{\vopt{w}}^2 \LtPhi^2
\end{align*}
where we used \Cref{lem:varprop}\ref{lem:varprop_i} in the first equation, the sub-multiplicativity of the operator and euclidean norms and the fact that $v(w,\lambda)$ is not random in the second inequality, and finally, Assumption~\ref{ass:phiestimator}\ref{eq:expjacexp_iii} for the last inequality.

Therefore, using \Cref{lm:normv}, we have that $\Var{\hat \Psi_w(\vopt{w}, \lambda, \zeta^\prime)} \leq \LtPhi^2 \norm{\nabla_1 \fo (w,\lambda)}^2/(1-\q)^2$.
\end{proof}

\section{Standard Lemmas}\label{se:standardlemams}
\begin{lemma}
\label{lem:20200601a}
Let $X$ be a random vector with values in $\R^d$
and suppose that $\EE[\norm{X}^2]<+\infty$.
Then $\EE[X]$ exists in $\R^d$ and 
$\norm{\EE[X]}^2 \leq \EE[\norm{X}^2]$.
\end{lemma}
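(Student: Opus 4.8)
The plan is to first establish that $\EE[X]$ is well-defined, and then to prove the inequality; the latter is the substantive (though still elementary) part.

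For existence, I would argue componentwise. Writing $X = (X_1, \dots, X_d)$, each coordinate satisfies $|X_i| \le \norm{X}$. Since $\EE[\norm{X}^2] < \infty$, the scalar Jensen inequality (equivalently, Cauchy--Schwarz against the constant $1$) gives $\EE[\norm{X}] \le \sqrt{\EE[\norm{X}^2]} < \infty$, so that $\EE[|X_i|] < \infty$ for every $i$. Hence each $\EE[X_i]$ exists in $\R$, and $\EE[X] := (\EE[X_1], \dots, \EE[X_d])$ is a well-defined element of $\R^d$.

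For the inequality I would set $\mu := \EE[X]$ and dispose of the trivial case $\mu = 0$ first. When $\mu \ne 0$, I would combine linearity of expectation with the Cauchy--Schwarz inequality: taking the unit vector $u := \mu/\norm{\mu}$, so that $\norm{u} = 1$, one has
\[
\norm{\mu} = u^\top \mu = u^\top \EE[X] = \EE[u^\top X] \le \EE[\norm{u}\,\norm{X}] = \EE[\norm{X}].
\]
Combining this with the bound $\EE[\norm{X}] \le \sqrt{\EE[\norm{X}^2]}$ established above and squaring yields $\norm{\mu}^2 \le \EE[\norm{X}^2]$, which is the claim. An equally short alternative is to invoke Jensen's inequality directly for the convex map $x \mapsto \norm{x}^2$ on $\R^d$, which gives $\norm{\EE[X]}^2 \le \EE[\norm{X}^2]$ in one line.

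There is no genuine obstacle here: the only point requiring a moment's care is the logical ordering, namely that one must verify integrability of $X$ (so that $\EE[X]$ exists) before the expression $\norm{\EE[X]}^2$ is even meaningful; this is precisely what the finiteness of $\EE[\norm{X}^2]$ buys us through the scalar Jensen inequality.
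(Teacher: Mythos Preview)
Your proof is correct and follows essentially the same route as the paper: first bound $\EE[\norm{X}]$ by $\sqrt{\EE[\norm{X}^2]}$ to secure integrability, then use $\norm{\EE[X]} \le \EE[\norm{X}]$ and square. The only cosmetic difference is that the paper phrases the existence step via Bochner integrability while you argue componentwise, which in $\R^d$ amounts to the same thing.
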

\begin{proof}
It follows from H\"older's inequality that
 $\EE[\norm{X}] \leq \EE[\norm{X}^2]$. Therefore
$X$ is Bochner integrable with respect to $\PP$ and
$\norm{\EE[X]} \leq \EE[\norm{X}]$.
Hence using Jensen's inequality we have
$\norm{\EE[X]}^2 \leq (\EE[\norm{X}])^2 \leq 
\EE[\norm{X}^2]$ and the statement follows. 
\end{proof}

\begin{definition}
Let $X$ be a random vector with value in $\R^d$ 
such that $\EE[\norm{X}^2]<+\infty$. Then
the variance of $X$ is
\begin{equation}\label{eq:variance}
    \Var{X} := \Exp{\norm{X - \Exp{X}}^2}
\end{equation}
\end{definition}

\begin{lemma}[Properties of the variance]
\label{lem:varprop}
Let $X$ and $Y$ be two independent random variables with values in $\R^d$
and let $A$ be a random matrix with values in $\R^{n\times d}$ which is independent on $X$. We also assume that $X,Y$, and $A$ have finite second moment. Then the following hold.
\begin{enumerate}[label={\rm (\roman*)}]
\item\label{lem:varprop_i} 
$\Var{X} = \EE[\norm{X}^2] - \norm{\EE[X]}^2$,
\item\label{lem:varprop_ii} 
For every $x \in \R^d$, $\EE[\norm{X- x}^2] = \Var{X} + \norm{\EE[X] - x}^2$. Hence,
$\Var{X} = \min_{x \in \R^d} \EE[\norm{X- x}^2]$,
\item\label{lem:varprop_iii} 
$ \Var{X + Y} = \Var{X} + \Var{Y}$,
\item\label{lem:varprop_iv} 
$\Var{AX}  \leq \Var{A}\Var{X} + \norm{\Exp{A}}^2\Var{X} + \norm{\Exp{X}}^2\Var{A}$.
\end{enumerate}
\end{lemma}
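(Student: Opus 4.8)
The plan is to handle the four items in increasing order of difficulty, since (i)--(iii) are short computations whose algebra feeds directly into the genuinely harder item (iv).

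For (i), I would expand the squared norm inside the definition \eqref{eq:variance}, writing $\norm{X - \Exp{X}}^2 = \norm{X}^2 - 2 X^\top \Exp{X} + \norm{\Exp{X}}^2$ and taking expectations; the middle term collapses to $-2\norm{\Exp{X}}^2$, yielding the claim (Lemma~\ref{lem:20200601a} ensures $\Exp{X}$ exists, so this is legitimate). For (ii), I would split $X - x = (X - \Exp{X}) + (\Exp{X} - x)$ and expand: the cross term $2\,\Exp{X - \Exp{X}}^\top(\Exp{X}-x)$ vanishes because $\Exp{X - \Exp{X}} = 0$, leaving $\Var{X} + \norm{\Exp{X}-x}^2$; the ``hence'' statement is then immediate since this is minimized exactly at $x = \Exp{X}$. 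For (iii), I would center both variables, set $\tilde X := X - \Exp{X}$ and $\tilde Y := Y - \Exp{Y}$, and expand $\norm{\tilde X + \tilde Y}^2$; the cross term $2\,\Exp{\tilde X^\top \tilde Y}$ factorizes as $2\,\Exp{\tilde X}^\top\Exp{\tilde Y} = 0$ by independence together with zero mean.

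The main work is item (iv). Here I would center both factors, $\tilde A := A - \Exp{A}$ and $\tilde X := X - \Exp{X}$. Since $A$ and $X$ are independent, $\Exp{AX} = \Exp{A}\Exp{X}$, so the fluctuation decomposes into three terms:
\begin{equation*}
AX - \Exp{AX} = \Exp{A}\tilde X + \tilde A \Exp{X} + \tilde A \tilde X.
\end{equation*}
Expanding $\Var{AX} = \Exp{\norm{\Exp{A}\tilde X + \tilde A \Exp{X} + \tilde A \tilde X}^2}$ produces three squared terms and three cross terms. The crux is to verify that every cross term has zero expectation: each is bilinear in $\tilde A$ and $\tilde X$, so conditioning on one of the two factors and invoking independence together with $\Exp{\tilde A} = 0$ and $\Exp{\tilde X} = 0$ annihilates it. This is the step I expect to demand the most care, since one must choose the right factor to condition on in each of the three products.

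Once the cross terms are eliminated, I would bound the three surviving squared terms individually using submultiplicativity of the spectral norm, $\norm{Mv} \leq \norm{M}\norm{v}$: the first gives $\Exp{\norm{\Exp{A}\tilde X}^2} \leq \norm{\Exp{A}}^2\Var{X}$, the second $\Exp{\norm{\tilde A \Exp{X}}^2} \leq \norm{\Exp{X}}^2\Var{A}$, and the third, using independence once more to split the expectation, $\Exp{\norm{\tilde A \tilde X}^2} \leq \Exp{\norm{\tilde A}^2}\Exp{\norm{\tilde X}^2} = \Var{A}\Var{X}$. Summing the three bounds delivers the stated inequality. The only subtlety beyond the cross-term computation is staying consistent about the norm entering $\Var{A}$ (the spectral norm for matrices, per the paper's notation), which is exactly the norm for which submultiplicativity holds.
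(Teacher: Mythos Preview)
Your proposal is correct and follows essentially the same approach as the paper: for (i)--(iii) the computations are identical (the paper derives (i) as the $x=0$ case of (ii), but this is cosmetic), and for (iv) both arguments center $A$ and $X$, kill the cross terms via independence/conditioning, and bound the three surviving squared terms by submultiplicativity. The only organizational difference is that you go straight to the three-term decomposition $AX - \Exp{A}\Exp{X} = \Exp{A}\tilde X + \tilde A\,\Exp{X} + \tilde A\tilde X$, whereas the paper first splits into two terms $A\tilde X + \tilde A\,\Exp{X}$ and then decomposes $A\tilde X$ again; your route is slightly more symmetric but the content is the same.
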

\begin{proof}
\ref{lem:varprop_i}-\ref{lem:varprop_ii}:
Let $x \in \R^d$. Then, $\norm{X - x}^2 = \norm{X- \EE[X]}^2 + \norm{\EE[X] - x}^2
+ 2(X- \EE[X])^\top(\EE[X]- x)$. Hence, taking the expectation
we get $\EE[\norm{X- x}^2] = \Var{X} + \norm{\EE[X] - x}^2$.
Therefore, $\EE[\norm{X- x}^2] \geq \Var{X}$ and for $x=\EE[X]$ we get $\EE[\norm{X- x}^2] = \Var{X}$. Finally, for $x=0$ we get \ref{lem:varprop_i}.

\ref{lem:varprop_iii}:
Let $\bar X := \Exp{X}$ and $\bar Y := \Exp{Y}$, we have
\begin{align*}
    \Var{X + Y} &= \Exp{\norm{X - \bar X + Y - \bar Y}^2} \\
    &= \Exp{\norm{X- \bar X}^2} + \Exp{\norm{Y- \bar Y}^2} + 2 \Exp{X - \bar X}^\top\Exp{Y - \bar Y} \\
    &= \Exp{\norm{X- \bar X}^2} + \Exp{\norm{Y- \bar Y}^2}
\end{align*}
Recalling the definition of $\Var{X}$ the statement follows.

\ref{lem:varprop_iv}:
Let $\bar A := \Exp{A}$ and $\bar X := \Exp{X}$.
Then,
\begin{align*}
\Var{AX} &= \Exp{\norm{AX - \Exp{A}\Exp{X}}^2} \\
&= \Exp{\norm{AX -A \bar X + A \bar X - \bar A \bar X}^2} \\
&= \Exp{\norm{A(X - \bar X) + (A  - \bar A) \bar X}^2} \\
&= \Exp{\norm{A(X - \bar X)}^2} + \Exp{\norm{(A  - \bar A) \bar X}^2} \\
&\quad + 2\Exp{(X- \bar X)^\top A^\top(A - \bar A)\bar X}\\
&= \Exp{\norm{A(X - \bar X)}^2} + \Exp{\norm{(A  - \bar A) \bar X}^2} \\
&\quad +  2\Exp{(X- \bar X)^\top} \Exp{A^\top(A - \bar A)\bar X}\\
&=\Exp{\norm{(A -\bar A + \bar A)(X - \bar X)}^2} + \Exp{\norm{(A  - \bar A) \bar X}^2} \\
&=\Exp{\norm{(A -\bar A )(X - \bar X)}^2} + \Exp{\norm{(A  - \bar A) \bar X}^2} + \Exp{\norm{\bar A(X  - \bar X)}^2}  \\
&\quad+ 2\Exp{(X - \bar X)^\top (A - \bar A)^\top \bar A (X - \bar X)} \\
&=\Exp{\norm{(A -\bar A )(X - \bar X)}^2} + \Exp{\norm{(A  - \bar A) \bar X}^2} + \Exp{\norm{\bar A(X  - \bar X)}^2}  \\
&\quad+ 2\Exp{(X - \bar X)^\top \Exp{A - \bar A \given X}^\top \bar A (X - \bar X)} \\
&\leq \Exp{\norm{A -\bar A}^2}\Exp{\norm{X - \bar X}^2} \\
&\quad + \Exp{\norm{A  - \bar A}^2} \norm{\bar X}^2 + \norm{\bar A}^2 \Exp{\norm{X  - \bar X)}^2}
\end{align*}
In the above equalities we have used the independence of $A$ and $X$ in the formulas $\EE[AX] = \EE[A]\EE[X]$,
$\EE[(X - \bar{X})^\top A^\top(A - \bar{A} \bar{X})] 
= \EE[(X - \bar{X})^\top] \EE[A^\top(A - \bar{A} \bar{X})]$, and $\EE[(X - \bar{X})^\top (A - \bar{A})^\top \bar{A}(X - \bar{X}) \,\vert\, X] = 
(X - \bar{X})^\top \EE[(A - \bar{A})^\top  \,\vert\, X]\bar{A}(X - \bar{X})$.
\end{proof}

\begin{lemma}
Let $f:\CZ \subset \R^n \mapsto \R^m$ be an $L$-Lipschitz function, with $L>0$, meaning that
\begin{equation*}
    \norm{f(x)-f(y)} \leq L \norm{x-y} \qquad \forall x,y \in \CZ
\end{equation*}
Let $X$ be a random variable with finite variance. Then, we have that
\begin{equation}
    \Var{f(X)} \leq L^2 \Var{X} 
\end{equation}
\end{lemma}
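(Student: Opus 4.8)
The plan is to derive the bound from a symmetrization identity for the variance, which has the advantage of requiring no assumption on the domain $\CZ$. First I would introduce an independent copy $X'$ of $X$ (same distribution, independent of $X$) and record the elementary identity
\begin{equation*}
\Var{Z} = \tfrac{1}{2}\Exp{\norm{Z - Z'}^2},
\end{equation*}
valid for any square-integrable random vector $Z$ with $Z'$ an independent copy of it. This follows by expanding $\norm{Z-Z'}^2 = \norm{Z}^2 - 2Z^\top Z' + \norm{Z'}^2$ and taking expectations: the cross term factorizes by independence as $\Exp{Z}^\top\Exp{Z'} = \norm{\Exp{Z}}^2$ (using $\Exp{Z'}=\Exp{Z}$), so that $\Exp{\norm{Z-Z'}^2} = 2\Exp{\norm{Z}^2} - 2\norm{\Exp{Z}}^2 = 2\Var{Z}$ by Lemma~\ref{lem:varprop}\ref{lem:varprop_i}.

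Next I would apply this identity to $Z = f(X)$ with $Z' = f(X')$, noting that $f(X')$ is genuinely an independent copy of $f(X)$ because $f$ is deterministic. The Lipschitz hypothesis bounds the integrand pointwise, $\norm{f(X) - f(X')}^2 \leq L^2 \norm{X - X'}^2$ almost surely, so taking expectations yields
\begin{equation*}
\Var{f(X)} = \tfrac{1}{2}\Exp{\norm{f(X) - f(X')}^2} \leq \tfrac{L^2}{2}\Exp{\norm{X - X'}^2} = L^2 \Var{X},
\end{equation*}
where the last equality is the same identity applied to $Z=X$.

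The argument is essentially mechanical and I do not anticipate a genuine obstacle; the only point deserving a line of justification is integrability, needed so that $\Var{f(X)}$ is well-defined. To see this I would fix any $x_0 \in \CZ$ and use $\norm{f(X)} \leq \norm{f(x_0)} + L\norm{X - x_0}$, which together with the finite second moment of $X$ (implied by finite variance) shows $f(X)$ is square-integrable. An alternative one-line proof would instead invoke the variational characterization $\Var{f(X)} = \min_{y} \Exp{\norm{f(X) - y}^2}$ from Lemma~\ref{lem:varprop}\ref{lem:varprop_ii} and substitute $y = f(\Exp{X})$; however, this requires $\Exp{X} \in \CZ$ (e.g.\ $\CZ$ convex), so I prefer the symmetrization route, which is unconditional.
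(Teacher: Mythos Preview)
Your proof is correct and takes a genuinely different route from the paper. The paper does exactly what you describe as the ``alternative one-line proof'': it uses the bias--variance identity of Lemma~\ref{lem:varprop}\ref{lem:varprop_ii} with the choice $y = f(\Exp{X})$, writing
\[
\Var{f(X)} = \Exp{\norm{f(X) - f(\Exp{X})}^2} - \norm{f(\Exp{X}) - \Exp{f(X)}}^2 \leq \Exp{\norm{f(X) - f(\Exp{X})}^2} \leq L^2 \Var{X}.
\]
You instead go through the symmetrization identity $\Var{Z} = \tfrac{1}{2}\Exp{\norm{Z-Z'}^2}$ with an independent copy, applying Lipschitzness to the pair $(X,X')$ rather than to $(X,\Exp{X})$.

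The comparison is exactly the one you already flagged: the paper's argument silently requires $\Exp{X}\in\CZ$ so that $f(\Exp{X})$ is defined, which the stated hypotheses do not guarantee (only $\CZ\subset\R^n$ is assumed, not convexity). Your symmetrization argument sidesteps this entirely, since $X$ and $X'$ both live in $\CZ$ almost surely. So your proof is slightly longer but strictly more general; the paper's is shorter but has an implicit assumption. Your integrability check for $f(X)$ is also a nice touch that the paper omits.
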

\begin{proof}
We have
\begin{align*}
\Var{f(X)} &= \Exp{\norm{f(X) - \Exp{f(X)}}^2} \\
&= \Exp{\norm{f(X) - f(\Exp{X)})}^2} - \norm{f(\Exp{X}) - \Exp{f(X)}}^2 \\
&\leq \Exp{\norm{f(X) - f(\Exp{X)})}^2} \\
&\leq L^2 \Exp{\norm{X - \Exp{X}}^2} = L^2\Var{X}.
\end{align*}
\end{proof}

\begin{definition}\textit{(Conditional Variance)}.
Let $X$ be a random variable with values in $\R^d$ 
and $Y$ be a random variable with values in a measurable space $\mathcal{Y}$. We call \emph{conditional variance} of $X$ given $Y$ the quantity
\begin{equation*}
    \Var{X \given Y} := \Exp{\norm{X -\Exp{X \given Y}}^2 \given Y}.
\end{equation*}{}
\end{definition}{}

\begin{lemma}{(Law of total variance)}\label{lm:totvariance}
Let $X$ and $Y$ be two random variables, we can prove that 
\begin{equation}
    \Var{X} = \Exp{\Var{X \given Y}} + \Var{\Exp{X \given Y}}
\end{equation}
\end{lemma}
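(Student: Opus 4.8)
The plan is to prove the identity directly from the definitions of variance and conditional variance, using only the tower property of conditional expectation, $\Exp{\Exp{X \given Y}} = \Exp{X}$. First I would record this tower property, which in particular guarantees that $\Exp{X \given Y}$ and $X$ share the same unconditional mean; this is what lets the second summand on the right-hand side be recognized as a genuine variance in the sense of the earlier definition.

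Next I would introduce the telescoping decomposition of the centered variable,
\[
X - \Exp{X} = \big(X - \Exp{X \given Y}\big) + \big(\Exp{X \given Y} - \Exp{X}\big),
\]
and expand the squared Euclidean norm to obtain
\[
\norm{X - \Exp{X}}^2 = \norm{X - \Exp{X \given Y}}^2 + \norm{\Exp{X \given Y} - \Exp{X}}^2 + 2\big(X - \Exp{X \given Y}\big)^\top\big(\Exp{X \given Y} - \Exp{X}\big).
\]
Taking the total expectation of both sides and recalling that $\Var{X} = \Exp{\norm{X - \Exp{X}}^2}$ then splits the computation into three pieces to be handled in turn.

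For the first piece, conditioning on $Y$ and applying the tower property gives $\Exp{\norm{X - \Exp{X \given Y}}^2} = \Exp{\Exp{\norm{X - \Exp{X \given Y}}^2 \given Y}} = \Exp{\Var{X \given Y}}$, which is exactly the first term on the right by the definition of conditional variance. The second piece is already $\Var{\Exp{X \given Y}}$, since $\Exp{X \given Y}$ has unconditional mean $\Exp{X}$ by the tower property.

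The one step requiring care — and the only place where anything beyond bookkeeping happens — is showing that the cross term vanishes. Here I would condition on $Y$: the factor $\Exp{X \given Y} - \Exp{X}$ is $Y$-measurable and can therefore be pulled outside the conditional expectation, leaving the inner factor $\Exp{X - \Exp{X \given Y} \given Y} = \Exp{X \given Y} - \Exp{X \given Y} = 0$. Hence the conditional expectation of the cross term is zero, and so, by the tower property, is its total expectation. Assembling the three pieces yields the claimed identity. I expect the main (and only mild) obstacle to be this careful manipulation of the conditional expectation in the cross term; everything else is a direct expansion.
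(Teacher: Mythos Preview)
Your proof is correct but takes a different route from the paper's. You expand $\norm{X-\Exp{X}}^2$ via the telescoping split $X-\Exp{X}=(X-\Exp{X\given Y})+(\Exp{X\given Y}-\Exp{X})$ and then argue that the cross term vanishes by conditioning on $Y$. The paper instead uses the identity $\Var{X}=\Exp{\norm{X}^2}-\norm{\Exp{X}}^2$ (Lemma~\ref{lem:varprop}\ref{lem:varprop_i}) twice: it applies the tower property to write $\Exp{\norm{X}^2}=\Exp{\Exp{\norm{X}^2\given Y}}$ and $\Exp{X}=\Exp{\Exp{X\given Y}}$, then recognizes $\Exp{\norm{X}^2\given Y}=\Var{X\given Y}+\norm{\Exp{X\given Y}}^2$, so that after rearranging the two terms one obtains $\Exp{\Var{X\given Y}}$ plus $\Exp{\norm{\Exp{X\given Y}}^2}-\norm{\Exp{\Exp{X\given Y}}}^2=\Var{\Exp{X\given Y}}$. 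The paper's approach is marginally slicker in that no cross term ever appears and hence no orthogonality argument is needed; your approach is more geometric and makes the Pythagorean structure explicit, at the cost of the extra (though easy) step of killing the cross term.
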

\begin{proof}
\begin{align*}
    \Var{X} &= \Exp{\norm{X - \Exp{X}}^2} \\
    \text{(var. prop.)} \implies \quad 
    &= \Exp{\norm{X}^2} - \norm{\Exp{X}}^2 \\
    \text{(tot. expect.)} \implies \quad
    &= \Exp{\Exp{\norm{X}^2 \given Y}} - \norm{\Exp{\Exp{X \given Y}}}^2 \\
     \text{(var. prop.)} \implies \quad
     &= \Exp{\Var{X \given Y} + \norm{\Exp{X \given Y}}^2} - \norm{\Exp{\Exp{X \given Y}}}^2 \\
    &= \Exp{\Var{X \given Y}} + \left(\Exp{\norm{\Exp{X \given Y}}^2} - \norm{\Exp{\Exp{X \given Y}}}^2\right)
\end{align*}
recognizing that the term inside the parenthesis is the conditional variance of $\Exp{X \given Y}$ gives the result.
\end{proof}

\begin{lemma}
\label{lemA4}
Let $\zeta$ and $\eta$ be two independent random variables with values in $\CZ$ and $\mathcal{Y}$ respectively.
Let $\psi\colon \mathcal{Y} \to \R^{m\times n}, \phi\colon \CZ \to \R^{n\times p}$, and
$\varphi\colon \mathcal{Y} \to \R^{p\times q}$ matrix-valued measurable functions. Then
\begin{equation}
\EE[\psi(\eta) (\phi(\zeta) - \EE[\phi(\zeta)] )\varphi(\eta)] = 0
\end{equation}
\end{lemma}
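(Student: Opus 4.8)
The plan is to condition on $\eta$ and exploit the independence of $\zeta$ and $\eta$ through the tower property of conditional expectation. First I would rewrite the target quantity as
$\EE[\psi(\eta)(\phi(\zeta) - \EE[\phi(\zeta)])\varphi(\eta)] = \EE\big[\,\EE[\psi(\eta)(\phi(\zeta) - \EE[\phi(\zeta)])\varphi(\eta) \mid \eta]\,\big]$, which is legitimate once the integrand is integrable. Integrability is the only mild technical point: since the expression is a product of matrix-valued functions, it suffices to assume (or to note from the context in which the lemma is applied) that $\psi(\eta)$, $\phi(\zeta)$, and $\varphi(\eta)$ are square-integrable, so that each scalar entry of the product is integrable by Cauchy--Schwarz.

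The key step is that, conditionally on $\eta$, the two outer factors $\psi(\eta)$ and $\varphi(\eta)$ are measurable with respect to the conditioning $\sigma$-algebra $\sigma(\eta)$ and can therefore be pulled outside the conditional expectation (the ``taking out what is known'' property). Applied entrywise to the matrix products, this yields the inner conditional expectation in the factored form $\psi(\eta)\,\EE[\phi(\zeta) - \EE[\phi(\zeta)] \mid \eta]\,\varphi(\eta)$.

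Then I would invoke independence to kill the central factor. Because $\zeta$ and $\eta$ are independent, $\EE[\phi(\zeta) \mid \eta] = \EE[\phi(\zeta)]$, and hence $\EE[\phi(\zeta) - \EE[\phi(\zeta)] \mid \eta] = \EE[\phi(\zeta)] - \EE[\phi(\zeta)] = 0$ almost surely. Consequently the whole inner conditional expectation is the zero matrix for (almost) every value of $\eta$, and taking the outer expectation gives $\EE[\psi(\eta)(\phi(\zeta) - \EE[\phi(\zeta)])\varphi(\eta)] = \EE[0] = 0$, as claimed.

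I do not expect any genuine obstacle here: the argument is a routine conditioning computation. The only care required is bookkeeping at the matrix level, which one handles by working entrywise — each entry of $\psi(\eta)(\phi(\zeta)-\EE[\phi(\zeta)])\varphi(\eta)$ is a finite sum of products of scalar random variables of the form (function of $\eta$) $\times$ (centered function of $\zeta$) $\times$ (function of $\eta$), to which the scalar pull-out and independence rules apply directly. Alternatively, one may linearize by testing against arbitrary fixed vectors $a \in \R^m$, $b \in \R^q$ and reducing to the scalar identity $\EE[a^\top \psi(\eta)(\phi(\zeta)-\EE[\phi(\zeta)])\varphi(\eta) b] = 0$, which is slightly cleaner to state.
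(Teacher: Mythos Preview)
Your argument is correct and is essentially the same as the paper's: condition on $\eta$, pull the $\sigma(\eta)$-measurable factors $\psi(\eta)$ and $\varphi(\eta)$ outside the inner conditional expectation, use independence to conclude $\EE[\phi(\zeta)-\EE[\phi(\zeta)]\mid\eta]=0$, and then take the outer expectation. The only additions in your write-up are the explicit integrability remark and the entrywise bookkeeping, which are fine but not needed beyond what the paper does.
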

\begin{proof}
Since, for every $y \in \mathcal{Y}$, $B \mapsto \psi(y) B \varphi(y)$ is linear and $\zeta$ and $\eta$ 
are independent, we have
\begin{equation*}
\EE[\psi(\eta) (\psi(\zeta) - \EE[\psi(\zeta)]) \varphi(\eta) \,\vert \eta] = \psi(\eta) \EE\big[\phi(\zeta) - \EE[\phi(\zeta)]\big] \varphi(\eta) =0.
\end{equation*}
Taking the expectation the statement follows.
\end{proof}

\begin{lemma}
\label{lm:matrixinverse}
Let $A$ be a square matrix such that $\norm{A}\leq q < 1$
Then, $I - A$ is invertible and
\begin{align*}
\norm{(I - A)^{-1}} \leq \frac{1}{1 - q}.
\end{align*}
\end{lemma}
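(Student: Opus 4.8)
The plan is to realize the inverse of $I - A$ as its Neumann series $\sum_{n=0}^\infty A^n$, which is the natural candidate whenever $A$ is a contraction in operator norm. First I would establish convergence of this series: by submultiplicativity of the norm, $\norm{A^n} \leq \norm{A}^n \leq q^n$, so the series of norms is dominated by the geometric series $\sum_{n=0}^\infty q^n = 1/(1-q) < \infty$. Since the space of square matrices is finite-dimensional, hence complete, absolute convergence guarantees that the partial sums converge to a well-defined matrix $S := \sum_{n=0}^\infty A^n$.

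Next I would verify that $S$ is indeed the inverse of $I - A$ via a telescoping identity. For each partial sum, $(I - A)\sum_{n=0}^N A^n = I - A^{N+1}$, and since $\norm{A^{N+1}} \leq q^{N+1} \to 0$, letting $N \to \infty$ yields $(I-A)S = I$; the identity $S(I-A) = I$ follows by the same computation with the factor on the right. Hence $I - A$ is invertible with $(I-A)^{-1} = S$. The norm bound is then immediate from the triangle inequality applied termwise to the series: $\norm{(I-A)^{-1}} = \norm{S} \leq \sum_{n=0}^\infty \norm{A^n} \leq \sum_{n=0}^\infty q^n = 1/(1-q)$.

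There is no genuine obstacle here, as this is a classical result; the only point requiring a little care is the appeal to completeness to justify convergence of the series. If one prefers to avoid that, I would argue directly instead: injectivity of $I - A$ follows because $(I-A)x = 0$ forces $\norm{x} = \norm{Ax} \leq q\norm{x}$, and $q<1$ then gives $x = 0$, so the square matrix $I-A$ is invertible. For the bound, writing $x := (I-A)^{-1}y$ and estimating $\norm{x} = \norm{y + Ax} \leq \norm{y} + q\norm{x}$ rearranges to $\norm{x} \leq \norm{y}/(1-q)$, and taking the supremum over unit vectors $y$ yields the claimed operator-norm bound.
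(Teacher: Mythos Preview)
Your proposal is correct and follows essentially the same Neumann-series argument as the paper: establish convergence of $\sum_{n\geq 0} A^n$ via the geometric bound $\norm{A^n}\leq q^n$, verify it inverts $I-A$ through the telescoping identity $(I-A)\sum_{n=0}^N A^n = I - A^{N+1}$, and deduce the norm bound by the triangle inequality. Your alternative direct argument (injectivity plus the rearrangement $\norm{x}\leq \norm{y}/(1-q)$) is a nice addition but not needed to match the paper.
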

\begin{proof}
Since $\norm{A} \leq q < 1$,
\begin{equation*}
\sum_{k=0}^\infty \norm{A}^k \leq \sum_{k=0}^\infty q^k 
= \frac{1}{1-q}.
\end{equation*}
Thus, the series $\sum_{k=0}^\infty A^k$ is convergent, say to $B$, and
\begin{equation}
    (I - A) \sum_{i=0}^k A^i = \sum_{i=0}^k A^i(I - A) = \sum_{i=0}^k A^i - \sum_{i=0}^{k+1} A^i + I \to I,
\end{equation}
so that $(I-A)B = B(I-A)=I$. Therefore, $I-A$ is invertible
with inverse $B$ and hence $\norm{(I-A)^{-1}} \leq \sum_{k=0}^\infty\norm{A}^k \leq 1/(1-q)$.
\end{proof}

\section{Additional Experiments}\label{se:addexperiments}
In this section we provide additional experiments in two of the settings outlined in \cite{grazzi2020iteration}. In addition to the three methods considered in \Cref{se:exp}, we also test variants of the algorithm which use a mixed Stochastic/Batch strategy for the solution of the two subproblems as well as variants for which $t \neq k$. To have a fair comparison, each method computes the approximate hypergradient using the same number of epochs.
We report the differences among the methods in \Cref{tb:methods}.

\begin{table}[h]
\centering
\caption{Differences among the methods used in the experiments. 
The column \textbf{\% epochs}, provides percentages of epochs used to solve the lower level problem (LL) and the linear system (LS), while the column \textbf{algorithm} indicates which method is used for each of the two subproblems: gradient descent (GD), stochastic gradient descent with constant step size (SGD const) and stochastic gradient descent with decreasing step sizes (SGD dec).}\label{tb:methods}
\hspace{0.01truecm}
\begin{tabular}{@{}lcc@{}}
\toprule
\textbf{Method}                & \textbf{\% epochs (LL, LS)} & \textbf{algorithm (LL, LS)} \\ \midrule
\textit{Batch}                 & 50, 50                      & GD, GD                      \\
\textit{Stoch const}           & 50, 50                      & SGD const, SGD const        \\
\textit{Stoch dec}             & 50, 50                      & SGD const, SGD const        \\
\textit{Stoch/Batch}           & 50, 50                      & SGD dec, GD                 \\
\textit{Batch/Stoch}           & 50, 50                      & GD, SGD dec                 \\
\textit{Batch 75\%/25\%}       & 75, 25                      & GD, GD                      \\
\textit{Stoch const 75\%/25\%} & 75, 25                      & SGD const, SGD const        \\
\textit{Stoch dec 75\%/25\%}   & 75, 25                      & SGD dec, SGD dec            \\ \bottomrule
\end{tabular}
\end{table}

For each method we set the number of iterations for the lower-level problem and linear system ($t$ and $k$) in Algorithm~\ref{algo1} as follows.
\begin{align*}
    t = \text{round}\left( \frac{\text{\% epochs LL}}{100} \times \text{total \# of  epochs} \times n_{tr} \div \text{batch size LL} \right) \\ 
    k = \text{round}\left(\frac{\text{\% epochs LS}}{100} \times \text{total \# of epochs} \times n_{tr} \div \text{batch size LS}  \right)
\end{align*}
where \% epochs LL/LS is the corresponding value in \Cref{tb:methods},  $n_{tr}$ is the number of examples in the training set and batch size LL (batch size LS) is the batch size used to solve the lower-level problem (linear system). The total number of epochs and $n_{tr}$ depend on the setting and are the same for all methods.

\subsection{Multinomial Regularized Logistic Regression on MNIST}\label{se:addexp}

We consider the following multinomial logistic regression setting on the MNIST dataset.
\newcommand{\CE}{\mathrm{CE}}
\begin{align*}
    f(\lambda) &= \sum_{i=\nt+1}^{\nt+\nv} \CE(y_i, W(\lambda) x_i), \\
    W(\lambda) &= \arg\min_{W \in \R^{c \times d}} \sum_{i=1}^{\nt} \CE(y_i, W x_i) + R(w, \lambda),
\end{align*}

where $c$ is the number of classes (10 for MNIST),  $\CE$ is the cross entropy loss, $(x_i, y_i)_{1 \leq i \leq \nt+\nv} \in (\R^d \times \{0,\dots, c\})^{\nt+\nv}$ are training and validation examples, and $R(w,\lambda)$ is set according to either of the two situations below
\begin{itemize}
    \item \textit{one regularization parameter}:
    
    $R(w,\lambda) =  \frac{\lambda}{2}\norm{w}^2$, $\lambda \in \R_{++}$
    \item \textit{multiple regularization parameters (one per feature)}: 
    
    $R(w, \lambda) = \frac{1}{2} \sum_{i=1}^c \sum_{j=1}^d \lambda_j w_{ij}^2 $ where $\lambda \in \R^d_{++}$.
\end{itemize}
In this scenario we take into account the whole MNIST training set containing 60 thousands examples, which we split in half to make the train and validation sets, i.e. $\nt=\nv=30000$. The batch size for the stochastic variants is 300.
 \Cref{fig:onemulti} shows the results. Even in this setting, the pure stochastic variants have a clear advantage over the Batch algorithm. We also note that the mixed strategies perform worse than the pure stochastic strategies and that
there is no particular gain in allocating more epochs to solve the lower-level problem.

\begin{figure*}[ht]
\vspace{-.2truecm}
    \centering
    \hspace{-.2truecm}
    \includegraphics[width=0.46\textwidth]{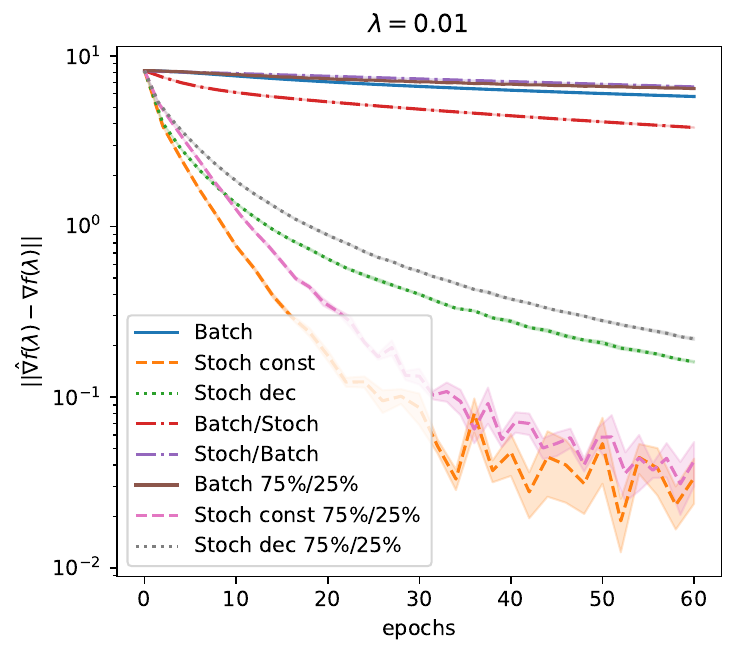}
    \hspace{-.25truecm}
    \includegraphics[width=0.44\textwidth]{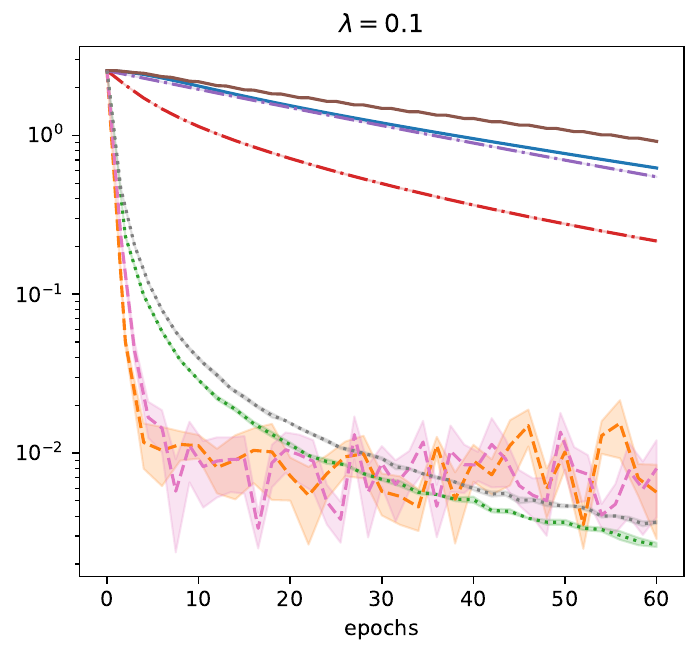}
    \hspace{-.25truecm}
    \includegraphics[width=0.45\textwidth]{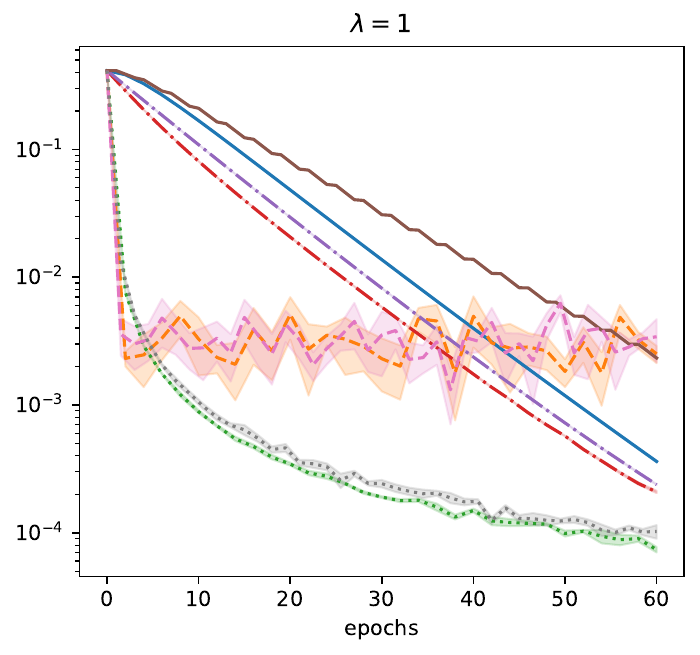}
    \hspace{-.25truecm}
    \includegraphics[width=0.45\textwidth]{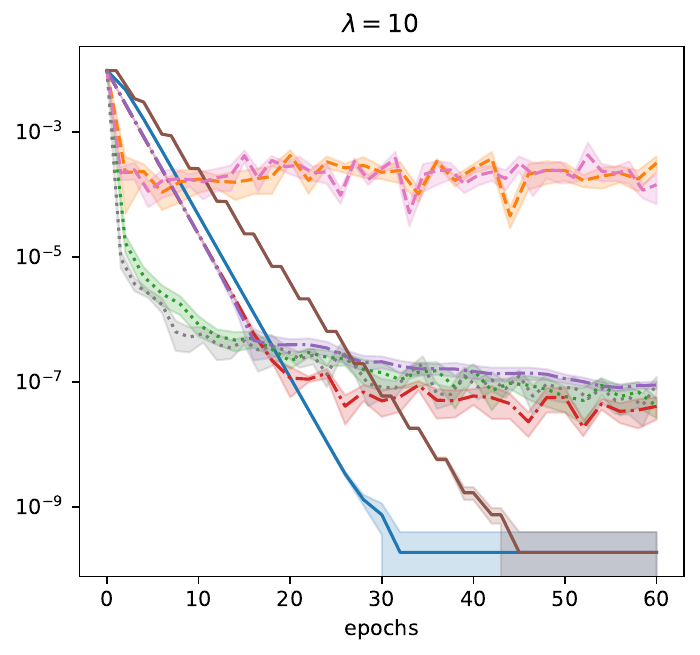}
    \includegraphics[width=0.45\textwidth]{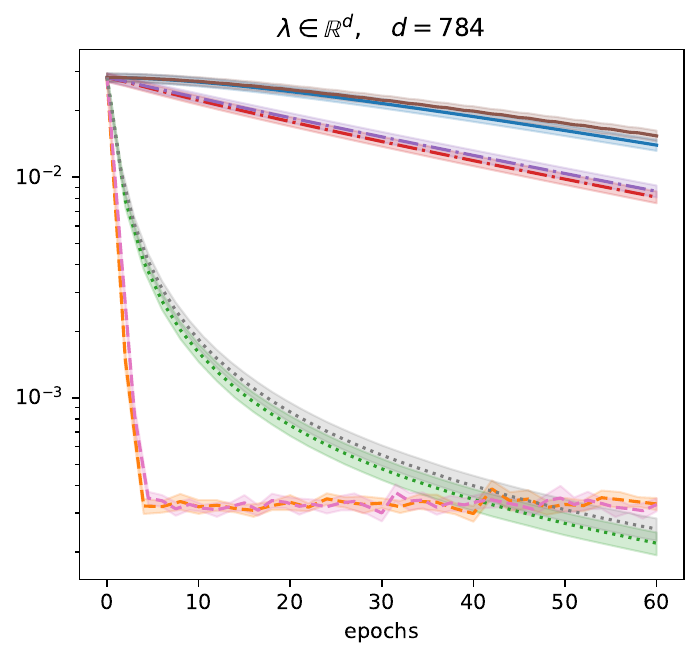}
    \caption{\small Experiments with a single (first 4 images) and multiple (last image) regularization parameters.
    The plots show mean (solid lines) and std (shaded regions) over 5 (first 4 images) and 10 (last image) runs. Each run varies the train/validation splits and, for the stochastic methods, the order and composition of the  minibatches. In addition, for each run in the last image, $\lambda_i = e^{\epsilon_i}$, where $\epsilon_i \sim \mathcal{U}[-2, 2]$ for every $i \in \{1,\dots, d\}$. 
    All methods use the same total computational budget. The first 
    five  use the same total number of epochs for solving the lower-level problem and the associated linear system. Whereas 
    the last three methods -- labeled with $75\%/25\%$ -- dedicate $3/4$ of epochs to solve the lower-level problem
    and only $1/4$ for the linear system. \small}
    \label{fig:onemulti}
\end{figure*}

\subsection{Bilevel Optimization on Twenty Newsgroup}

Here we replicate the setting of \citep{grazzi2020iteration} where multiple regularization parameters are optimized on the twenty newsgroup dataset. In particular, the lower-level objective is the $\ell_2$ regularized cross-entropy loss with one regularization parameter per feature computed on the training set, while the upper-level objective is the unregularized cross-entropy loss computed on the validation set. 

Differently from the previous experiments, which focused only on hypergradients, in this case we address the problem of minimizing the upper-level objective $f(\lambda)$. 
To minimize $f(\lambda)$ we use the \textrm{SGD} optimizer provided by \textrm{PyTorch} setting the learning rate to $10^3$. The approximate hypergradient is provided by one of the methods in \Cref{tb:methods} with a total budget of 20 epochs, meaning that each method exploits approximately 20 times the number of examples in the training set to compute the hypergradient. Following \citep{grazzi2020iteration}, we also warm-start the lower-level problem with the solution found at the previous upper-level iteration, which significantly improves the performance.
We note that each method starts by computing an approximation of $w(\lambda_0)$ which may provide different values of the considered metrics, even at the beginning of the procedure (see \Cref{fig:news}).

We halve the original training set to generate the training and validation sets, i.e. $\nt=\nv = 5657$, and we use minibatches of dimension 50 for the stochastic variants. We use the provided test set to compute the test performance metrics.

The performances varying the number of upper-level iterations are shown in \Cref{fig:news}. We can see that the pure stochastic variants outperform both the Batch and mixed methods. Furthermore, using the same number of epochs to solve the lower-level problem and the linear system appears to be the best strategy.
In \Cref{tb:news} we present the performance of the three main methods after completion of the bilevel optimization procedure.

\begin{figure*}[th]
    \centering
    \includegraphics[width=0.45\textwidth]{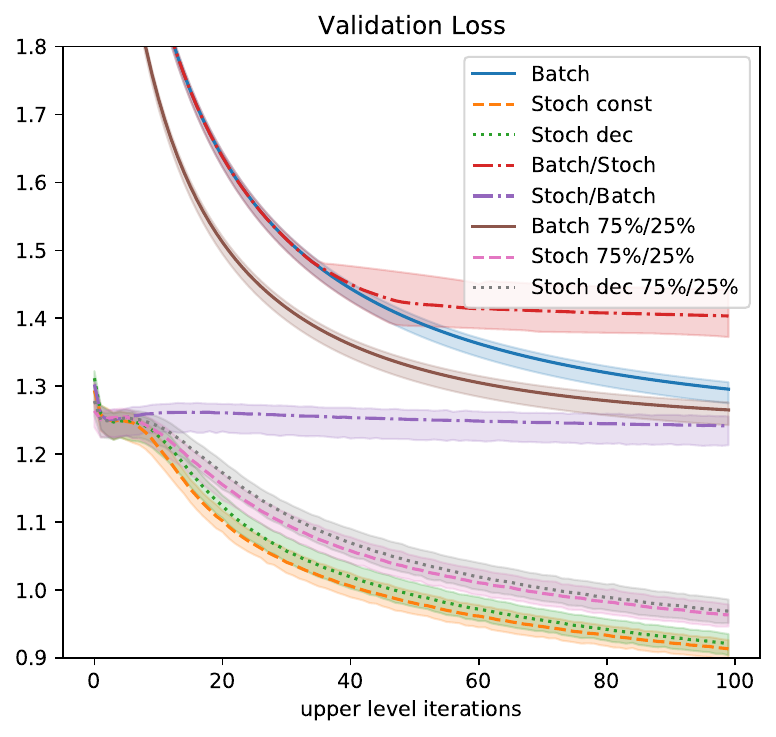}
    \includegraphics[width=0.45\textwidth]{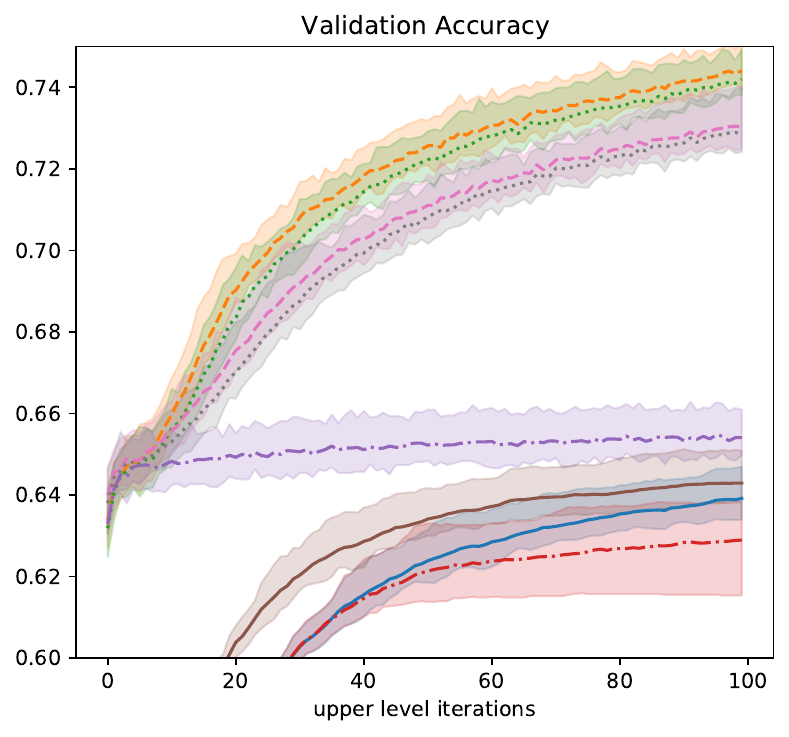}
    \includegraphics[width=0.45\textwidth]{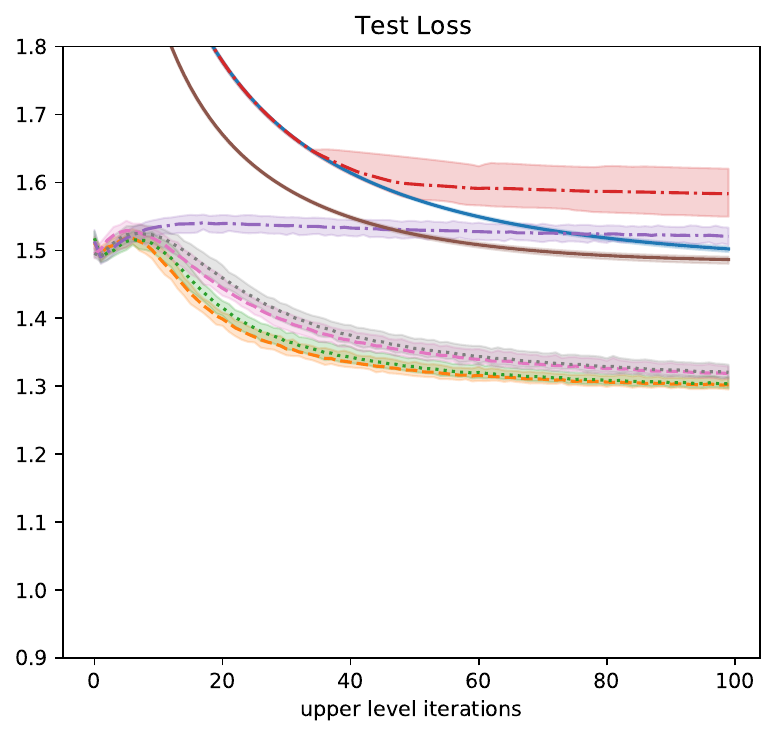}
    \includegraphics[width=0.45\textwidth]{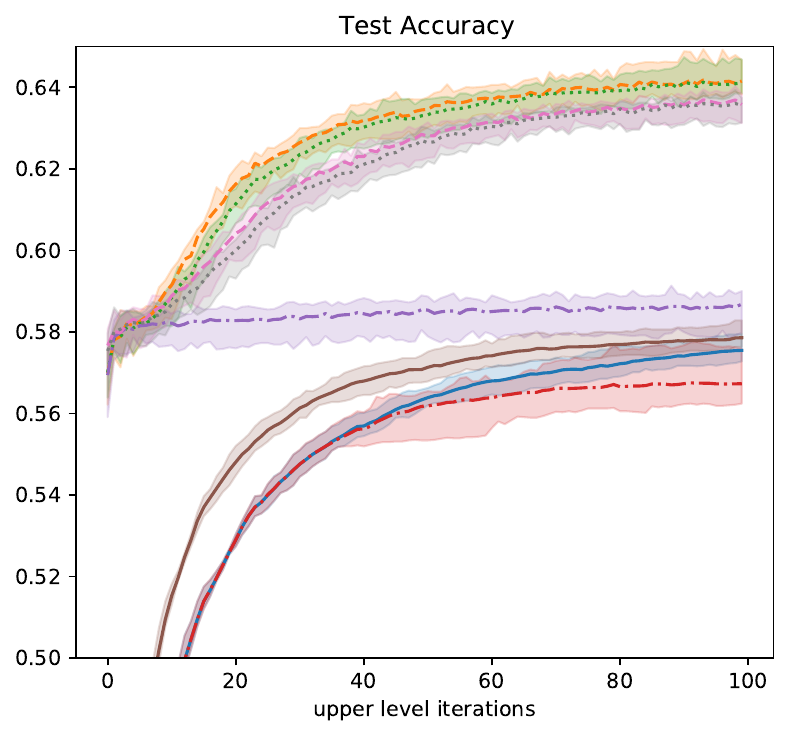}
    \caption{Performance metrics for multinomial logistic regression on twenty newsgroup.  All methods compute the hypergradient in $20$ epochs: methods labeled as $75\%/25\%$ compute the lower-level solution in $15$ epochs and the solution for the linear system in $5$, while the others solve both problems in $10$ epochs. The plots show  mean (solid line) and max-min (shaded region) over 5 runs varying both the train validation split and the mini-batch sampling of the stochastic algorithms. The starting point is the same for all methods and is set to $\lambda_0= 0$ as in \cite{grazzi2020iteration}.
    }
    \label{fig:news}
\end{figure*}

\begin{table*}[h]
\centering
\caption{Final performance metrics on the twenty newsgroup dataset, averaged over 5 trials. The metrics for the first three rows are obtained after 100 iterations of SGD on the upper-level objective.
The last row is the result for the conjugate gradient method obtained in (\cite{grazzi2020iteration}, Table~2) where they select the best  upper-level learning rate and perform 500 upper-level iterations. 
}\label{tb:news}
\vspace{+1mm}
\begin{tabular}[t]{lc|cc}
\toprule
\textbf{Method} & upper-level iter. & val. loss &   test acc. (\%) \\
\midrule
 \textit{Batch} $k=t=10$ & $100$ & 1.30 & 57.5 \\
\textit{Stoch dec.} $k=t=1131$ & $100$ & 0.92 & 64.1 \\
 \textit{Stoch const.} $k=t=1131$ & $100$ & 0.91 & 64.1 \\
 \midrule
 \textit{Batch} $k=t=10$ & $500$ & 0.93 & 63.7 \\
\bottomrule
\end{tabular}
\end{table*}

\vfill

\end{document}